\newcommand{\tablewhitespace}{\addlinespace[0.3em]}
\newtheorem{theorem}{Theorem}[section]
\newtheorem{proposition}[theorem]{Proposition}
\newtheorem{lemma}[theorem]{Lemma}
\newtheorem{corollary}[theorem]{Corollary}
\newtheorem{definition}[theorem]{Definition}
\newtheorem{assumption}[theorem]{Assumption}
\newtheorem{remark}[theorem]{Remark}
\newtheorem{condition}{Condition}[section]
\definecolor{tab10_blue}{rgb}{0.121, 0.466, 0.705}
\definecolor{tab10_orange}{rgb}{1.0,   0.498, 0.054}
\definecolor{tab10_green}{rgb}{0.172, 0.627, 0.172}
\definecolor{tab10_red}{rgb}{0.839, 0.152, 0.156}
\definecolor{tab10_purple}{rgb}{0.580, 0.403, 0.741}
\definecolor{tab10_brown}{rgb}{0.549, 0.337, 0.294}
\definecolor{tab10_pink}{rgb}{0.890, 0.466, 0.760}
\definecolor{tab10_gray}{rgb}{0.498, 0.498, 0.498}
\definecolor{tab10_olive}{rgb}{0.737, 0.741, 0.133}
\definecolor{tab10_cyan}{rgb}{0.090, 0.745, 0.811}
\newcommand{\speedup}[1]{{\color{gray}(\ifdim #1 pt > 0.3pt #1\else $< #1$\fi{}$\times$)}}
\definecolor{Gray}{gray}{0.85}
\newcommand{\blambda}{\bm{\lambda}}
\newcommand{\ba}{\bm{a}}
\newcommand{\bc}{\bm{c}}
\newcommand{\bx}{\bm{x}}
\newcommand{\by}{\bm{y}}
\newcommand{\bz}{\bm{z}}
\newcommand{\bu}{\bm{u}}
\newcommand{\bs}{\bm{s}}
\newcommand{\bq}{\bm{q}}
\newcommand{\rD}{\mathrm{D}}
\newcommand{\cT}{\mathcal{T}}
\newcommand{\E}{\mathbb{E}}
\numberwithin{equation}{section}
\DeclareMathOperator{\R}{\mathbb{R}}
\newcommand{\algname}{{\sc Scaff-PD}}
\begin{document}


\begin{center}

{\bf{ {\LARGE \algname}: \LARGE Communication Efficient 
Fair and  Robust \\\vspace{2.5mm} Federated Learning}}

\vspace*{.25in}
{\large{
\begin{tabular}{c}
Yaodong Yu$^{\diamond}$ \quad Sai Praneeth Karimireddy$^{\diamond}$  \quad 
Yi Ma$^{\diamond}$ \quad
Michael I. Jordan$^{\diamond, \dagger}$
\end{tabular}
}}

\vspace*{.15in}
\begin{tabular}{c}
Department of Electrical Engineering and Computer Sciences$^\diamond$ \vspace{1.0mm}\\
Department of Statistics$^\dagger$ \vspace{1.5mm}\\ 
University of California, Berkeley
\end{tabular}

\vspace*{.1in}

\begin{abstract}
\noindent
We present {\algname}, a fast and communication-efficient algorithm for distributionally robust federated learning. Our approach improves fairness by optimizing a family of distributionally robust objectives tailored to heterogeneous clients. We leverage the special structure of these objectives, and design an accelerated primal dual (APD) algorithm which uses bias corrected local steps (as in {\sc Scaffold}) to achieve significant gains in communication efficiency and convergence speed. We evaluate {\algname} on several benchmark datasets and demonstrate its effectiveness in improving fairness and robustness while maintaining competitive accuracy. Our results suggest that {\algname} is a promising approach for federated learning in resource-constrained and heterogeneous settings.
\end{abstract}

\end{center}

\section{Introduction}\label{sec:intro}
Federated learning is a popular approach for training machine learning models on decentralized data, where data privacy concerns or other constraints prevent centralized data aggregation~\citep{mcmahan2017communication,kairouz2019federated}. In federated learning, model updates are computed locally on each device (the \emph{client}) and then aggregated to train a global model at the center (the \emph{server}). This approach has gained traction due to its ability to leverage data from multiple sources while preserving privacy, security, and autonomy, and has the potential to make machine learning more participatory in a range of interesting problem domains~\citep{paml2020,jones2020nonrivalry,pentland2021building}.

\vspace{0.05in}
Federated learning is naturally most attractive when the participating clients have access to different data, leading to data heterogeneity~\citep{du2022flamby}. 
This heterogeneity can lead to significant fairness issues, where the performance of the global model can be biased towards the data distribution of some clients over others~\citep{dwork2012fairness,li2019fair,abay2020mitigating}. Heterogeneity can also hurt the generalization of the global model~\citep{quinonero2008dataset,mohri2019agnostic}. Specifically, if some clients have a disproportionate influence on the global model, the resulting model is neither fair nor will it generalize well to new clients. Such disparities are especially prevalent and detrimental in medical research, and have resulted in misdiagnosis and suboptimal treatment~\citep{graham2015disparities,albain2009racial,nana2021health}.

\vspace{0.05in}
To address these challenges, distributionally robust objectives (DRO) explicitly account for the heterogeneity across clients and seek to optimize performance under the worst-case data distribution across clients, rather than just the average performance~\citep{rahimian2019distributionally}. This approach can lead to more robust models that are less biased towards specific clients and more likely to generalize to new clients~\citep{mohri2019agnostic,duchi2023distributionally}. However, such robust objectives are significantly harder to optimize. Current algorithms have very slow convergence, potentially to the point of being impractical~\citep{ro2021communication}. This leads to the central question of our work:
\begin{center}
\parbox{0.9\textwidth}{\emph{Can we design federated optimization techniques for the DRO problem with convergence rates that match their average objective counterparts?}}
\vspace{-0.1in}
\end{center}

\begin{figure}[t]
    \vspace{-0.12in}
     \centering
     \includegraphics[width=0.9\textwidth]{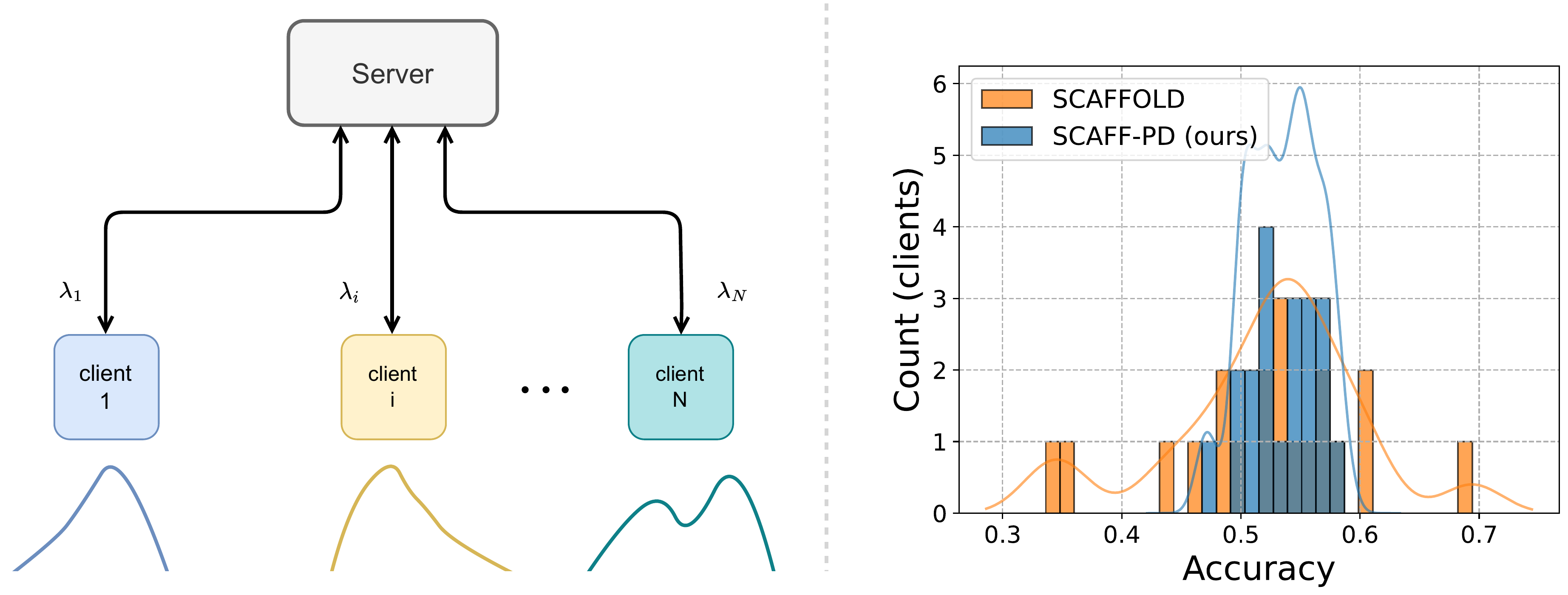}
    \caption{
    (\textbf{left}) In federated learning, the data distribution across individual clients differ significantly from one another. 
    (\textbf{right}) When directly applying SOTA federated optimization algorithm ({\sc Scaffold}), the learned global model is biased toward certain clients, leading to noticeably worse performance when applied to a subset of participating clients. Our proposed algorithm---{\algname}---largely mitigates this bias via learning a distributionally robust global model, which significantly enhances the performance of the most challenging subset of clients, specifically the worst 20\%. 
    }
    \label{fig:fig1}
    \vspace{-0.1in}
\end{figure}

\subsection{Our Contributions}
We summarize our contributions below.
\vspace{-0.1in}
\paragraph{Framework.} We present a general formulation for the cross-silo federated DRO problem:
\begin{equation}\label{eq:dro-def}
    \min_{\bx} \max_{\blambda \in \Lambda} \left\{ F(\bx, \blambda) := \sum_{i=1}^N\lambda_i\cdot f_{i}(\bx) - \psi(\blambda)\right\},
\end{equation}
where $f_i(\bx)$ is the loss suffered by client $i$. Instead of minimizing a simple average of the client losses, equation \eqref{eq:dro-def} incorporates weights using $\blambda \in \R^N$. The choice of $\blambda$ is made in a \emph{worst-case} manner, while being subject to the constraint set $\Lambda$ and regularized with $\psi(\blambda)$. As we will show, this formulation is a generalization of several specific fair objectives that have been proposed in the federated learning literature~\citep{mohri2019agnostic,li2019fair,li2020tilted,zhang2022proportional,pillutla2021federated}.

\vspace{-0.1in}
\paragraph{Algorithm.} The objective defined in equation \eqref{eq:dro-def} is a min-max problem and can be directly optimized using well-established algorithms such as gradient descent ascent (GDA). However, such approaches ignore the unique structure of our formulation, particularly the linearity of the interaction term between $\blambda$ and $\bx$. We leverage this to design an accelerated primal-dual (APD) algorithm~\citep{hamedani2021primal}. Additionally, we propose to use control variates (\`a~la {\sc Scaffold}) to correct the bias caused by local steps, making optimal use of local client computation \citep{karimireddy2020scaffold}. Our proposed method, {\algname}, combines these ideas to provide an efficient and practical algorithm, compatible with secure aggregation.

\vspace{-0.1in}
\paragraph{Convergence.} We provide strong convergence guarantees for {\algname} when ${f_i}$ are strongly convex. If $\psi(\blambda)$ is a generic convex function, we achieve an {accelerated} $O\left( {1}/{T^2} \right)$ rate of convergence. Furthermore, if $\psi$ is strongly convex, {\algname} converges linearly at a rate of $\exp\left(-O(T)\right)$. This represents the first federated approach for the DRO problem that achieves \emph{linear} convergence, let alone an \emph{accelerated} rate.
Finally, we extend our analysis to the stochastic setting, where we obtain an optimal rate of $O\left( {1}/{T} \right)$, and improve over the previous $O(1/\sqrt{T})$ rate. Thus, we show that the sample complexity as well as the communication complexity for the DRO problem matches that of the easier average objective.

\paragraph{Practical Evaluation.} We conducted comprehensive simulations and demonstrate accelerated convergence, robustness to data heterogeneity, and the ability to leverage local computations.

For deep learning models, we avail ourselves of a two-stage Train-Convexify-Train method~\citep{yu2022tct}. First, we train a deep learning model using conventional federated learning methods, such as FedAvg. Then, we apply {\algname} to fine tune a convex approximation. To evaluate our algorithms, we use several real-world datasets with various distributionally robust objectives, and we study the trade-off between the mean and tail accuracy of these methods.

\section{Related Work}\label{sec:relatedwork}

\vspace{-0.05in}
\paragraph{Cross-silo FL.} Federated learning (FL) is a distributed machine learning paradigm that enables model training without exchanging raw data. 
In cross-silo FL (which is our focus), valuable data is split across different organizations, and each organization is either protected by privacy regulations or unwilling to share their raw data. Such organizations are referred to as ``data islands'' and can be found in hospital networks, financial institutions, autonomous-vehicle companies, etc. Thus, cross-silo FL involves a few highly reliable clients who potentially have extremely diverse data.

\vspace{0.05in}
The most widely used federated optimization algorithm is Federated Averaging (FedAvg) \citep{mcmahan2017communication}, which averages the local model updates to produce a global model. However, FedAvg is known to suffer from poor convergence when the local datasets are heterogeneous ~\cite[etc.]{hsieh2020non,li2020federated,karimireddy2020scaffold,reddi2021adaptive,wang2021field,du2022flamby}. Scaffold~\citep{karimireddy2020scaffold} corrects for this heterogeneity, leading to more accurate updates and faster convergence~\citep{mishchenko2022proxskip,li2022partial,yu2022tct}. However, all of these methods are restricted to optimizing the average of the client objectives.

\vspace{-0.1in}
\paragraph{Distributionally Robust Optimization.} DRO is a framework for optimization under uncertainty, where the goal is to optimize the worst-case performance over a set of probability distributions. See \citet{rahimian2019distributionally} for a review and its history in risk management, economics, and finance. Fast centralized optimization methods have been developed when uncertainity is represented by $f$-divergences \citep{wiesemann2014distributionally,namkoong2016stochastic,levy2020large} or Wasserstein distances \citep{mohajerin2018data,gao2022distributionally}. The former approach accounts for changing proportions of subpopulations, relating it to notions of subpopulation fairness~\citep{duchi2023distributionally,santurkar2020breeds,piratla2021focus,martinez2021blind}. Our work also implicitly focuses on $f$-divergences. \citet{deng2020distributionally} and \citet{zecchin2022communication} adapt the gradient-descent-ascent (GDA) algorithm to solve the federated and decentralized DRO problems respectively. However, these methods inherit the slowness of both the GDA and FedAvg algorithms, making their performance trail the state of the art for the average objective~\citep{mishchenko2022proxskip}.

\vspace{-0.1in}
\paragraph{Fairness in FL.} While fairness is an extremely multi-faceted concept, here we are concerned with the distribution of model performance across clients. \citet{mohri2019agnostic} noted that minimizing the average of the client losses may lead to unfair distribution of errors, and instead proposed an \emph{agnostic FL} (AFL) framework which minimizes a worst-case mixture of the client losses. Alternatives and extensions to AFL have also been  proposed subsequently~\citet{li2019fair,li2020tilted,pillutla2021federated}. Again, the convergence of optimization methods for these losses (when analyzed) is significantly slower than their centralized counterparts.

\vspace{0.05in}
While all of these works demand equitable performance across all clients, others propose to scale a client's accuracy in proportion to their contribution~\citep{sim2020collaborative,blum2021one,xu2021gradient,zhang2022proportional,karimireddy2022mechanisms}. These methods are motivated by game-theoretic considerations to incentivize clients and improve the quality of the data contributions. Our framework \eqref{eq:dro-def} can be applied to such mechanisms by an appropriate choice of $\{f_i\}, \Lambda, \text{and } \psi$. For example, \citet{zhang2022proportional} show how to set these to recover the Nash bargaining solution~\citep{nash1950bargaining}. Thus, our work can be seen as a practical optimization algorithm to implement many of the mechanisms studied in FL.

\vspace{0.05in}
Finally, personalization---serving a separate model to each client---has also been proposed as a method to improve the distribution of client performance~\citep{yu2020salvaging}. However, personalized models are sometimes not feasible either due to regulations~\citep{vokinger2021continual} or because the client may not have additional data. Further, personalization does not remove the differences in performance (though it does reduce it)~\citep{yu2020salvaging}, nor does it solve the game-theoretic considerations described above. Extending our work to this setting is an important question we leave for future work.

\vspace{-0.05in}
\section{Problem Setup}
We consider the min-max optimization problem  in the context of federated learning, where the objective function, defined in Eq.~\eqref{eq:dro-def}, is distributed among $N$ clients. Each $f_i: \R^{d}\rightarrow\R$ is the local function on the $i$-th client, where $f_i(\bx) = \E_{\xi \sim \mathcal{D}_i}[f(\bx, \xi)]$ and $\mathcal{D}_i$ is the data distribution of the $i$-th client. For example, we can define $\mathcal{D}_i$ as the uniform distribution over the training dataset present on the $i$-th client.

\vspace{0.05in}
\textbf{Notation.} We use the notation $\bx^{r} \in \R^{d}$ to denote the global iterate at the $r$-th round, and use $\bu_{i,j}^{r} \in \R^{d}$ to denote the local iterate at the $j$-th step on the $i$-th client (at the $r$-th round).
We apply $\blambda = [\lambda_1, \dots, \lambda_N]^{\top} \in \R^{N}$ to denote the weight vector, where $\lambda_i$ is the weight for client $i$. 
We let $[N]$ denote the set $\{1, \dots, N\}$. 
To facilitate clarity and presentation, we let 
\begin{equation}
\Phi(\bx, \blambda)=\sum_{i=1}^N\lambda_i\cdot f_{i}(\bx).
\end{equation} 
iFor local gradients, we let $g_i(\bu_{i,j-1})$ denote the stochastic gradient of $f_i$ at iterate $\bu_{i,j-1}$:
\begin{equation}\label{eq:local-gradient-def}
    g_i(\bu^{r}_{i,j-1}) = \nabla f_i (\bu^{r}_{i,j-1}, \xi^{r}_{i, j-1}).
\end{equation}

\textbf{Choosing $\psi$ and $\Lambda$.}
We let $\psi: \R^{N} \rightarrow \R$ denote the regularization on the weight vector $\blambda$. 
The $\chi^{2}$ penalty~\citep{levy2020large} involves setting 
\begin{equation}\label{eq:psi-function-def}
    \psi(\blambda) = \mathrm{D}_{\chi^{2}}(\blambda) = \frac{\rho}{2N}\sum_{i=1}^{N}(N\lambda_i - 1)^{2}\text{, and } \Lambda = \Delta^N\,.
\end{equation}
When regularization is set to zero with $\rho =0$, the DRO formulation \eqref{eq:dro-def} recovers the agnostic federated learning (AFL) of~\citet{mohri2019agnostic}. A non-zero value of $\rho$ can be used to trade off the worst-case loss against the average loss. In particular, setting $\rho \rightarrow \infty$ recovers the standard average FL objective. While we will primarily focus on~\eqref{eq:psi-function-def} in this work, other choices are also possible. The DRO objective becomes the $\alpha$-Conditional Value at Risk (CVaR) loss~\citep{duchi2021learning}, also known as super-quantile loss~\citep{pillutla2021federated} by setting
\[
\psi(\blambda) =0\text{, and } \Lambda = \{\blambda \in \Delta, \lambda_i \leq 1/(\alpha N)\}\,.
\]
Finally, we can recover the Q-FL loss of~\citet{li2019fair} by setting 
\[
\psi(\blambda) = \|\blambda\|^{1 + \tfrac{1}{q}}\text{, and } \Lambda = \R^N \,.
\]

\vspace{-0.1in}
\textbf{Definitions and assumptions.} 
In the convergence analysis of our proposed algorithms, we rely on the following definitions and assumptions regarding the local functions and the regularization term $\psi$:
\begin{definition}[Smoothness]\label{definition:f-smooth}
    $f(\cdot)$ is convex and differentiable, and there exists $L \geq 0$ such that for any $\bx_1, \bx_2$ in the domain of $f_i(\cdot)$, 
    \begin{equation}
        \|\nabla f_i(\bx_1) - \nabla f_i(\bx_2)\| \leq L\|\bx_1 - \bx_2\|.
    \end{equation}
\end{definition}

\vspace{-0.1in}
\begin{definition}[Strong convexity]\label{definition:f-strong-convex}
    $f(\cdot)$ is $\mu$-strongly convex, i.e., 
    \begin{equation}
        f(\bx_2) \geq f(\bx_1) + \langle \nabla f(\bx_1), \bx_2 - \bx_1 \rangle + \frac{\mu}{2}\|\bx_2 - \bx_1\|^{2}.
    \end{equation}
\end{definition}

\begin{assumption}[Smoothness w.r.t. $\Phi$]\label{assumption:L_yx-smooth}
    $\Phi(\bx, \cdot)$ is concave and differentiable, and there exists $L_{\blambda\bx} \geq 0$ such that for any $\bx_1, \bx_2$ in the domain of $\Phi(\cdot, \blambda)$ and $\blambda_1, \blambda_2$ in the domain of $\Phi(\bx, \cdot)$, 
    \begin{equation}
        \|\nabla_{\blambda}\Phi(\bx_1, \blambda_1) - \nabla_{\blambda}\Phi(\bx_2, \blambda_2)\| \leq L_{\blambda\bx}\|\bx_1 - \bx_2\|.
    \end{equation}
\end{assumption}

\begin{assumption}[Bounded noise]\label{assumption:local-noise}
There exist $\zeta \geq 0$ such that for all $i\in[N]$, the local gradient $g_i(\bx)$ defined in Eq.~\eqref{eq:local-gradient-def} satisfies 
    \begin{equation}
        \E\left[\|g_i(\bx) - \nabla f_i(\bx)\|^{2}\right] \leq \zeta^{2}, \quad \E\left[g_i(\bx)\right] = \nabla f_i(\bx).
    \end{equation}
\end{assumption}

\section{{\algname}: Accelerated Primal-Dual Federated Algorithm with Bias Corrected Local Steps }\label{sec:alg}

In this section, we describe our proposed algorithm {\algname} (Stochastic Controlled Averaging with Primal-Dual updates) for solving the 
federated DRO problem \eqref{eq:dro-def}. 
We present the pseudo-code for {\algname} in Algorithm~\ref{Algorithm:ScaffoldAPD} and algorithm used  for local updates in Algorithm~\ref{Algorithm:local-ScaffoldAPD}.

\vspace{-0.1in}
\begin{algorithm}[ht]
\caption{{\algname}($\bx^{0}, \blambda^{0}$)}\label{Algorithm:ScaffoldAPD}
\begin{algorithmic}
\FOR{$r = 1, 2, \ldots, R$}
    \STATE { \texttt{\# (1).\,Collect gradient and loss vector}}
    \STATE Set parameters $\{\tau_{r}, \sigma_{r},  \gamma_{r}, \theta_{r}\}$
    \FOR{$i = 1, 2, \ldots, N$}
    \STATE $L_i^{r} = f_i(\bx^r)$, \,\,$\bc_i^{r} = g_i(\bx^r)$, \,\,\text{Communicate $(L_i^{r}, \bc_i^{r})$ to center}
    \ENDFOR
    \STATE { \texttt{\# (2).\,Update dual $\blambda$}}
    \STATE {
    {
    \begin{small}
    \begin{equation}\label{eq:alg-update-dual}
    \begin{aligned}
    \bs^r &= (1+\theta_{r}) \nabla_{\blambda}\Phi(\bx^{r}, \blambda^{r})  - \theta_{r}\nabla_{\blambda}\Phi(\bx^{r-1}, \blambda^{r-1})\\
    \blambda^{r+1} &= \text{argmin}_{\blambda \in \Lambda}\left\{\psi(\blambda)-\langle\bs^{r}, \blambda\rangle + \frac{1}{\sigma_r}{\rD}(\blambda, \blambda^{r})\right\}
    \end{aligned}
    \end{equation}
    \end{small}
    }
    }
    \STATE {\texttt{\# (3).\,Update primal $\bx$}}
    \STATE $\bc^r = \sum_{i=1}^{N}{\lambda_i^{r+1}}\bc_i^{r}$, \,\, \text{Communicate $\bc^r$ to each client}
    \FOR{$i = 1, 2, \ldots, N$}
    \STATE {$\Delta\bu_i^r \leftarrow$ {\sc Local-update}($\bx^{r}, \bc_i^{r}, \bc^{r}$)}, \,\, \text{Communicate $\Delta\bu_{i}^{r}$ to the center}
    \ENDFOR
    \STATE {Aggregate updates from different client via the weight vector $\blambda^{r+1}$}
    \begin{small}
    \begin{equation}\label{eq:alg-update-primal}
    \begin{aligned}
    \bx^{r+1} = \text{argmin}_{\bx}\Bigg\{\big\langle \sum_{i=1}^{N}\lambda_i^{r+1}\Delta\bu^{r}_i, \bx\big\rangle + \frac{1}{\tau_r}\mathrm{D}(\bx, \bx^{r})\Bigg\}
    \end{aligned}
    \end{equation}
    \end{small}
\ENDFOR
\STATE \textbf{Return:} $(\bx^{R+1}$, $\blambda^{R+1})$
\end{algorithmic}
\end{algorithm}

\vspace{0.1in}
As described in Algorithm~\ref{Algorithm:ScaffoldAPD}, {\algname} comprises three main steps that are executed at each communication round $r$: 
(1). Collecting  loss vector $[L_1^{r}, \dots, L_N^{r}]^{\top}$ and gradients $\{g_i(\bx^{r})\}_{i=1}^{N}$ (for bias correction); 
(2). Update to the dual variable  by Eq.~\eqref{eq:alg-update-dual}; 
(3). Local updates to each client model, and aggregating the updates by using the updated dual variable, i.e., Eq.~\eqref{eq:alg-update-dual}. We provide the pseudo-code for local updates in Algorithm~\ref{Algorithm:local-ScaffoldAPD}.

\begin{algorithm}[ht]
\caption{{\sc Local-update}$(\bx, \bc_i, \bc)$}\label{Algorithm:local-ScaffoldAPD}
\begin{algorithmic}
\STATE \textbf{Input:} optimization parameters $(\eta_\ell, J)$, model parameters $(\bc_i, \bc$, $\bx)$
    \STATE $\bu_{i, 0} = {\bx}$
    \FOR{$j = 1, 2, \ldots, J$}
    \STATE ${\bu_{i, j} = \bu_{i, j-1} -  \eta_{\ell}\cdot\left(g_i(\bu_{i, j-1})   - \bc_i + \bc \right)} $
    \ENDFOR
    \STATE $\Delta\bu_{i} = (\bx - \bu_{i, J})/(\eta_{\ell}J)$
\STATE \textbf{Return:} $\Delta\bu_{i}$
\end{algorithmic}
\end{algorithm}

\vspace{0.05in}
\textbf{Extrapolated Dual Update.}
Based on the computed loss vector $\nabla_{\blambda}\Phi(\bx^{r}, \blambda^{r}) = [L_1^{r}, \dots, L_N^{r}]^{\top}$ in the first step, we update the weight vector $\blambda$. Importantly, when $\theta_r > 0$, we use both the dual gradient from the current 
round ($\nabla_{\blambda}\Phi(\bx^{r}, \blambda^{r})$) as well as the past round ($\nabla_{\blambda}\Phi(\bx^{r-1}, \blambda^{r-1})$) to obtain the extrapolated gradient $\bs^{r}$. The gradient extrapolation step is widely used in  primal-dual hybrid gradient (PDHG) methods~\citep{chambolle2016ergodic} for solving convex-concave saddle-point problems, and it provides the key component in our algorithm for achieving acceleration. 
The extrapolation step used in Eq.~\eqref{eq:alg-update-dual} is to Nesterov’s acceleration~\citep{nesterov2003introductory}, which can lead to faster convergence rate and has been widely utilized for achieving acceleration in solving various optimization problems.~\citep{chambolle2011first, chambolle2016ergodic, zhang2015stochastic, hamedani2021primal}. 

\vspace{0.05in}
\textbf{Local Steps and Control Variates $\bc_i$.} 
Supposing that communication is not a limiting factor, each client can compute its local gradient and transmit it to the server without any local steps. In this case, the update to the primal variable $\bx$ becomes
\vspace{-0.1in}
\begin{equation}\label{eq:update-x-central-case}
    \Delta \bu_i^{r} = g_i(\bx^{r}), \quad \text{argmin}_{\bx}\Big\{\langle \sum_{i=1}^{N}\lambda_i^{r+1}g_i(\bx^{r}), \bx\rangle + \frac{1}{\tau_r}\mathrm{D}(\bx, \bx^{r})\Big\}.
\end{equation}
This update performs the primal update with the unbiased gradient $\nabla_{\bx}F(\bx^{r}, \blambda^{r+1})$, which is equivalent to the standard primal update in primal-dual-based algorithms~\citep{chambolle2016ergodic,hamedani2021primal,zhang2022sapd}. 
However, such an update does not effectively utilize the local computational resources available on each client. Hence, we would like to perform multiple local update steps. The catch is that performing multiple local steps is known to lead to biased updates and ``client-drift''~\citep{karimireddy2020scaffold,woodworth2020minibatch,wang2020tackling}. We explicitly correct for this bias using control variates $\{\bc_{i}\}_{i\in [N]}$ similar to {\sc Scaffold}. 
As we will demonstrate in the subsequent theoretical analysis, this correction allows {\algname} to converge to the saddle-point solution of the DRO problem regardless of the data heterogeneity.

\vspace{0.05in}
While we use local updates on the primal variable, we do not perform any on the dual variable. This is unlike general federated min-max optimization algorithms~\citep{hou2021efficient, beznosikov2022decentralized}.
This design aligns well with the federated DRO formulation  since it is impractical for each client to update the weight vector at each local step due to their lack of knowledge regarding the loss values of other clients. 
The aggregation of {\algname} on the server resembles federated algorithms used for solving minimization problems, with the key difference being the utilization of the updated weight vector for primal aggregation.

\section{Theoretical Analysis}\label{sec:theory}
We now present the convergence results for {\algname} in solving the min-max optimization problem described in Eq.~\eqref{eq:dro-def}. 
Firstly, in Section~\ref{sec:theory-sc-c}, we introduce the results for the strongly-convex-concave setting. Subsequently, in Section~\ref{sec:theory-sc-sc}, we present the results for the strongly-convex-convex setting. 
\vspace{-0.1in}
\subsection{Strongly-convex-concave Setting}\label{sec:theory-sc-c}
We first introduce how to choice the parameters for {\algname} in when $\psi$ is convex and $\{f_i\}_{i\in[N]}$ are strongly convex in Condition~\ref{condition:stepsize-scc}.

\begin{condition}\label{condition:stepsize-scc}
The parameters of Algorithm~\ref{Algorithm:ScaffoldAPD} are defined as
\begin{equation}
\sigma_{-1} = \gamma_0 \bar{\tau},\quad \sigma_r = \gamma_r \tau_r, \quad
\theta_r = \sigma_{r-1}/\sigma_r,\quad
\gamma_{r+1} = \gamma_r (1 + \mu_{\bx}\tau_{r}).
\end{equation}
\end{condition}
\vspace{-0.05in}
Next we present our convergence results in this setting.

\begin{theorem}\label{thm:main-sc-c}
Supppose $\{f_i\}_{i\in[N]}$ are $\mu_{\bx}$-strongly convex. If  Assumption~\ref{assumption:L_yx-smooth} and Assumption\ref{assumption:local-noise} hold,  and we let the parameters $\{\tau_{r}, \sigma_{r},  \gamma_{r}, \theta_{r}\}$ of Algorithm~\ref{Algorithm:ScaffoldAPD} satisfy Condition~\ref{condition:stepsize-scc}, then the $R$-th iterate $(\bx^{R}, \blambda^{R})$ satisfies
\begin{equation}
    \E\left[\|\bx^{R} - \bx^{\star}\|^{2} \right] 
    \leq \frac{C_1}{R^{2}}\left[\|\bx^{\star} - \bx^{0}\|^{2} + \| \blambda^{0} - \blambda^{\star}\|^{2}\right] + \frac{C_2}{R}\zeta^{2},
\end{equation}
where $C_1, C_2 \geq 0$ are non-negative constants.
\end{theorem}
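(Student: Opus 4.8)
The plan is to treat \algname\ as a federated, \emph{inexact} instance of the accelerated primal--dual (APD) scheme of \citet{hamedani2021primal}. In the idealized regime of full gradients and a single local step the primal step would move along the exact partial gradient $\nabla_{\bx}\Phi(\bx^{r},\blambda^{r+1})=\sum_{i}\lambda_i^{r+1}\nabla f_i(\bx^{r})$ (cf.~\eqref{eq:update-x-central-case}); instead \algname\ moves along the aggregated, bias-corrected direction $\bm{d}^{r}:=\sum_{i=1}^{N}\lambda_i^{r+1}\Delta\bu_i^{r}$, so I would run the APD analysis while carrying the error $\bm{e}^{r}:=\bm{d}^{r}-\nabla_{\bx}\Phi(\bx^{r},\blambda^{r+1})$ as an extra perturbation. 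Note that $\nabla_{\blambda}\Phi(\bx^{r},\blambda^{r})=[L_1^{r},\dots,L_N^{r}]^{\top}$ depends on $\bx^{r}$ alone, so the extrapolated direction $\bs^{r}$ is \emph{exactly} an over-relaxation of the computed loss vector and introduces no inexactness on the dual side. The argument then proceeds in three stages, with $\rD$ taken to be a Bregman divergence comparable to $\tfrac12\|\cdot-\cdot\|^{2}$.

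\textbf{Stage 1: one-round inequality.} Since $\blambda^{r+1}$ and $\bx^{r+1}$ are exact minimizers of the strongly convex proximal subproblems \eqref{eq:alg-update-dual}--\eqref{eq:alg-update-primal}, the three-point property of $\rD$ gives, for every $\blambda\in\Lambda$ and every $\bx$,
\begin{align*}
\psi(\blambda^{r+1})-\langle\bs^{r},\blambda^{r+1}-\blambda\rangle&\le\psi(\blambda)+\tfrac{1}{\sigma_r}\big[\rD(\blambda,\blambda^{r})-\rD(\blambda,\blambda^{r+1})-\rD(\blambda^{r+1},\blambda^{r})\big],\\
\langle\bm{d}^{r},\bx^{r+1}-\bx\rangle&\le\tfrac{1}{\tau_r}\big[\rD(\bx,\bx^{r})-\rD(\bx,\bx^{r+1})-\rD(\bx^{r+1},\bx^{r})\big].
\end{align*}
Adding these, using $\mu_{\bx}$-strong convexity of each $f_i$ (hence of $\Phi(\cdot,\blambda^{r+1})$, contributing an extra $\tfrac{\mu_{\bx}}{2}\|\bx^{r+1}-\bx\|^{2}$), substituting $\bm{d}^{r}=\nabla_{\bx}\Phi(\bx^{r},\blambda^{r+1})+\bm{e}^{r}$ and $\bs^{r}=(1+\theta_r)\nabla_{\blambda}\Phi(\bx^{r})-\theta_r\nabla_{\blambda}\Phi(\bx^{r-1})$, and specializing to $(\bx,\blambda)=(\bx^{\star},\blambda^{\star})$ so that the saddle inequalities $\Phi(\bx^{\star},\blambda)-\psi(\blambda)\le\Phi(\bx^{\star},\blambda^{\star})-\psi(\blambda^{\star})\le\Phi(\bx,\blambda^{\star})-\psi(\blambda^{\star})$ can be applied, I would carry out the standard PDHG rearrangement: summed over rounds, the extrapolation pieces telescope once $\theta_r\sigma_r=\sigma_{r-1}$ (part of Condition~\ref{condition:stepsize-scc}) is used, leaving the lone remainder $\sigma_{r-1}\langle\nabla_{\blambda}\Phi(\bx^{r})-\nabla_{\blambda}\Phi(\bx^{r-1}),\blambda^{r+1}-\blambda^{r}\rangle$, which Assumption~\ref{assumption:L_yx-smooth} bounds by $\sigma_{r-1}L_{\blambda\bx}\|\bx^{r}-\bx^{r-1}\|\,\|\blambda^{r+1}-\blambda^{r}\|$ and Young's inequality absorbs into the negative terms $\tfrac{1}{\tau_{r-1}}\rD(\bx^{r},\bx^{r-1})$ and $\tfrac{1}{\sigma_r}\rD(\blambda^{r+1},\blambda^{r})$ already present (whenever $\tau_r\sigma_rL_{\blambda\bx}^{2}\lesssim 1$). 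Multiplying the per-round inequality by $\sigma_r=\gamma_r\tau_r$ and invoking $\gamma_{r+1}=\gamma_r(1+\mu_{\bx}\tau_r)$ makes the primal and dual distance terms telescope exactly, yielding a recursion $V_{r+1}\le V_r+\sigma_r\langle\bm{e}^{r},\bx^{\star}-\bx^{r+1}\rangle-(\text{nonnegative terms})$ with potential $V_r$ comparable to $\gamma_r\rD(\bx^{\star},\bx^{r})+\rD(\blambda^{\star},\blambda^{r})$; the discarded nonnegative terms are the accumulated duality gap and leftover Bregman terms.

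\textbf{Stage 2: controlling $\bm{e}^{r}$.} After the Young split $\sigma_r\langle\bm{e}^{r},\bx^{\star}-\bx^{r+1}\rangle\le\tfrac{\sigma_r\tau_r}{2}\|\bm{e}^{r}\|^{2}+\tfrac{\sigma_r}{2\tau_r}\|\bx^{r+1}-\bx^{\star}\|^{2}$, the second term being absorbed into the $\mu_{\bx}$-strong-convexity slack inside $V_{r+1}$, it remains to bound $\E\|\bm{e}^{r}\|^{2}$. This is exactly the {\sc Scaffold} client-drift estimate: the control variates force the corrected local gradient at $\bu_{i,0}^{r}=\bx^{r}$ to equal, in expectation, the common direction $\bc^{r}=\nabla_{\bx}\Phi(\bx^{r},\blambda^{r+1})$, so by $L$-smoothness of each $f_i$ (Definition~\ref{definition:f-smooth}) the local iterates stay within $O(\eta_{\ell}J)$ of $\bx^{r}$ and do not drift apart, giving, whenever $\eta_{\ell}LJ$ is below an absolute constant,
\[
\E\|\bm{e}^{r}\|^{2}\le O(\eta_{\ell}^{2}L^{2}J^{2})\Big(L^{2}\,\E\|\bx^{r}-\bx^{\star}\|^{2}+\sup_{\blambda\in\Lambda}\|\nabla_{\bx}\Phi(\bx^{\star},\blambda)\|^{2}\Big)+O(\zeta^{2}),
\]
the last term being the SGD variance of Assumption~\ref{assumption:local-noise}, damped by averaging over the $J$ local steps and $N$ clients. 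The main obstacle is making this estimate cohabit with the time-varying APD weights: the drift bound feeds terms proportional to $\E\|\bx^{r}-\bx^{\star}\|^{2}$ (and to $\sup_{\blambda\in\Lambda}\|\nabla_{\bx}\Phi(\bx^{\star},\blambda)\|^{2}$, which must itself be bounded uniformly using boundedness of $\Lambda$ together with smoothness) back into the recursion, and one must check that, after the $\sigma_r\tau_r$ weighting, these are dominated by the $\gamma_r\rD(\bx^{\star},\bx^{r})$ term available in $V_r$ uniformly in $r$. This pins $\eta_{\ell}$ down as a joint function of $\mu_{\bx}, L, J$ and forces one to verify that the $\mu_{\bx}$-slack was not already exhausted in Stage~1 on the $L_{\blambda\bx}$ coupling term; once $\eta_{\ell}$ is chosen accordingly, the net residual is a clean $O(\sigma_r\tau_r\zeta^{2})$ per round.

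\textbf{Stage 3: telescoping and the rate.} Summing $V_{r+1}\le V_r+O(\sigma_r\tau_r\zeta^{2})-(\text{nonnegative})$ over $r=0,\dots,R-1$ and discarding the nonnegative terms gives $V_R\le V_0+O\big(\zeta^{2}\sum_{r<R}\sigma_r\tau_r\big)$, where $V_0$ reduces to $\gamma_0\rD(\bx^{\star},\bx^{0})+\rD(\blambda^{\star},\blambda^{0})$ (taking $\bx^{-1}=\bx^{0}$, $\blambda^{-1}=\blambda^{0}$ so the first extrapolation remainder vanishes, and recalling $\sigma_{-1}=\gamma_0\bar\tau$). Under Condition~\ref{condition:stepsize-scc} with $\tau_r=\Theta(1/(\mu_{\bx}r))$ (concretely $\tau_r\approx 2/(\mu_{\bx}r)$), one has $\gamma_r=\Theta(r^{2})$ and $\sigma_r=\gamma_r\tau_r=\Theta(r)$, hence $\sum_{r<R}\sigma_r\tau_r=\Theta(R)$; using $V_R\ge\gamma_R\rD(\bx^{\star},\bx^{R})$, dividing by $\gamma_R=\Theta(R^{2})$, and converting between $\rD$-terms and squared norms via strong convexity and smoothness of the Bregman reference function (for instance $\rD(\bx,\by)=\tfrac12\|\bx-\by\|^{2}$) yields
\[
\E\|\bx^{R}-\bx^{\star}\|^{2}\le\frac{C_1}{R^{2}}\big[\|\bx^{\star}-\bx^{0}\|^{2}+\|\blambda^{0}-\blambda^{\star}\|^{2}\big]+\frac{C_2}{R}\zeta^{2},
\]
with $C_1,C_2\ge 0$ depending only on $\mu_{\bx},L,L_{\blambda\bx},\eta_{\ell},J,\bar\tau$, which is the claim.
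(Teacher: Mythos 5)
Your proposal follows the same APD skeleton as the paper's proof: three-point inequalities for the two proximal steps, telescoping of the extrapolation remainder via $\theta_r\sigma_r=\sigma_{r-1}$, the $L_{\blambda\bx}$/Young absorption into the Bregman terms, the $\gamma_{r+1}=\gamma_r(1+\mu_{\bx}\tau_r)$ weighting with $\gamma_r=\Theta(r^2)$, $\sigma_r\tau_r=\mathrm{const}$, and $\sum_r\sigma_r\tau_r=\Theta(R)$ for the variance term — so your Stages 1 and 3 track Lemmas~\ref{lemma:sc-c-stochastic}, \ref{lemma:appendix-scc-stepsize-1}, \ref{lemma:appendix-scc-stepsize-2} and Theorem~\ref{thm:appendix-sc-c} closely. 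The genuine divergence is Stage 2. You fold the local steps into an inexact-gradient error $\bm{e}^r$ and bound it by
$O(\eta_\ell^2L^2J^2)\bigl(L^2\E\|\bx^r-\bx^\star\|^2+\sup_{\blambda\in\Lambda}\|\nabla_{\bx}\Phi(\bx^\star,\blambda)\|^2\bigr)+O(\zeta^2)$,
then feed this back into the potential. The paper never compares to $\bx^\star$ at this stage: it applies perturbed strong convexity (Lemma~\ref{lemma:perturb-strong-convexity}) directly at the local iterates, converting the bias into drift terms $L_{\bx\bx}\mathcal{E}_r$; Lemma~\ref{Lemma:drift-lemma-deterministic} bounds $\mathcal{E}_r$ by $\tfrac{12\tau_r^2}{\eta_g^2}\E\|\nabla_{\bx}\Phi(\bx^r,\blambda^{r+1})\|^2+O(\zeta^2)$, and Lemma~\ref{Lemma:x-update-lowerbound-drift} shows this gradient-norm term is dominated by the negative $-\tfrac{1}{8\tau_r}\E\|\bx^{r+1}-\bx^r\|^2$ already present, so the drift cancels outright once $\tau_rL_{\bx\bx}\lesssim 1$. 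That cancellation buys two things your route does not: it needs no uniform bound $G^2:=\sup_{\blambda\in\Lambda}\|\nabla_{\bx}\Phi(\bx^\star,\blambda)\|^2$ (hence no boundedness of $\Lambda$ beyond what $\chi$ already encodes), and it introduces no additive $O(G^2/R^2)$ term, whereas your bound leaves one (of the same order but not of the stated form $C_1[\|\bx^0-\bx^\star\|^2+\|\blambda^0-\blambda^\star\|^2]/R^2$). Your route is still viable, but the absorption you only flag — that the weighted feedback $\sigma_r\tau_r\cdot O(\tau_r^2L^4/\eta_g^2)\,\E\|\bx^r-\bx^\star\|^2$, which decays like $r^{-2}$, sits inside the strong-convexity margin $\Theta(r)\mu_{\bx}\E\|\cdot\|^2$ of $t_rZ_{r+1}$ uniformly in $r$ — is exactly where the step-size constants are pinned down and must be carried out rather than asserted; the paper's alternative sidesteps this bookkeeping entirely.
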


\vspace{-0.1in}
\begin{corollary}\label{corollary:sc-c}
Under the assumptions in Theorem~\ref{thm:main-sc-c},
\begin{itemize}[leftmargin=0.7cm]
    \item (deterministic local gradient): If the local gradient satisfies $g_i(\bx)=\nabla f_i(\bx)$ for $i\in[N]$, then after
    $        O\left(\frac{\|\bx^{\star} - \bx^{0}\|^{2} + \| \blambda^{0} - \blambda^{\star}\|^{2}}{\sqrt{\varepsilon}}\right)$
    rounds, we have $\|\bx^{R} - \bx^{\star}\|^{2} \leq \varepsilon$.
    \item (stochastic local gradient): If the local gradient satisfies Assumption~\ref{assumption:local-noise} with $\sigma > 0$, then after $O\left(\frac{\|\bx^{\star} - \bx^{0}\|^{2} + \| \blambda^{0} - \blambda^{\star}\|^{2}}{\sqrt{\varepsilon}} + \frac{\zeta^{2}}{\varepsilon}\right)$
    rounds, we have $\E\left[\|\bx^{R} - \bx^{\star}\|^{2}\right] \leq \varepsilon$.
\end{itemize}
\end{corollary}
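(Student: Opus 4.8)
\medskip

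The plan is to cast {\algname} as an inexact instance of an accelerated primal-dual (APD) scheme — essentially the one of \citet{hamedani2021primal} specialized to the bilinear coupling $\Phi(\bx,\blambda)=\sum_i\lambda_i f_i(\bx)$ — where the only deviation from the exact method is that the primal update uses $\sum_i \lambda_i^{r+1}\Delta\bu_i^r$ in place of the exact gradient $\nabla_\bx F(\bx^r,\blambda^{r+1})=\sum_i\lambda_i^{r+1}\nabla f_i(\bx^r)$. So the first step is to isolate and control the \emph{local-step bias}: define $\bm{e}^r := \sum_i\lambda_i^{r+1}\Delta\bu_i^r - \sum_i\lambda_i^{r+1}\nabla f_i(\bx^r)$ and bound $\E\|\bm{e}^r\|^2$. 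Unrolling the {\sc Local-update} recursion $\bu_{i,j}=\bu_{i,j-1}-\eta_\ell(g_i(\bu_{i,j-1})-\bc_i^r+\bc^r)$ and using the {\sc Scaffold}-style control-variate identity $\bc^r=\sum_i\lambda_i^{r+1}\bc_i^r$ with $\bc_i^r=g_i(\bx^r)$, the $L$-smoothness and $\mu_\bx$-strong convexity of each $f_i$ (Definitions \ref{definition:f-smooth}, \ref{definition:f-strong-convex}) give the standard client-drift estimate $\E\|\bu_{i,j}^r-\bx^r\|^2 = O(\eta_\ell^2 j^2 (\text{stuff}) + \eta_\ell^2 j\,\zeta^2)$, which, for $\eta_\ell$ chosen small enough relative to $L$ and $J$, yields a bound of the form $\E\|\bm{e}^r\|^2 \le \tfrac14\|\nabla_\bx F(\bx^r,\blambda^{r+1})\|^2\cdot(\text{small}) + O(\eta_\ell^2 J \zeta^2)$ — i.e., a contractive bias term plus an additive noise floor. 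Crucially, because of the control variates there is \emph{no} non-vanishing heterogeneity term, which is what lets the final rate be independent of data heterogeneity.

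\medskip

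The second step is the APD one-step descent inequality. I would follow the potential-function / prox-inequality argument of \citet{hamedani2021primal}: combine the optimality conditions for the $\blambda^{r+1}$ update \eqref{eq:alg-update-dual} and the $\bx^{r+1}$ update \eqref{eq:alg-update-primal} (each of the form $\text{prox} $ w.r.t.\ the Bregman divergence $\rD$) against the saddle point $(\bx^\star,\blambda^\star)$, use the three-point identity for $\rD$, and exploit (i) $\mu_\bx$-strong convexity of $\bx\mapsto F(\bx,\blambda)$ to gain the extra $\tfrac{\mu_\bx}{2}\|\bx^{r+1}-\bx^\star\|^2$ term, (ii) concavity of $\Phi(\bx,\cdot)$ plus convexity of $\psi$, and (iii) the extrapolation $\bs^r=(1+\theta_r)\nabla_\blambda\Phi(\bx^r,\blambda^r)-\theta_r\nabla_\blambda\Phi(\bx^{r-1},\blambda^{r-1})$ together with Assumption \ref{assumption:L_yx-smooth} ($L_{\blambda\bx}$-smoothness of $\nabla_\blambda\Phi$ in $\bx$) to telescope the cross terms. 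The choice in Condition \ref{condition:stepsize-scc} ($\sigma_r=\gamma_r\tau_r$, $\theta_r=\sigma_{r-1}/\sigma_r$, $\gamma_{r+1}=\gamma_r(1+\mu_\bx\tau_r)$) is exactly what makes the weighted sum of these one-step inequalities telescope into
\[
\gamma_R\,\E\|\bx^R-\bx^\star\|^2 \;\le\; \gamma_0\big(\tfrac{1}{\tau_0}\|\bx^0-\bx^\star\|^2 + \tfrac{1}{\sigma_0}\|\blambda^0-\blambda^\star\|^2\big) \;+\; \sum_{r} \gamma_r \cdot(\text{error from }\bm{e}^r\text{ and dual-gradient noise}).
\]
With $\tau_r = \Theta(r/\mu_\bx)$ one gets $\gamma_R = \Theta(R^2)$ (the quadratic growth characteristic of Nesterov acceleration in the strongly convex setting), so dividing through by $\gamma_R$ produces the $C_1/R^2$ leading term; the accumulated noise $\sum_r \gamma_r O(\eta_\ell^2 J\zeta^2 + \sigma_r \zeta^2)$ divided by $\gamma_R=\Theta(R^2)$ contributes the $C_2\zeta^2/R$ term once $\eta_\ell$ (and the per-round stepsizes) are tuned so that the $r$-dependence of the summand is $O(r)$.

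\medskip

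I expect the main obstacle to be the \emph{interaction} between the two error sources inside the telescoped potential: the primal-bias term $\bm{e}^r$ couples to $\|\bx^{r+1}-\bx^r\|^2$ and to $\|\nabla_\bx F(\bx^r,\blambda^{r+1})\|^2$, while the latter is not directly a term in the Lyapunov function and must be reabsorbed using the primal prox-inequality and strong convexity — all while the weights $\gamma_r$ are growing, so a naive bound would blow up. Getting the stepsize constraints ($\tau_r$ vs.\ $\sigma_r$ vs.\ $\eta_\ell$ vs.\ $L, L_{\blambda\bx}, J$) simultaneously compatible so that every cross term is dominated is the delicate part; this is where Condition \ref{condition:stepsize-scc} and an additional (unstated) upper bound on $\eta_\ell$ of order $1/(LJ)$ will be used. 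Once the telescoping inequality is established, Corollary \ref{corollary:sc-c} is immediate: in the deterministic case $\zeta=0$ kills the second term and $R = O(\varepsilon^{-1/2}(\|\bx^0-\bx^\star\|^2+\|\blambda^0-\blambda^\star\|^2))$ rounds suffice; in the stochastic case one balances $C_1/R^2 \le \varepsilon/2$ and $C_2\zeta^2/R\le\varepsilon/2$, giving the stated $O(\varepsilon^{-1/2}+\zeta^2\varepsilon^{-1})$.
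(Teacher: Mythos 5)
Your route is the same as the paper's: a {\sc Scaffold}-style drift/bias bound for the local steps (the paper's Lemma~\ref{Lemma:drift-lemma-deterministic}), an inexact accelerated primal--dual one-step inequality in the spirit of \citet{hamedani2021primal} (Lemma~\ref{lemma:sc-c-stochastic}), telescoping under Condition~\ref{condition:stepsize-scc} (Lemmas~\ref{lemma:appendix-scc-stepsize-1}--\ref{lemma:appendix-scc-stepsize-2} and Theorem~\ref{thm:appendix-sc-c}), and then the corollary itself by balancing $C_1 /R^2 \le \varepsilon/2$ against $C_2\zeta^2/R \le \varepsilon/2$, which is exactly how the paper obtains the stated round complexities. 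Two scaling slips in your sketch of the theorem should be fixed, though neither changes the conclusion. First, you write $\tau_r = \Theta(r/\mu_{\bx})$; this is inverted. Under the paper's schedule ($\gamma_{r+1}=\gamma_r(1+\mu_{\bx}\tau_r)$ with $\tau_r\sigma_r$ held constant), the primal step \emph{decreases} as $\tau_r = \Theta(1/(\mu_{\bx}r))$ while $\sigma_r=\Theta(r)$ grows and $\gamma_r=\Theta(r^2)$; if $\tau_r$ literally grew linearly, the recursion for $\gamma_r$ would give super-polynomial growth and the telescoping would not produce an $R^2$ denominator. Second, your noise accounting needs the per-round summand to be $O(1)$, not $O(r)$: the paper uses $t_r\tau_r=\tau_0^2\gamma_0=\text{const}$ so that $\sum_r t_r\,C\tau_r\zeta^2 = O(R)\,\zeta^2$, which after dividing by $t_R/\tau_R=\Theta(R^2)$ yields the $C_2\zeta^2/R$ term; an $O(r)$ summand would sum to $O(R^2)$ and leave a non-vanishing $O(\zeta^2)$ floor instead. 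With those corrected, the final balancing step you give for the corollary is precisely the paper's.
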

\vspace{-0.15in}
\begin{remark}
As suggested by the Corollary~\ref{corollary:sc-c}, in the deterministic setting ($\zeta=0$, when applying {\algname} for solving the min-max problems in the vanilla AFL and the super-quantile approach, {\algname} achieves the convergence rate of $O(1/R^2)$.
The rate of {\algname} is faster than existing algorithms --
the convergence rate  is $O(1/R)$ in both \citet{mohri2019agnostic, pillutla2021federated}. 
In addition, the algorithm with theoretical convergence guarantees introduced in \citet{mohri2019agnostic} does not apply local steps (i.e., number of local updates $J=1$), resulting in inferior performance in practical applications.
\end{remark}
\vspace{-0.15in}
\begin{remark}
{\algname} matches the rates ($O(1/R^{2})$) of the centralized accelerated primal-dual algorithm~\citep{hamedani2021primal} when $\zeta=0$. Meanwhile, our proposed algorithm converges faster compared to directly applying centralized gradient descent ascent (GDA) and extra-gradient method (EG) for solving Eq.~\eqref{eq:dro-def}, which achieve a rate of $O(1/R)$. 
\end{remark}

\subsection{Strongly-convex-strongly-concave Setting}\label{sec:theory-sc-sc}
We next present results for the strongly-convex-strongly-concave setting. Differing from the strongly-convex-concave setting, the parameters of Algorithm~\ref{Algorithm:ScaffoldAPD} are fixed across different rounds, as follows.
\begin{condition}\label{condition:stepsize-scsc}
The parameters of Algorithm~\ref{Algorithm:ScaffoldAPD} are defined as
\begin{small}
\begin{equation}
\begin{aligned}
    \mu_{\bx}\tau = O\left(\frac{1-\theta}{\theta}\right),\quad \mu_{\blambda}\sigma = O\left(\frac{1-\theta}{\theta}\right),\quad \frac{1}{1-\theta} = O\left(\Bigg(\frac{L_{\bx\bx}}{\mu_{\bx}} + \sqrt{\frac{L_{\blambda\bx}^{2}}{\mu_{\bx}\mu_{\blambda}}}\,\Bigg) \vee \frac{\zeta^{2}}{\mu_{\bx}\varepsilon}\right).
\end{aligned}
\end{equation}
\end{small}
\end{condition}

\begin{theorem}\label{thm:main-sc-sc}
Suppose $\{f_i\}_{i\in[N]}$ are $\mu_{\bx}$-strongly convex and $\psi$ is $\mu_{\by}$-strongly convex. 
If  Assumption~\ref{assumption:L_yx-smooth} and Assumption\ref{assumption:local-noise} hold,  and we let the parameters $\{\tau, \sigma, \theta\}$ of Algorithm~\ref{Algorithm:ScaffoldAPD} satisfy Condition~\ref{condition:stepsize-scsc}, then the $R$-th iterate $(\bx^{R}, \blambda^{R})$ satisfies 
\begin{equation}
\E\left[\mu_{\bx}\|\bx^{r} - \bx^{\star}\|^{2}\right] \leq C_1\theta^{R}\left[\|\bx^{0} - \bx^{\star}\|^{2} + \| \blambda^{0} - \blambda^{\star}\|^{2}\right]+ C_2(1-\theta)\frac{\zeta^{2}}{\mu_{\bx}},
\end{equation}
where $C_1, C_2 \geq 0$ are non-negative constants.
\end{theorem}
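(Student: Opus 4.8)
\textbf{Proof proposal for Theorem~\ref{thm:main-sc-sc}.}

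The plan is to adapt the Lyapunov/one-step-contraction argument that is standard for accelerated primal-dual (APD) methods in the strongly-convex--strongly-concave regime (as in \citet{hamedani2021primal, chambolle2016ergodic, zhang2022sapd}), while carefully accounting for the two perturbations that distinguish {\algname} from the centralized algorithm: (i) the primal direction $\sum_i \lambda_i^{r+1}\Delta\bu_i^r$ produced by {\sc Local-update} is not the exact gradient $\nabla_{\bx}F(\bx^r,\blambda^{r+1})$, and (ii) the stochastic gradients $g_i$ introduce variance $\zeta^2$. First I would establish the ``bias control'' lemma for the local steps: writing $\Delta\bu_i^r = \frac{1}{\eta_\ell J}\sum_{j=1}^{J}(g_i(\bu_{i,j-1}^r) - \bc_i^r + \bc^r)$ with $\bc_i^r = g_i(\bx^r)$ and $\bc^r = \sum_i\lambda_i^{r+1}\bc_i^r$, one shows that in expectation $\sum_i\lambda_i^{r+1}\Delta\bu_i^r$ equals $\nabla_{\bx}\Phi(\bx^r,\blambda^{r+1})$ up to a term controlled by $L$, $\eta_\ell$, $J$, and the client-drift $\E\|\bu_{i,j}^r - \bx^r\|^2$; the {\sc Scaffold}-style correction $-\bc_i^r+\bc^r$ is exactly what cancels the heterogeneity term $\nabla f_i - \nabla\Phi$, so the residual is $O(L^2\eta_\ell^2 J^2)$ times squared distances plus an $O(\eta_\ell^2 J \zeta^2)$ noise term, mirroring the client-drift bounds in \citet{karimireddy2020scaffold}. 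Choosing $\eta_\ell$ small enough (as a function of $L/\mu_{\bx}$ and $\tau$) makes this bias negligible relative to the primal progress, so the primal update \eqref{eq:alg-update-primal} behaves like an inexact proximal-gradient step on $F(\cdot,\blambda^{r+1})$.

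Next I would set up the single-round descent inequality. Using $\mu_{\bx}$-strong convexity of $\Phi(\cdot,\blambda)$ and $\mu_{\by}$-strong convexity of $\psi$, together with the three-point identity for the Bregman divergence $\rD$ applied to the $\blambda$-update \eqref{eq:alg-update-dual} and the $\bx$-update \eqref{eq:alg-update-primal}, I would derive, for any comparator $(\bx^\star,\blambda^\star)$ (taken as the saddle point), a bound of the form
\begin{equation}
\E\big[A_{r+1}(\bx^{r+1},\blambda^{r+1})\big] + (\text{positive slack terms}) \;\leq\; \theta\,\E\big[A_r(\bx^r,\blambda^r)\big] + (\text{extrapolation cross terms}) + (\text{noise terms}),\nonumber
\end{equation}
where $A_r(\bx,\blambda) = \tfrac{1}{\tau}\mathrm{D}(\bx,\bx^\star) + \tfrac{1}{\sigma}\rD(\blambda^\star,\blambda) + (\text{strong-convexity bonuses})$ is the Lyapunov potential. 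The extrapolation term $\bs^r$ contributes a telescoping/cross inner product $\langle \nabla_{\blambda}\Phi(\bx^r,\blambda^r) - \nabla_{\blambda}\Phi(\bx^{r-1},\blambda^{r-1}), \blambda^{r+1}-\blambda^\star\rangle$ which, via Assumption~\ref{assumption:L_yx-smooth} ($L_{\blambda\bx}$-smoothness in the mixed variable) and Young's inequality, is split into a part absorbed by the previous round's primal progress and a part absorbed by the current dual progress; this is precisely where the condition $\tfrac{1}{1-\theta} = O\big((L_{\bx\bx}/\mu_{\bx} + \sqrt{L_{\blambda\bx}^2/(\mu_{\bx}\mu_{\by})}) \vee \zeta^2/(\mu_{\bx}\varepsilon)\big)$ and the matching $\mu_{\bx}\tau, \mu_{\by}\sigma = O((1-\theta)/\theta)$ enter, ensuring the slack terms are nonnegative and the contraction factor is exactly $\theta$. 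Iterating the resulting inequality $A_{r+1} \le \theta A_r + (\text{const})\cdot\zeta^2$ over $R$ rounds gives a geometric sum: the initial-condition term decays as $\theta^R$ and the noise term saturates at a $\tfrac{1}{1-\theta}$-weighted $\zeta^2$ floor, which under Condition~\ref{condition:stepsize-scsc} is $O((1-\theta)\zeta^2/\mu_{\bx})$ after rescaling; finally $\mu_{\bx}\|\bx^R-\bx^\star\|^2 \le 2\mu_{\bx}\mathrm{D}(\bx^R,\bx^\star)/(\text{strong convexity of }\rD) \le \tfrac{2}{\tau}\mathrm{D}(\bx^R,\bx^\star) \le A_R$ extracts the stated bound with $C_1, C_2$ absorbing all the $O(1)$ constants.

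The main obstacle I anticipate is the coupling between the local-step bias and the APD extrapolation: the standard APD analysis crucially uses the \emph{exact} primal gradient at $(\bx^r,\blambda^{r+1})$ to telescope the $\langle\nabla_{\bx}F, \cdot\rangle$ terms against the Bregman divergences, and the inexactness $\delta^r := \sum_i\lambda_i^{r+1}\Delta\bu_i^r - \nabla_{\bx}\Phi(\bx^r,\blambda^{r+1})$ must be shown to be \emph{strongly contractive with the same rate} $\theta$ rather than merely bounded — otherwise the error accumulates and destroys linear convergence. Handling this requires interleaving the client-drift recursion (which is itself a geometric recursion in $J$ once $\eta_\ell L < 1$) with the outer Lyapunov recursion, and verifying that one can choose $\eta_\ell$ and $J$ so that the bias contributes only lower-order terms in \emph{both} the $\theta^R$ coefficient and the $\zeta^2$ floor. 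A secondary technical point is that $\nabla_{\blambda}\Phi(\bx^r,\cdot) = [f_1(\bx^r),\dots,f_N(\bx^r)]^\top$ is computed exactly (no noise), so the dual side is clean; the only stochasticity is on the primal side, which simplifies the variance bookkeeping compared to fully stochastic saddle-point methods, but one still must be careful that the noise injected through $\bc_i^r = g_i(\bx^r)$ into the control variate is correlated across the $J$ local steps within a round.
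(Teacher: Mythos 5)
Your proposal follows essentially the same route as the paper's proof: a one-round primal-dual inequality built from the Bregman three-point identity and perturbed strong convexity, with the extrapolation cross term $\theta\langle \bq^r,\blambda^{r+1}-\blambda^r\rangle$ split by Young's inequality against $L_{\blambda\bx}\|\bx^r-\bx^{r-1}\|$, the {\sc Scaffold}-corrected client drift absorbed into the round's own progress term $\frac{1}{\tau}\|\bx^{r+1}-\bx^r\|^2$ (which is exactly how the paper resolves the coupling you flag as the main obstacle), and the resulting Lyapunov recursion $V_{r+1}\le\theta V_r + O(\tau\zeta^2)$ unrolled to a $\theta^R$ term plus a $\frac{\tau\zeta^2}{1-\theta}=O((1-\theta)\zeta^2/\mu_{\bx}^2)$ noise floor. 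The approach and all key ingredients match the paper's argument.
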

\begin{corollary}
Under the assumptions in Theorem~\ref{thm:main-sc-sc},
\begin{itemize}[leftmargin=0.7cm]
    \item (deterministic local gradient): If the local gradient satisfies $g_i(\bx)=\nabla f_i(\bx)$ for $i\in[N]$, then after 
    $O\left(\Bigg(\frac{L_{\bx\bx}}{\mu_{\bx}} + \sqrt{\frac{L_{\blambda\bx}^{2}}{\mu_{\bx}\mu_{\blambda}}}\Bigg)\log\left(\frac{\|\bx^{0} - \bx^{\star}\|^{2} + \| \blambda^{0} - \blambda^{\star}\|^{2}}{\varepsilon}\right)\right)$ 
    rounds, $\mu_{\bx}\|\bx^{R} - \bx^{\star}\|^{2} \leq \varepsilon$.
    \item (stochastic local gradient): If the local gradient satisfies Assumption~\ref{assumption:local-noise} with $\zeta > 0$, then after 
    $O\left(\Bigg(\frac{L_{\bx\bx}}{\mu_{\bx}} + \sqrt{\frac{L_{\blambda\bx}^{2}}{\mu_{\bx}\mu_{\blambda}}}+\frac{\zeta^{2}}{\mu_{\bx}\varepsilon}\Bigg)\log\left(\frac{\|\bx^{0} - \bx^{\star}\|^{2} + \| \blambda^{0} - \blambda^{\star}\|^{2}}{\varepsilon} \right)\right)$ 
    rounds, $\E\left[\mu_{\bx}\|\bx^{R} - \bx^{\star}\|^{2}\right] \leq \varepsilon$.
\end{itemize}
\end{corollary}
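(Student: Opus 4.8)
# Proof Proposal for Theorem~\ref{thm:main-sc-sc}

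\textbf{Overall strategy.} The plan is to follow the template of accelerated primal-dual analysis (as in~\citet{hamedani2021primal,chambolle2016ergodic}) but to carry an extra \emph{client-drift} error term through the recursion, and to exploit the two-sided strong convexity to obtain a \emph{linear} (geometric) contraction rather than the $O(1/R^2)$ averaging rate. First I would set up a one-step Lyapunov / energy inequality of the form
\begin{equation*}
\mathcal{E}_{r+1} \;\le\; \theta\,\mathcal{E}_r \;+\; (\text{drift error})_r \;+\; (\text{stochastic error})_r,
\end{equation*}
where $\mathcal{E}_r$ is a weighted combination $\mu_{\bx}\|\bx^r-\bx^\star\|^2 + a\,\|\blambda^r-\blambda^\star\|^2 + b\,\langle \nabla_{\blambda}\Phi(\bx^r,\blambda^r)-\nabla_{\blambda}\Phi(\bx^{r-1},\blambda^{r-1}),\,\blambda^r-\blambda^\star\rangle$ (the last cross term is the usual artifact of the extrapolation step $\bs^r$). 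Strong convexity of the $f_i$ in $\bx$ and of $\psi$ in $\blambda$ provides the negative quadratic terms $-\mu_{\bx}\tau\|\cdot\|^2$ and $-\mu_{\blambda}\sigma\|\cdot\|^2$ that, under the step-size choices in Condition~\ref{condition:stepsize-scsc}, let me absorb the coupling terms (controlled by $L_{\blambda\bx}$ and $L_{\bx\bx}$) and collapse everything into a single contraction factor $\theta<1$.

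\textbf{Handling the primal and dual updates.} For the \emph{dual} update~\eqref{eq:alg-update-dual}, since it is an exact proximal step on $\psi$ with the exact gradient $\nabla_{\blambda}\Phi = [L_1^r,\dots,L_N^r]^\top$ (the loss vector is computed exactly, so there is no dual noise), I would apply the standard three-point / prox-inequality for the Bregman divergence $\rD$ to get an inequality relating $\psi(\blambda^{r+1}) - \psi(\blambda^\star)$, $\langle \bs^r,\blambda^{r+1}-\blambda^\star\rangle$, and $\tfrac{1}{\sigma}\big(\rD(\blambda^\star,\blambda^r) - \rD(\blambda^\star,\blambda^{r+1}) - \rD(\blambda^{r+1},\blambda^r)\big)$, then use $\mu_{\blambda}$-strong convexity of $\psi$ to strengthen the left side. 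For the \emph{primal} update~\eqref{eq:alg-update-primal}, the crucial point is that $\sum_i \lambda_i^{r+1}\Delta\bu_i^r$ is \emph{not} the exact gradient $\nabla_{\bx}F(\bx^r,\blambda^{r+1}) = \sum_i\lambda_i^{r+1}\nabla f_i(\bx^r)$ but rather a {\sc Scaffold}-style estimate after $J$ local steps. I would write $\Delta\bu_i^r = \nabla f_i(\bx^r) + e_i^r$ and bound the bias/variance of $e_i^r$: the SCAFFOLD control variates $\bc_i^r = g_i(\bx^r)$, $\bc^r = \sum_i\lambda_i^{r+1}\bc_i^r$ cancel the heterogeneity, so $\|e_i^r\|$ is controlled by $\eta_\ell L J \cdot \|\bx^r - \bx^\star\|$ plus a $\zeta^2$-variance contribution — i.e., the drift is a \emph{relative} error proportional to the current suboptimality, which can be absorbed into the contraction provided $\eta_\ell$ is small enough (this is where the $L_{\bx\bx}/\mu_{\bx}$ factor in the rate enters). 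Then the primal prox-inequality with the exact-gradient term $\langle \nabla_{\bx}F(\bx^r,\blambda^{r+1}),\bx^{r+1}-\bx^\star\rangle$, combined with $\mu_{\bx}$-strong convexity of $F(\cdot,\blambda^{r+1})$, gives the matching primal energy decrease.

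\textbf{Combining and telescoping.} Adding the primal and dual inequalities, the bilinear terms $\langle \nabla_{\blambda}\Phi(\bx^{r+1},\blambda^{r+1}) - \nabla_{\blambda}\Phi(\bx^r,\blambda^r),\,\cdot\rangle$ partially telescope through the extrapolation, up to a residual cross term bounded using Assumption~\ref{assumption:L_yx-smooth} (the $L_{\blambda\bx}$ Lipschitz bound) and Young's inequality; the relations $\theta = \sigma_{r-1}/\sigma_r$ together with the fixed-parameter choices of Condition~\ref{condition:stepsize-scsc} are exactly what make the per-round factors line up so the sum telescopes with geometric weights $\theta^{-r}$. Taking expectations to kill the zero-mean parts of the stochastic gradients, I would then iterate the recursion $\mathcal{E}_{r+1}\le\theta\mathcal{E}_r + C(1-\theta)\zeta^2\eta_\ell^2(\cdot)$ to obtain $\mathcal{E}_R \le \theta^R\mathcal{E}_0 + \frac{C_2}{\mu_{\bx}}(1-\theta)\zeta^2$, and finally lower-bound $\mathcal{E}_R \ge \mu_{\bx}\|\bx^R-\bx^\star\|^2$ to conclude. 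The corollary follows by choosing $1-\theta$ as in Condition~\ref{condition:stepsize-scsc} — the $(L_{\bx\bx}/\mu_{\bx} + \sqrt{L_{\blambda\bx}^2/(\mu_{\bx}\mu_{\blambda})})$ term makes the transient $\theta^R$ part small after the stated number of rounds, and the $\zeta^2/(\mu_{\bx}\varepsilon)$ term forces the noise floor below $\varepsilon$.

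\textbf{Main obstacle.} The hard part will be the primal side: showing that the SCAFFOLD-corrected local drift $e_i^r$ is a \emph{contractive relative} error — i.e., that after $J$ steps with corrected gradients $g_i(\bu) - \bc_i + \bc$, the deviation $\|\bu_{i,J} - \bx^r\|$ stays proportional to $\eta_\ell J\|\nabla_{\bx}F(\bx^r,\blambda^{r+1})\|$ up to $O(\eta_\ell^2 L^2 J^2)$ higher-order terms and a $\zeta^2$ variance term — and then verifying that this error, when fed into the primal-dual energy recursion, is genuinely dominated by the negative $-\mu_{\bx}\tau\|\bx^{r+1}-\bx^\star\|^2$ term under Condition~\ref{condition:stepsize-scsc} rather than derailing the contraction. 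Getting the interaction between the three sources of error (extrapolation cross term, client drift, gradient noise) to fit under a \emph{single} $\theta$ without losing the acceleration-flavored $\sqrt{L_{\blambda\bx}^2/(\mu_{\bx}\mu_{\blambda})}$ dependence is the delicate bookkeeping step.
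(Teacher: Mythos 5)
Your proposal is correct and follows essentially the same route as the paper: a one-step primal--dual energy inequality with the extrapolation cross term $\langle \bq^r,\blambda^r-\blambda^\star\rangle$ folded into the Lyapunov function, strong convexity in both blocks supplying the contraction factor $\theta$ under Condition~\ref{condition:stepsize-scsc}, the SCAFFOLD-corrected drift absorbed as a relative error (the paper routes it through $\|\nabla_{\bx}\Phi(\bx^r,\blambda^{r+1})\|^2$ and the $-\tfrac{1}{8\tau}\|\bx^{r+1}-\bx^r\|^2$ proximal term rather than the $-\mu_{\bx}\|\bx^{r+1}-\bx^\star\|^2$ term, a minor bookkeeping difference), and the geometric recursion $V_{r+1}\le\theta V_r + C\theta\tau\zeta^2$ iterated and converted to a round count via $1/(1-\theta)$. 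The corollary then follows exactly as you describe.
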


\begin{remark}
Our algorithm converges linearly to the global saddle point when each client applies a noiseless gradient for local updates (i.e., $\zeta=0$) in the presence of data heterogeneity and client-drift in federated learning. 
In contrast, previous approaches exhibit only sub-linear convergence. In the strongly-convex-strongly-concave setting, DRFA~\citep{deng2020distributionally} converges to the saddle-point solution with rate $O(1/R)$ when there is no data heterogeneity and $\zeta=0$.
\end{remark}

\begin{remark}
By applying bias correction in local updates, the convergence rates of our algorithm match those of the centralized accelerated primal-dual algorithm~\citep{zhang2021robust} in both deterministic and stochastic settings. 
\end{remark}

\begin{remark}
Compared to the standard minimization in federated learning, the DRO objective results in a slightly worse condition number in terms of convergence rate. 
In comparison to the standard minimization objective in federated learning, the DRO objective yields a slightly worse condition number. Solving DRO with {\algname} requires $(\sqrt{L_{\bx\bx}/\mu_{\bx}} + \sqrt{L_{\blambda\bx}^{2}/(L_{\bx\bx} \mu_{\blambda})})$ times more communication rounds compared to solving minimization problems with ProxSkip~\citep{mishchenko2022proxskip}.
\end{remark}

\section{Experiments}\label{sec:exp}

We now study the performance of {\algname} for solving  federated DRO problems on both synthetic datasets and real-world datasets. 
Our primary objective when working with synthetic datasets is to validate the convergence analysis of {\algname}. 
On real-world datasets, we compare with existing federated optimization algorithms for learning robust and fair models (DRFA~\citep{deng2020distributionally}, AFL~\citep{mohri2019agnostic}, and $q$-FFL~\citep{li2019fair}) as well as widely used federated algorithms for solving minimization problems including FedAvg~\citep{mcmahan2017communication} and SCAFFOLD~\citep{karimireddy2020scaffold}. 
After conducting thorough evaluations, we have observed that  our proposed accelerated algorithms achieve fast convergence rates and strong empirical performance on real-world datasets. 
We have provided supplementary experimental results in Appendix~\ref{sec:appendix-exp}, which includes additional baseline methods, ablations on our algorithm, and other relevant findings.

\vspace{-0.1in}
\subsection{Results on Synthetic Datasets}
To construct the synthetic datasets, we follow the setup described in Eq.~\eqref{eq:dro-def} and consider a simple robust regression problem. 
Specifically, for the $i$-th client, the local function $f_i$ is defined as $f_i(\bx) =  \frac{1}{m_i}\sum_{j=1}^{m_i}(\langle \ba_i^{j}, \bx\rangle - y_i^j)^{2} + \frac{\mu_{\bx}}{2}\|\bx\|^{2}$, where $j$ is sample index on this client and there are $m_i$ training samples on client-$i$. 
We apply the $\chi^{2}$ penalty for regularizing the weight vector $\blambda$. 
To generate the data, each input ${\ba_i^{j}}$ is sampled from a Gaussian distribution $\ba_i^{j} \sim \mathcal{N}(\bm{0}, \bm{I}_{d\times d})$. Then we random generate $\widehat{\bx} \sim \mathcal{N}(\bm{0}, c^{2}\bm{I}_{d\times d})$, and $\delta_{i}^{\bx} \sim \mathcal{N}(\bm{0}, \sigma^{2}\bm{I}_{d\times d})$. Based on $(\widehat{\bx}, \delta_{i}^{\bx})$,  we generate $y_i^{i}$ as $y_i^{i} = \langle \ba_i^j, \widehat{\bx} + \delta_{i}^{\bx}\rangle$. 
Therefore, there exist distribution shifts across different clients (i.e., concept shifts). 
We set $N=5$, $d=10$, and $m_i=100$ for $i\in[N]$. 
To measure the algorithm performance, we evaluate the distance between $\bx^{R}$ and the optimal solution $\bx^{\star}$: $\|\bx^{R} - \bx^{\star}\|^{2}$.

\vspace{0.05in}
We compare {\algname} with DRFA~\citep{deng2020distributionally} on this synthetic dataset. 
The regularization parameter $\rho$ for $\psi$ is varied from $0.01$ to $0.1$. 
For both algorithms, we set the number of local steps to be 100 and select the algorithm parameters through grid search. 
The comparison results are summarized in Fig~\ref{fig:exp-syn}. 
As shown in Fig~\ref{fig:exp-syn}, we observe that our proposed algorithm {\algname} achieves linear convergence rates in all three settings. 
In contrast, DRFA converges much more slowly compared to {\algname}. 
We have included more experimental results under this synthetic setup in Appendix~\ref{sec:appendix-exp}, including results on the effect of local steps and data heterogeneity.

\begin{figure}[t!]
     \centering
     \begin{subfigure}[b]{0.32\textwidth}
         \centering
    \includegraphics[width=\textwidth]{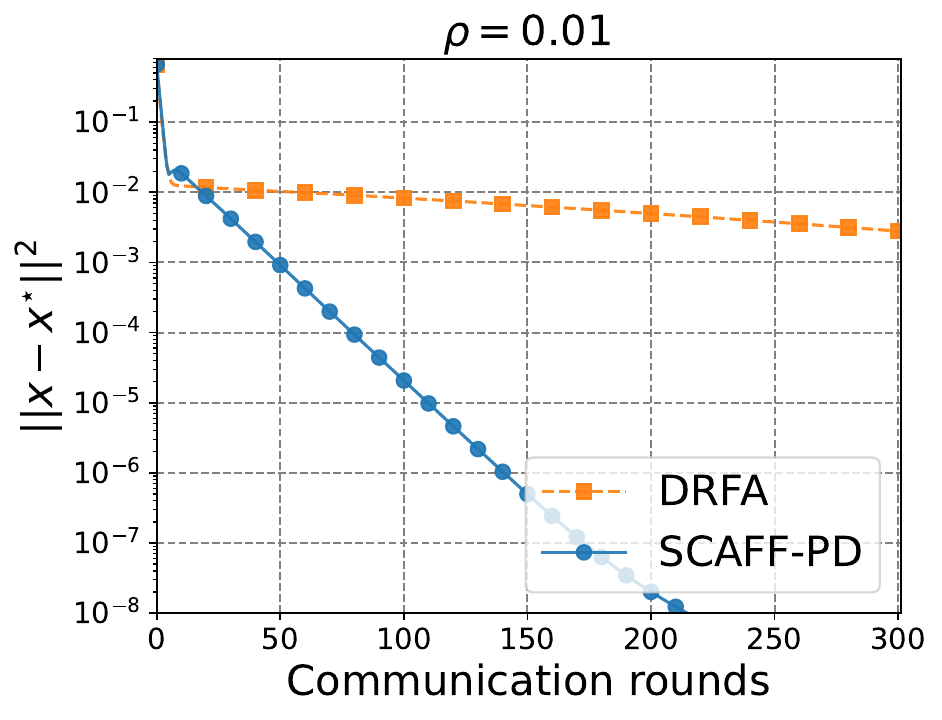}
         \caption{$\rho=0.01$.}
     \end{subfigure}
     \begin{subfigure}[b]{0.32\textwidth}
         \centering
    \includegraphics[width=\textwidth]{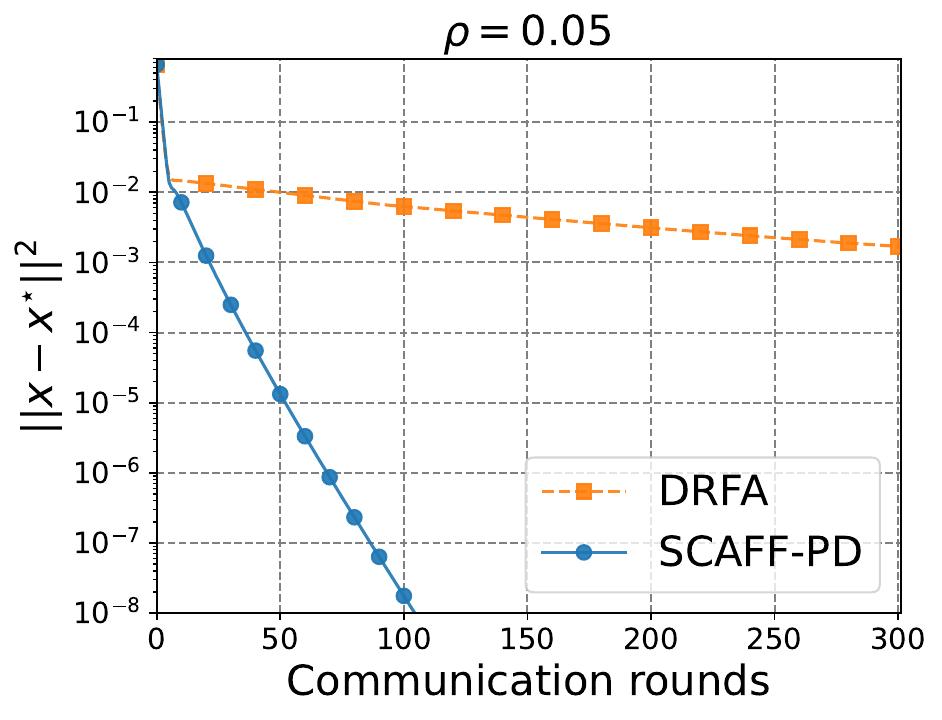}
         \caption{$\rho=0.05$.}
     \end{subfigure}
     \begin{subfigure}[b]{0.32\textwidth}
         \centering
    \includegraphics[width=\textwidth]{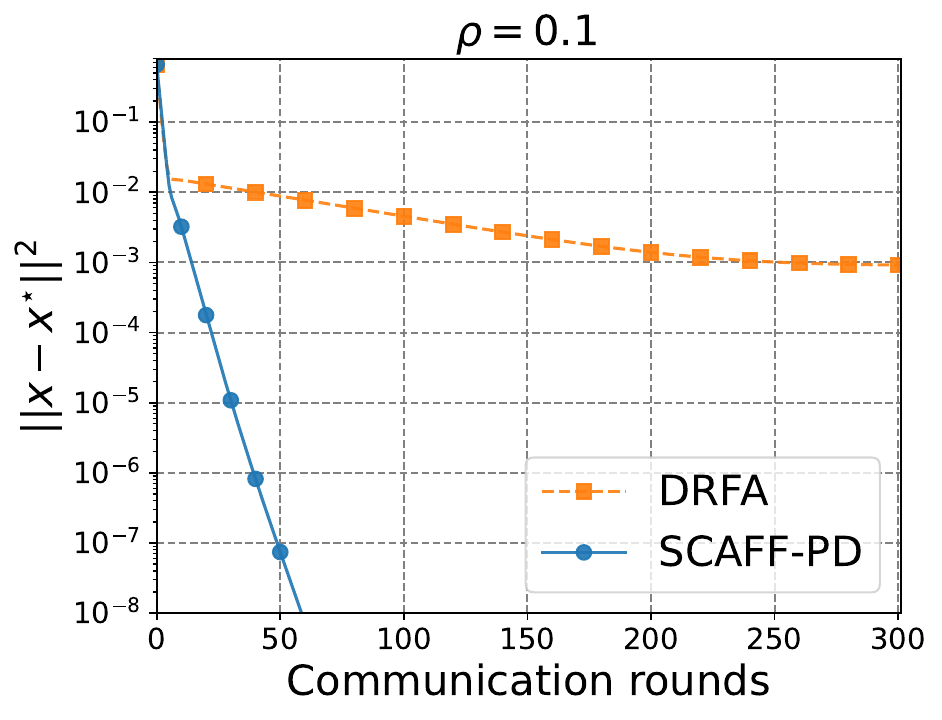}
         \caption{$\rho=0.1$.}
     \end{subfigure}
    \vspace{-0.05in}
        \caption{
        We compare our proposed algorithm with the existing method DFRA~\citep{deng2020distributionally} on synthetic datasets. $\rho$ is the strength of regularization $\psi$ (defined in Eq.~\eqref{eq:psi-function-def}). $X$-axis represents the number of communication rounds, and $Y$-axis represents the distance to optimal solution. 
        }
        \label{fig:exp-syn}
        \vspace{-0.2in}
\end{figure}

\subsection{Results on Real-world Datasets}

\textbf{Dataset setup.} 
We evaluate the performance of various federated learning algorithms on CIFAR100~\citep{krizhevsky2009learning} and TinyImageNet~\citep{le2015tiny}. 
We follow the setup used in \citet{li2022federated}: we consider different degrees of data heterogeneity by applying Dirichlet allocation, denoted by $\text{Dir}(\alpha)$, to partition the dataset into different clients. 
Smaller $\alpha$ values in $\text{Dir}(\alpha)$ leads to higher data heterogeneity. 
Additionally, after the data partition through the Dirichlet allocation, we randomly sample $30\%$ of the clients and remove $70\%$ training samples from those clients. 
Such a sub-sampling procedure can better model real-world data-imbalance scenarios.  
We consider the number of clients $N=20$ for both datasets. Results on larger number of clients and other real-world datasets can be found in Appendix~\ref{sec:appendix-exp}.

\vspace{0.05in}
\textbf{Model setup.} 
We consider learning a linear classifier by using representations extracted from pre-trained deep neural networks. 
Previous studies have demonstrated the efficacy of this approach, particularly in the context of data heterogeneity~\citep{yu2022tct} as well as sub-group robustness~\citep{izmailov2022feature}. 
For both datasets, we apply the ResNet-18~\citep{he2016deep} pre-trained on ImageNet-1k~\citep{deng2009imagenet} as the backbone for extracting feature representations of the image samples. 
To apply the pre-trained ResNet-18, we resize the images from CIFAR100 and TinyImageNet to 3$\times$224$\times$224.

\begin{table*}[t]
	\centering
	\caption{The average and worst-20\% top-1 accuracy of our algorithm ({\algname}) vs.\ state-of-the-art federated learning algorithms evaluated on CIFAR100 and Tiny-ImageNet. 
 The highest top-1 accuracy in each setting is highlighted in \textbf{bold}.
	}
	\label{tab:main_table}
	    \footnotesize
     \resizebox{0.95\textwidth}{!}{
		\begin{tabular}{cccc|cc|ccc}
			\toprule
			\multirow{1}{*}{\bf Datasets}  &   \multirow{1}{*}{\bf Methods}                 & \multicolumn{6}{c}{\bf Non-i.i.d. degree}   
			  \\
			  \midrule
			 &                   & \multicolumn{2}{c}{$\alpha=0.01$}    & \multicolumn{2}{c}{$\alpha=0.05$}    & \multicolumn{2}{c}{$\alpha=0.1$}   
			  \\
		\midrule
			\multirow{11}{*}{CIFAR-100}   &  &    
			\texttt{average} & \texttt{worst-20\%} & \texttt{average} & \texttt{worst-20\%} &\texttt{average} & \texttt{worst-20\%} & 
			\\ 
			\cmidrule(lr){1-9} 
			   & {FedAvg} & 38.77 & 15.93 & 35.96 & 24.43 & 36.57 & 26.50     \\ 
			\tablewhitespace
			   & {SCAFFOLD} & 37.38 & 14.65 & 35.28 & 24.77 & 35.63 & 25.61      \\
			  \tablewhitespace
			   & {$q$-FFL} & 26.39 & 5.43 & 29.60 & 18.62 & 30.38 & 21.98      \\
			  \tablewhitespace
			   & {AFL} & 47.38 & 18.04 & \textbf{44.73} & 22.06 & \textbf{44.89} & 27.27      \\
			  \tablewhitespace
			   & {DRFA} & 46.47 & 26.77 & 41.61 & 27.66 & 43.20 & 32.04      \\
			  \tablewhitespace
                 & \textit{SCAFF-PD}  & \textbf{49.03} & \textbf{29.30} & 42.06 & \textbf{28.37} & 43.69 & \textbf{32.77}     \\
                \midrule
   \multirow{11}{*}{TinyImageNet}   &  &    
			\texttt{average} & \texttt{worst-20\%} & \texttt{average} & \texttt{worst-20\%} &\texttt{average} & \texttt{worst-20\%} & 
			\\ 
			\cmidrule(lr){1-9} 
			   & {FedAvg} & 33.66 & 18.18 &  31.53 & 23.46   & 35.08 & 27.61     \\ 
			\tablewhitespace
			   & {SCAFFOLD} & 31.79 & 15.85  &  30.43 &  22.57 & 34.58 & 27.33      \\
			  \tablewhitespace
			   & {$q$-FFL} & 25.50 & 9.70 &  27.45 &  19.38  & 32.90 & 26.24      \\
			  \tablewhitespace
			   & {AFL} & \textbf{45.32} & 18.65 & \textbf{45.54}  & 28.02  & \textbf{46.11} & 29.50      \\
			  \tablewhitespace
			   & {DRFA} & 36.80 & 22.32 &  37.39  & 28.38 & 37.39 & 28.38      \\
			  \tablewhitespace
                 & \textit{SCAFF-PD} & 41.26 & \textbf{25.32}    & 39.32 & \textbf{30.27}  & 41.23 &  \textbf{29.78}  \\
			\bottomrule
		\end{tabular}
    }
	\vspace{-.75em}
\end{table*}

\vspace{0.05in}
\textbf{Comparisons with existing approaches.} 
We consider three data heterogeneity settings for both datasets. 
To measure the performance of different algorithms, beside the average classification accuracy across clients, we also evaluate the \texttt{worst-20\%} accuracy\footnote{First sort the clients by test  accuracy, then select the lower $20\%$ of clients and compute the mean from this subset.} for comparing fairness and robustness  of different federated learning algorithms. 
Previous studies have employed this metric for comparing different model in federated learning~\cite{li2019fair}.
The comparative results are summarized in Table~\ref{tab:main_table}. We find that our proposed algorithm outperforms existing methods in most settings, especially under higher heterogeneity. 
For example, when the level of data heterogeneity is low ($\alpha=0.1$), applying {\algname} does not yield very large improvements compared to the existing algorithms. 
In the case of high data heterogeneity ($\alpha=0.01$), our proposed algorithm largely improves the worst-20\% accuracy performance on both datasets.

\vspace{0.05in}
\textbf{Effect of $\rho$ in DRO.} 
To gain a better understanding of the empirical performance of our algorithm, we investigate the role of $\rho$ in DRO when applying our algorithm. We consider $\rho \in \{0.1, 0.2, 0.5\}$ and measure both the average and worst-20\% accuracy during training. 
We present the results in Fig~\ref{fig:exp-real-effect-rho}. 
We find that when $\rho$ is small, {\algname} can achieve better fairness/robustness---the worst-20\% accuracy significantly improves when we decrease the $\rho$ in {\algname}. 
Meanwhile, the experimental results suggest that smaller $\rho$ leads to faster convergence w.r.t. worst-20\% accuracy for our algorithm. 
On the other hand, when applying smaller $\rho$, the condition number of the min-max optimization problem becomes worse. 
Fortunately, our algorithm is guaranteed to  achieve accelerated rates, making it particularly beneficial in scenarios where $\mu_{\blambda}$ is small.
As we have demonstrated in Fig~\ref{fig:exp-syn}, our proposed algorithm still converges relatively fast when $\rho$ is small.

\vspace{0.05in}
In addition, we study the trade-off between average accuracy vs.\ worst-20\% accuracy vs.\ best-20\% accuracy for different algorithms. 
The results are summarized in Fig~\ref{fig:exp-real-scatter} (in Appendix~\ref{sec:appendix-exp}). 
Without sacrificing much on average accuracy and best-20\% accuracy, our algorithm largely improves the worst-20\% accuracy.

\begin{figure}[t!]
     \centering
     \begin{subfigure}[b]{0.49\textwidth}
         \centering
    \includegraphics[width=\textwidth]{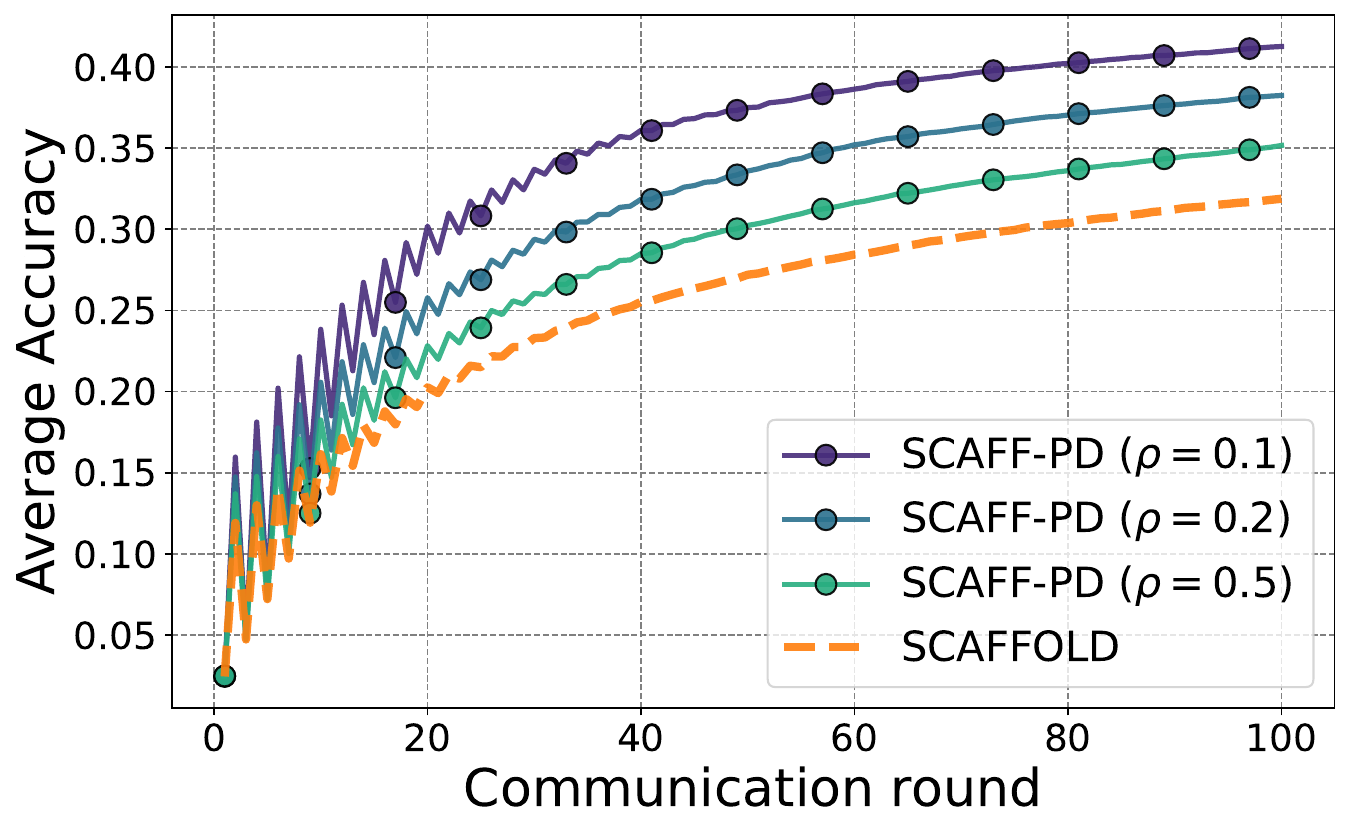}
         \caption{Average accuracy.}
     \end{subfigure}
     \begin{subfigure}[b]{0.49\textwidth}
         \centering
    \includegraphics[width=\textwidth]{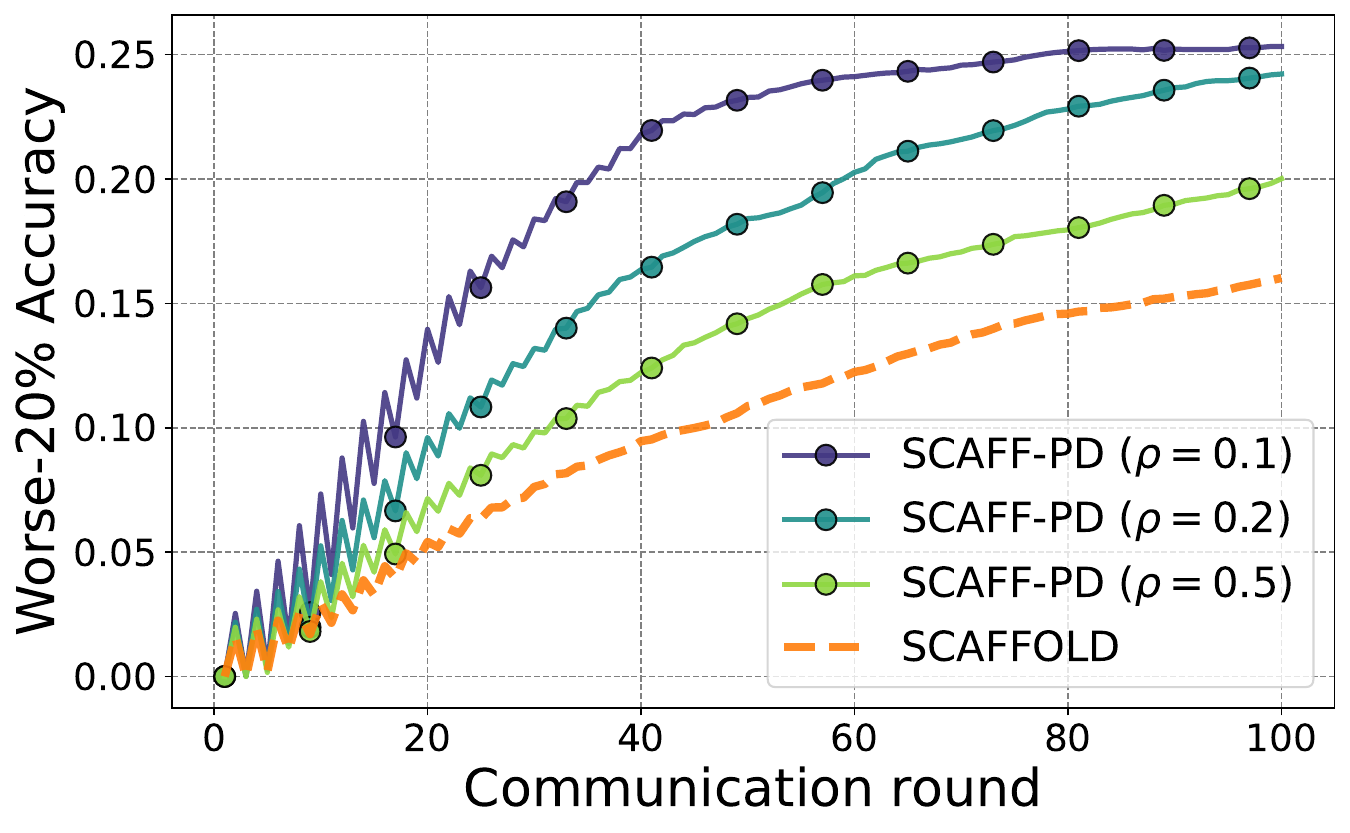}
         \caption{Worst-20\% accuracy.}
     \end{subfigure}
        \caption{
        We study the effect of regularization term $\rho$ in our proposed algorithm {\algname}. We measure both the average test accuracy (\textbf{a}) and worst-20\% accuracy (\textbf{b}) during training. 
        In addition, we include SCAFFOLD (orange dashed lines) as a baseline method for comparison.
        }
        \label{fig:exp-real-effect-rho}
        \vspace{-0.25in}
\end{figure}

\section{Conclusions}\label{eq:discussion}
We have demonstrated the ability of {\algname} to address challenges of fairness and robustness in federated learning. 
Theoretically, we obtained accelerated convergence rates for solving a wide class of federated DRO problems. 
Experimentally, we demonstrated strong empirical performance of {\algname} on real-world datasets, improving upon existing approaches in both communication efficiency and model performance.
An interesting future direction is the integration of DRO and privacy-preserving techniques in the context of federated learning, making {\algname} applicable for a wider range of real-world applications. Another exciting direction is to explicitly integrate {\algname} with game-theoretic mechanisms. Finally, studying the interplay between distributional robustness and personalization is an important open problem.


\bibliographystyle{plainnat}
\bibliography{ref}

\newpage
\appendix

\section{Technical Lemmas}
This section is dedicated to presenting several lemmas that serve as building blocks in proving the convergence of our proposed algorithms.

\begin{lemma}[Perturbed strong convexity, \citet{karimireddy2020scaffold}]\label{lemma:perturb-strong-convexity}
Suppose the function $f(\cdot): \mathcal{X} \rightarrow \mathbb{R}$ is $L$-smooth and $\mu$-strongly convex, then for any $\bx, \by, \bz \in \mathcal{X}$,
\begin{equation}
    \langle \nabla f(\bx), \bz -\by \rangle \geq f(\bz) - f(\by) + \frac{\mu}{4}\|\by - \bz\|^{2} - L\|\bz - \bz\|^{2}.
\end{equation}
\end{lemma}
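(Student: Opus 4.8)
## Proof Proposal for Lemma~\ref{lemma:perturb-strong-convexity}

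The plan is to derive the inequality by splitting the inner product $\langle \nabla f(\bx), \bz - \by \rangle$ into two pieces via the auxiliary point $\bx$: write $\bz - \by = (\bz - \bx) + (\bx - \by)$, so that
\[
\langle \nabla f(\bx), \bz - \by \rangle = \langle \nabla f(\bx), \bz - \bx \rangle + \langle \nabla f(\bx), \bx - \by \rangle.
\]
For the second term I would apply $\mu$-strong convexity of $f$ at the pair $(\bx, \by)$, which gives $\langle \nabla f(\bx), \by - \bx \rangle \le f(\by) - f(\bx) - \tfrac{\mu}{2}\|\by - \bx\|^2$, i.e.\ $\langle \nabla f(\bx), \bx - \by \rangle \ge f(\bx) - f(\by) + \tfrac{\mu}{2}\|\by - \bx\|^2$. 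For the first term I would use $L$-smoothness (the standard quadratic upper bound) at the pair $(\bx, \bz)$: $f(\bz) \le f(\bx) + \langle \nabla f(\bx), \bz - \bx \rangle + \tfrac{L}{2}\|\bz - \bx\|^2$, which rearranges to $\langle \nabla f(\bx), \bz - \bx \rangle \ge f(\bz) - f(\bx) - \tfrac{L}{2}\|\bz - \bx\|^2$.

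Adding the two bounds, the $f(\bx)$ terms cancel and I obtain
\[
\langle \nabla f(\bx), \bz - \by \rangle \ge f(\bz) - f(\by) + \frac{\mu}{2}\|\by - \bx\|^2 - \frac{L}{2}\|\bz - \bx\|^2.
\]
To match the stated form, I would then relate $\|\by - \bx\|^2$ to $\|\by - \bz\|^2$: by the inequality $\|\by - \bz\|^2 \le 2\|\by - \bx\|^2 + 2\|\bx - \bz\|^2$, we get $\tfrac{\mu}{2}\|\by - \bx\|^2 \ge \tfrac{\mu}{4}\|\by - \bz\|^2 - \tfrac{\mu}{2}\|\bx - \bz\|^2$. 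Substituting and absorbing the extra $-\tfrac{\mu}{2}\|\bx - \bz\|^2$ into the smoothness term (using $\mu \le L$) yields the claimed bound, with the $L\|\bz - \bx\|^2$ coefficient being a comfortable over-estimate; the apparent typo "$\|\bz - \bz\|$" in the statement should read $\|\bz - \bx\|$.

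This lemma is essentially elementary — the only mild subtlety is the bookkeeping in converting $\|\by - \bx\|^2$ into $\|\by - \bz\|^2$ at the cost of weakening the curvature constant from $\mu/2$ to $\mu/4$, which is exactly why the statement has a $\mu/4$ rather than $\mu/2$. I do not anticipate a genuine obstacle; the main thing to be careful about is the direction of each inequality (strong convexity gives a lower bound on the gap, smoothness an upper bound) and ensuring the constants on the error term $\|\bz - \bx\|^2$ combine to something no larger than $L$. Since the result is quoted from \citet{karimireddy2020scaffold}, I would also just cite it, but the above is the short self-contained argument.
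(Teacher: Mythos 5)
Your proof is correct and is exactly the standard argument from the cited source (\citealp{karimireddy2020scaffold}, Lemma 5), which this paper quotes without reproving: split via $\bx$, apply smoothness on $(\bx,\bz)$ and strong convexity on $(\bx,\by)$, then convert $\|\by-\bx\|^2$ to $\tfrac12\|\by-\bz\|^2-\|\bx-\bz\|^2$ and absorb the extra $\tfrac{\mu}{2}\|\bz-\bx\|^2$ using $\mu\le L$. You are also right that "$\|\bz-\bz\|$" in the statement is a typo for $\|\bz-\bx\|$.
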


We now present the lemma for analyzing the drift term.
\begin{lemma}[Bounded drift]\label{Lemma:drift-lemma-deterministic}
Suppose $\tau_r = J\,\eta_{\ell}\,\eta_{g}$, and $\eta_{g}\geq 1$, then we have
\begin{equation}
    \mathcal{E}_{r} \leq \frac{12\tau^{2}}{\eta_{g}^{2}}\E\left[\left\|\nabla_{\bx} \Phi(\bx^{r}, \blambda^{r+1})\right\|^{2}\right] + \frac{12\tau^{2}}{\eta_{g}^{2}}\left(1+\chi\right)\zeta^{2} + \frac{3\tau^{2}}{\eta_g^{2} J}\zeta^{2},
\end{equation}
where $\mathcal{E}_r$ is defined as
\begin{equation}
\mathcal{E}_{r} = \frac{1}{J}\sum_{i=1}^{N}\sum_{j=1}^{J}\lambda_{i}\E\left[\|\bu_{i, j} - \bx\|^{2}\right],
\end{equation}
and $\chi$ is defined as
\begin{equation}
    \chi = \max_{\blambda \in \Lambda} \sum_{i=1}^{N}\lambda_i^{2}.
\end{equation}
\end{lemma}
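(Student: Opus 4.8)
Looking at this, I need to prove Lemma~\ref{Lemma:drift-lemma-deterministic}, which bounds the client-drift term $\mathcal{E}_r$ in terms of the full gradient norm and the noise variance.

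=== BEGIN LATEX ===

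\textbf{Proof plan for Lemma~\ref{Lemma:drift-lemma-deterministic}.}

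The plan is to unroll the local recursion of Algorithm~\ref{Algorithm:local-ScaffoldAPD}, which reads $\bu_{i,j} = \bu_{i,j-1} - \eta_\ell(g_i(\bu_{i,j-1}) - \bc_i + \bc)$ with $\bu_{i,0} = \bx$, and then take weighted (by $\lambda_i$) averages in the spirit of the {\sc Scaffold} drift analysis \citep{karimireddy2020scaffold}. First I would fix a client $i$ and a local step $j$, write $\bu_{i,j} - \bx = -\eta_\ell \sum_{k=1}^{j}\bigl(g_i(\bu_{i,k-1}) - \bc_i + \bc\bigr)$, and split the summand into three parts: the stochastic-noise part $g_i(\bu_{i,k-1}) - \nabla f_i(\bu_{i,k-1})$; the control-variate correction part, where $-\bc_i + \bc = -g_i(\bx) + \sum_\ell \lambda_\ell^{r+1} g_\ell(\bx)$ so that in expectation (since $\bc_i, \bc$ are taken at $\bx$) this becomes $-\nabla f_i(\bx) + \nabla_\bx\Phi(\bx,\blambda^{r+1})$ up to noise; and the "progress" part $\nabla f_i(\bu_{i,k-1}) - \nabla f_i(\bx)$, which is $L$-Lipschitz-controlled by $\|\bu_{i,k-1}-\bx\|$. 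Applying $\|\sum_{k=1}^j v_k\|^2 \le j\sum_{k=1}^j\|v_k\|^2$ and then Young's/Cauchy--Schwarz to separate the three parts, one gets a bound on $\E\|\bu_{i,j}-\bx\|^2$ in terms of $\eta_\ell^2 j^2$ times (a) $\E\|\nabla_\bx\Phi(\bx,\blambda^{r+1})\|^2$, (b) the variance terms coming from $g_i(\bx)-\nabla f_i(\bx)$ and $\bc - \nabla_\bx\Phi$, and (c) $L^2 \cdot \tfrac{1}{j}\sum_{k<j}\E\|\bu_{i,k-1}-\bx\|^2$, i.e.\ a self-referential term feeding back into $\mathcal{E}_r$.

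The next step is to sum over $j=1,\dots,J$, divide by $J$, and take the $\lambda_i$-weighted sum over $i$ to recover $\mathcal{E}_r$ on the left. Here the key bookkeeping is: the control-variate variance picks up a factor $\chi = \max_{\blambda\in\Lambda}\sum_i \lambda_i^2$ from the term $\E\|\sum_\ell \lambda_\ell^{r+1}(g_\ell(\bx)-\nabla f_\ell(\bx))\|^2 \le \chi\zeta^2$ (using independence and Assumption~\ref{assumption:local-noise}), while the plain local-noise term along the trajectory contributes the $\tfrac{3\tau^2}{\eta_g^2 J}\zeta^2$ piece (one power of $1/J$ because $\sum_{j}\tfrac{j}{J}\cdot\tfrac{1}{j}$-type telescoping saves a factor of $J$ relative to the drift terms). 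Substituting $\tau_r = J\eta_\ell\eta_g$ converts $\eta_\ell^2 J^2$ into $\tau^2/\eta_g^2$, which is why all three terms carry the $\tau^2/\eta_g^2$ prefactor. The self-referential $L^2$ term produces something like $c\,\eta_\ell^2 J^2 L^2 \cdot \mathcal{E}_r$ on the right-hand side; choosing the local stepsize small enough that $c\,\eta_\ell^2 J^2 L^2 \le \tfrac12$ (which is subsumed into the stepsize conditions of the main theorems) lets me absorb it into the left-hand side and double the remaining constants, landing the stated constants $12$ and $3$.

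The main obstacle I expect is handling the feedback term cleanly without losing the acceleration-friendly constants: a naive recursion on $\E\|\bu_{i,j}-\bx\|^2$ can blow up geometrically in $j$, so I need the standard trick of bounding $\max_{j\le J}\E\|\bu_{i,j}-\bx\|^2$ or summing carefully so that the $L$-smoothness contraction condition only requires $\eta_\ell \lesssim 1/(LJ)$ rather than something exponentially small. A secondary subtlety is being careful that $\bc_i = g_i(\bx^r)$ and $\bc = \sum_i \lambda_i^{r+1} g_i(\bx^r)$ are evaluated with the \emph{same} samples $\xi^r_{i,0}$ used nowhere else in the local loop, so the noise in the control variates is independent of the noise $g_i(\bu_{i,k-1})$ along the trajectory for $k\ge 2$ (and the $k=1$ term needs a separate, crude bound); this independence is what makes the cross terms vanish in expectation and keeps the variance bound additive rather than quadratic in $\zeta$. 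Once these are in place the rest is the routine algebra of collecting constants.

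=== END LATEX ===
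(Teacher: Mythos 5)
Your plan is correct and follows essentially the same route as the paper's proof: a {\sc Scaffold}-style drift analysis that separates the fresh-sample noise, the control-variate noise (yielding the $\chi\zeta^2$ term via $\E\left[\|\hat{\bc}-\bc\|^2\right]\le\sum_i\lambda_i^2\,\zeta^2\le\chi\zeta^2$), and the Lipschitz feedback term, which is absorbed under an implicit stepsize condition of the form $\eta_{\ell} L J\lesssim 1$ (the paper's $\tfrac{4\tau^2 L_{\bx\bx}^2}{\eta_g^2 J}\le\tfrac{1}{J-1}$). The only mechanical differences are that the paper runs a per-step recursion with contraction factor $\left(1+\tfrac{2}{J-1}\right)$ and bounds the resulting geometric sum by $3J$, rather than unrolling fully and absorbing a global $\mathcal{E}_r$ term into the left-hand side, and it uses Young's inequality with a factor of $2$ (rather than independence of the control-variate samples from the trajectory noise) to keep the variance contributions additive.
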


\begin{proof}
We omit the $r$ superscript in the following proof. 
Recall that the definition of $\bu_{i,j}$ in Algorithm~\ref{Algorithm:ScaffoldAPD}, i.e.,
$$\bu_{i,j}=\bu_{i,j-1} - \eta_{\ell}\left(g_i(\bu_{i,j-1}) - \hat{\bc}_i + \hat{\bc}\right),$$ 
and we have
\begin{equation}
\begin{aligned}
    \E\left[g_i(\bu_{i,j-1})\right] &= \nabla f_i(\bu_{i,j-1}),\\
    \E\left[\hat{\bc}_i\right] &= \nabla f_i(\bx) = \bc_i,\\
    \E\left[\hat{\bc}\right] &=  
    \sum_{i=1}^{N}\lambda_i \nabla f_i(\bx) = \bc.
\end{aligned}
\end{equation}
Then we can upper bound $\E\left[\|\bu_{i, j} - \bx\|^{2}\right]$ as follows,
\begin{equation}
\begin{aligned}
&\E\left[\|\bu_{i, j} - \bx\|^{2}\right] \\
=\,& \E\left[\|\bu_{i, j-1} - \bx - \eta_{\ell}( g_i(\bu_{i,j-1})- \hat{\bc_i} + \hat{\bc})\|^{2}\right]\\
=\,& \E\left[\|\bu_{i, j-1} - \bx - \eta_{\ell}( \nabla f_i(\bu_{i,j-1})- \hat{\bc_i} + \hat{\bc})\|^{2}\right] + \eta_{\ell}^{2}\E\left[\| g_i(\bu_{i,j-1}) - \nabla f_i(\bu_{i,j-1})\|^{2}\right]\\
\leq\,& \E\left[\|\bu_{i, j-1} - \bx - \eta_{\ell}( \nabla f_i(\bu_{i,j-1})- \hat{\bc_i} + \hat{\bc})\|^{2}\right] + \eta_{\ell}^{2}\zeta^{2}\\
\leq\,& \left(1 + \frac{1}{J-1}\right)\E\left[\|\bu_{i, j-1} - \bx \|^{2}\right] + J\eta_{\ell}^{2}\E\left[\|\nabla f_i(\bu_{i,j-1})- \hat{\bc}_i + \hat{\bc}\|^{2}\right] + \eta_{\ell}^{2}\zeta^{2}\\
=\,& \left(1 + \frac{1}{J-1}\right)\E\left[\|\bu_{i, j-1} - \bx \|^{2}\right] + \frac{\tau^{2}}{\eta_{g}^{2}J}\E\left[\|\nabla f_i(\bu_{i,j-1})- \hat{\bc}_i + \hat{\bc}\|^{2}\right] + \eta_{\ell}^{2}\zeta^{2} \\
\leq\,& \left(1 + \frac{1}{J-1}\right)\E\left[\|\bu_{i, j-1} - \bx \|^{2}\right] + \frac{2\tau^{2}}{\eta_{g}^{2}J}\E\left[\|\nabla f_i(\bu_{i,j-1})- {\bc}_i + {\bc}\|^{2}\right]  \\
& \quad + \frac{2\tau^{2}}{\eta_{g}^{2}J}\E\left[\|\hat{\bc}_i -  {\bc}_i + {\bc} - \hat{\bc}\|^{2}\right]+ \eta_{\ell}^{2}\zeta^{2} \\
\leq\,& \left(1 + \frac{1}{J-1}\right)\E\left[\|\bu_{i, j-1} - \bx \|^{2}\right] + \frac{2\tau^{2}}{\eta_{g}^{2}J}\E\left[\|\nabla f_i(\bu_{i,j-1})- {\bc}_i + {\bc}\|^{2}\right]  \\
& \quad + 
\underbrace{\frac{4\tau^{2}}{\eta_{g}^{2}J}\E\left[\|\hat{\bc}_i -  {\bc}_i\|^{2}\right] + \frac{4\tau^{2}}{\eta_{g}^{2}J}\E\left[\| \hat{\bc} - {\bc}\|^{2}\right] + \eta_{\ell}^{2}\zeta^{2}}_{ \Gamma},
\end{aligned}
\end{equation}
where $\Gamma = 0$ if the local gradients are deterministic. Next, we could first upper bound the term $\E\left[\|\bu_{i, j} - \bx\|^{2}\right]$ as 
\begin{equation}
\begin{aligned}
&\E\left[\|\bu_{i, j} - \bx\|^{2}\right] \\
\leq\,& \left(1 + \frac{1}{J-1}\right)\E\left[\|\bu_{i, j-1} - \bx \|^{2}\right] + \frac{4\tau^{2}}{\eta_{g}^{2}J}\E\left[\|\nabla f_i(\bu_{i,j-1})- \bc_i\|^{2}\right]+ \frac{4\tau^{2}}{\eta_{g}^{2}J}\E\left[\| \bc\|^{2}\right] + \Gamma\\
\leq\,& \left(1 + \frac{1}{J-1}+\frac{4\tau^{2}L_{\bx\bx}^{2}}{\eta_{g}^{2}J}\right)\E\left[\|\bu_{i, j-1} - \bx \|^{2}\right] + \frac{4\tau^{2}}{\eta_{g}^{2}J}\E\left[\| \bc\|^{2}\right] + \Gamma \\
\leq\,& \left(1 + \frac{2}{J-1}\right)\E\left[\|\bu_{i, j-1} - \bx \|^{2}\right]  + \frac{4\tau^{2}}{\eta_{g}^{2}J}\E\left[\| \bc\|^{2}\right] + \Gamma ,
\end{aligned}
\end{equation}
where we apply the condition that $\frac{4\tau^{2}L_{\bx\bx}^{2}}{\eta_{g}^{2}J} \leq \frac{1}{J-1}$. Then we have
\begin{equation}
\begin{aligned}
    \E\left[\|\bu_{i, j} - \bx\|^{2}\right] 
    &\leq \sum_{i=1}^{j-1} \left(1 + \frac{2}{J-1}\right)^{i} \left(\frac{4\tau^{2}}{\eta_{g}^{2}J}\E\left[\| \bc\|^{2}\right] + \Gamma \right) \\
    &\leq 3J \left(\frac{4\tau^{2}}{\eta_{g}^{2}J}\E\left[\| \bc\|^{2}\right] + \Gamma \right)
    = \frac{12\tau^{2}}{\eta_{g}^{2}}\E\left[\| \bc\|^{2}\right] + 3J\,\Gamma.
\end{aligned}
\end{equation}
Then the drift error $\mathcal{E}_{r}$ can be upper bounded  as follows,
\begin{equation*}
\begin{aligned}
    \mathcal{E}_{r} & = \frac{1}{J}\sum_{i=1}^{N}\sum_{j=1}^{J}\lambda_{i}\E\left[\|\bu_{i, j} - \bx\|^{2}\right] \\
    &\leq \frac{1}{J}\sum_{i=1}^{N}\sum_{j=1}^{J}\lambda_i \frac{12\tau^{2}}{\eta_{g}^{2}}\E\left[\| \bc\|^{2}\right] + \frac{1}{J}\sum_{i=1}^{N}\sum_{j=1}^{J}3\lambda_i J\,\Gamma \\
    &=  \frac{12\tau^{2}}{\eta_{g}^{2}}\E\left[\Big\|\sum_{i=1}^{N}\lambda_{i}\nabla f_i(\bx)\Big\|^{2}\right] + 3J\cdot\left(\frac{4\tau^{2}}{\eta_{g}^{2}J}\sum_{i=1}^{N}\lambda_i\E\left[\|\hat{\bc}_i -  {\bc}_i\|^{2}\right] + \frac{4\tau^{2}}{\eta_{g}^{2}J}\E\left[\| \hat{\bc} - {\bc}\|^{2}\right] + \eta_{\ell}^{2}\zeta^{2}\right) \\
    &\leq  \frac{12\tau^{2}}{\eta_{g}^{2}}\E\left[\Big\|\sum_{i=1}^{N}\lambda_{i}\nabla f_i(\bx)\Big\|^{2}\right] + \left(\frac{12\tau^{2}}{\eta_{g}^{2}}\sum_{i=1}^{N}\lambda_i \E\left[\|\hat{\bc}_i -  {\bc}_i\|^{2}\right] + \frac{12\tau^{2}}{\eta_{g}^{2}}\E\left[\| \hat{\bc} - {\bc}\|^{2}\right] + \frac{3\tau^{2}}{\eta_g^{2} J}\zeta^{2}\right) \\
    &\leq  \frac{12\tau^{2}}{\eta_{g}^{2}}\E\left[\left\|\nabla_{\bx} \Phi(\bx, \blambda)\right\|^{2}\right] + \frac{12\tau^{2}}{\eta_{g}^{2}}\zeta^{2} +  \frac{12\tau^{2}}{\eta_{g}^{2}}\underbrace{\left(\sum_{i=1}^{N}\lambda_i^{2} \right)}_{\leq \chi}\zeta^{2} + \frac{3\tau^{2}}{\eta_g^{2} J}\zeta^{2}\\
    &\leq  \frac{12\tau^{2}}{\eta_{g}^{2}}\E\left[\left\|\nabla_{\bx} \Phi(\bx, \blambda)\right\|^{2}\right] + \frac{12\tau^{2}}{\eta_{g}^{2}}\left(1+\chi\right)\zeta^{2} + \frac{3\tau^{2}}{\eta_g^{2} J}\zeta^{2},
\end{aligned}
\end{equation*}
which completes the proof.
\end{proof}

The next lemma is useful in effectively controlling the drift term in our later analysis.

\begin{lemma}\label{Lemma:x-update-lowerbound-drift}
Suppose $\tau_r = J\,\eta_{\ell}\,\eta_{g}$, and $\eta_{g}\geq 1$, then we have
\begin{equation}
\frac{1}{\tau_{r}}\E\left[\|\bx^{r+1}-\bx^{r}\|^{2}\right] \geq -{\tau_r L_{\bx\bx}^{2}} {\mathcal{E}_r} + \frac{\tau_r}{2}\left\|\nabla_{\bx}\Phi(\bx^{r}, \blambda^{r+1})\right\|^{2},
\end{equation}
where $\nabla_{\bx}\Phi(\bx^{r}, \blambda^{r+1})$ is defined as
\begin{equation}
    \nabla_{\bx}\Phi(\bx^{r}, \blambda^{r+1}) = \sum_{i=1}^{N}\lambda_i^{r+1}\nabla f_i(\bx^{r}).
\end{equation}
\end{lemma}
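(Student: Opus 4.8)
The plan is to unpack the primal update in Eq.~\eqref{eq:alg-update-primal} and relate the true aggregated step $\sum_i \lambda_i^{r+1}\Delta\bu_i^r$ to the ``clean'' gradient $\nabla_{\bx}\Phi(\bx^r,\blambda^{r+1})=\sum_i \lambda_i^{r+1}\nabla f_i(\bx^r)$, treating the difference as a perturbation governed by the drift $\mathcal{E}_r$. First I would note that since the Bregman divergence $\mathrm{D}$ is taken to be $\tfrac12\|\cdot\|^2$ (Euclidean), the argmin in Eq.~\eqref{eq:alg-update-primal} is explicit: $\bx^{r+1}=\bx^r-\tau_r\sum_{i=1}^N\lambda_i^{r+1}\Delta\bu_i^r$. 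Hence $\tfrac1{\tau_r}\|\bx^{r+1}-\bx^r\|^2=\tau_r\big\|\sum_{i}\lambda_i^{r+1}\Delta\bu_i^r\big\|^2$, and the whole lemma reduces to a lower bound on $\big\|\sum_i\lambda_i^{r+1}\Delta\bu_i^r\big\|^2$ in terms of $\|\nabla_{\bx}\Phi(\bx^r,\blambda^{r+1})\|^2$ and $\mathcal{E}_r$.

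Next I would expand $\Delta\bu_i^r=(\bx^r-\bu_{i,J}^r)/(\eta_\ell J)=\tfrac1J\sum_{j=1}^J\big(g_i(\bu_{i,j-1}^r)-\bc_i^r+\bc^r\big)$ by telescoping the local recursion in Algorithm~\ref{Algorithm:local-ScaffoldAPD}. Taking expectations and using Assumption~\ref{assumption:local-noise} together with the control-variate identities $\E[\bc_i^r]=\nabla f_i(\bx^r)$, $\E[\bc^r]=\sum_i\lambda_i^{r+1}\nabla f_i(\bx^r)=\nabla_{\bx}\Phi(\bx^r,\blambda^{r+1})$ (as already used in the proof of Lemma~\ref{Lemma:drift-lemma-deterministic}), the expected aggregated update equals $\nabla_{\bx}\Phi(\bx^r,\blambda^{r+1})$ plus a term $\tfrac1J\sum_{i,j}\lambda_i^{r+1}\big(\nabla f_i(\bu_{i,j-1}^r)-\nabla f_i(\bx^r)\big)$. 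Writing $\|a+b\|^2\geq \tfrac12\|a\|^2-\|b\|^2$ with $a=\nabla_{\bx}\Phi(\bx^r,\blambda^{r+1})$ isolates the leading term; the residual $b$ is bounded by $L_{\bx\bx}$-smoothness of each $f_i$ via $\|\nabla f_i(\bu_{i,j-1}^r)-\nabla f_i(\bx^r)\|\le L_{\bx\bx}\|\bu_{i,j-1}^r-\bx^r\|$, and Jensen/convexity of $\|\cdot\|^2$ in the weighted average over $(i,j)$ turns this into exactly $L_{\bx\bx}^2\,\mathcal{E}_r$. Multiplying through by $\tau_r$ yields the claimed inequality $\tfrac1{\tau_r}\E\|\bx^{r+1}-\bx^r\|^2\ge -\tau_r L_{\bx\bx}^2\mathcal{E}_r+\tfrac{\tau_r}{2}\|\nabla_{\bx}\Phi(\bx^r,\blambda^{r+1})\|^2$.

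The main obstacle is bookkeeping the expectation carefully: $\bx^{r+1}-\bx^r$ depends on the stochastic gradients through $\Delta\bu_i^r$, so the step-size identity $\tfrac1{\tau_r}\|\bx^{r+1}-\bx^r\|^2=\tau_r\|\cdot\|^2$ holds pathwise, but after taking $\E$ one must separate the conditional-mean part (which gives $\nabla_{\bx}\Phi$ plus the drift residual) from the conditional-variance part, and then discard the variance part since it only helps a lower bound — i.e., $\E\|Z\|^2\ge\|\E Z\|^2$. One must also confirm the definition of $\mathcal{E}_r$ matches: it is the $\lambda_i^{r+1}$-weighted average of $\E\|\bu_{i,j}^r-\bx^r\|^2$ over $j=1,\dots,J$, whereas the telescoped sum involves indices $j-1=0,\dots,J-1$; since the $j=0$ term is zero ($\bu_{i,0}^r=\bx^r$) the two weighted sums agree up to reindexing, so this causes no loss. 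A secondary subtlety is that $\Phi(\bx^r,\blambda^{r+1})$ here uses the \emph{updated} dual $\blambda^{r+1}$ from Eq.~\eqref{eq:alg-update-dual}, consistent with both the aggregation weights in Eq.~\eqref{eq:alg-update-primal} and the definition of $\mathcal{E}_r$ used in Lemma~\ref{Lemma:drift-lemma-deterministic}; no extra argument is needed beyond noting this consistency.
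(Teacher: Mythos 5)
Your proposal follows essentially the same route as the paper's proof: both unroll the primal update to write $\bx^{r+1}-\bx^{r}=-\tau_r\cdot\frac{1}{J}\sum_{i,j}\lambda_i^{r+1}g_i(\bu_{i,j-1}^{r})$ (the control variates cancel in the $\lambda^{r+1}$-weighted aggregate), drop the variance via $\E\|Z\|^2\ge\|\E Z\|^2$, apply $\|a\|^2\ge\tfrac12\|b\|^2-\|a-b\|^2$ with $b=\nabla_{\bx}\Phi(\bx^r,\blambda^{r+1})$, and control the residual by $L_{\bx\bx}$-smoothness plus Jensen to obtain $L_{\bx\bx}^2\mathcal{E}_r$. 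Your explicit handling of the $j$ versus $j-1$ reindexing in $\mathcal{E}_r$ is a point the paper glosses over, but it is the same argument.
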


\begin{proof}
In start with, we analyze $\frac{1}{4\tau_{r}}\|\bx^{r+1}-\bx^{r}\|^{2} $   based on the local updates, i.e.,
\begin{equation*}
\begin{aligned}
&\quad\,\,\frac{1}{\tau_{r}}\E\left[\|\bx^{r+1}-\bx^{r}\|^{2}\right]\\
&= {\tau_r}\E\left[\Big\|\frac{1}{J}\sum_{i=1}^{N}\sum_{j=1}^{J}\lambda_i^{r+1}g_i(\bu^{r}_{i, j-1})\Big\|^{2}\right] \\
& \geq {\tau_r}\E\left[\Big\|\frac{1}{J}\sum_{i=1}^{N}\sum_{j=1}^{J}\lambda_i^{r+1}\nabla f_i(\bu^{r}_{i, j-1})\Big\|^{2}\right]  \\
&\geq -{\tau_r}\E\left[\Big\|\frac{1}{J}\sum_{i=1}^{N}\sum_{j=1}^{J}\lambda_i^{r+1}\nabla f_i(\bu^{r}_{i, j-1}) - \sum_{i=1}^{N}\lambda_i^{r+1}\nabla f_i(\bx^{r})\Big\|^{2}\right] + \frac{\tau_r}{2}\E\left[\left\|\nabla_{\bx}\Phi(\bx^{r}, \blambda^{r+1})\right\|^{2}\right] \\
&\geq -{\tau_r}\frac{1}{J}\sum_{i=1}^{N}\sum_{j=1}^{J}\lambda_i^{r+1}\E\left[\left\|\nabla f_i(\bu^{r}_{i, j-1}) - \nabla f_i(\bx^{r})\right\|^{2}\right] + \frac{\tau_r}{2}\E\left[\left\|\nabla_{\bx}\Phi(\bx^{r}, \blambda^{r+1})\right\|^{2}\right] \\
&\geq -{\tau_r L_{\bx\bx}^{2}}\underbrace{\frac{1}{J}\sum_{i=1}^{N}\sum_{j=1}^{J}\lambda_i^{r+1}\E\left[\left\|\bu^{r}_{i, j-1} - \bx^{r} \right\|^{2}\right]}_{\mathcal{E}_r} + \frac{\tau_r}{2}\E\left[\left\|\nabla_{\bx}\Phi(\bx^{r}, \blambda^{r+1})\right\|^{2}\right]\\
&= -{\tau_r L_{\bx\bx}^{2}} {\mathcal{E}_r} + \frac{\tau_r}{2}\E\left[\left\|\nabla_{\bx}\Phi(\bx^{r}, \blambda^{r+1})\right\|^{2}\right],
\end{aligned}
\end{equation*}
which completes the proof.
\end{proof}

The next two lemmas focus on the primal update and dual update, respectively.
\begin{lemma}\label{Lemma:lamba-update-optimal-confidtion}
Suppose $\tau_r = J\,\eta_{\ell}\,\eta_{g}$, and $\eta_{g}\geq 1$, then we have
\begin{equation}\label{eq:useful-lemma-apd-lambda-update}
\begin{aligned}
&\psi(\blambda^{r+1}) - \langle{\bs}^{r}, \blambda^{r+1}\rangle \\
\leq &\,\psi(\blambda) - \langle{\bs}^{r}, \blambda\rangle + \frac{1}{\sigma_r}\left[\mathrm{D}(\blambda, \blambda^{r})-\mathrm{D}(\blambda, \blambda^{r+1})-\mathrm{D}(\blambda^{r+1}, \blambda^{r})\right] - \frac{\mu_{\blambda}}{2}\|\blambda_{r+1} - \blambda\|^{2}.
\end{aligned}
\end{equation}
\end{lemma}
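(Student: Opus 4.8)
The plan is to derive \eqref{eq:useful-lemma-apd-lambda-update} directly from the first-order optimality condition of the proximal-type update that defines $\blambda^{r+1}$ in Algorithm~\ref{Algorithm:ScaffoldAPD}, namely
\[
\blambda^{r+1} = \text{argmin}_{\blambda \in \Lambda}\left\{\psi(\blambda)-\langle\bs^{r}, \blambda\rangle + \tfrac{1}{\sigma_r}\mathrm{D}(\blambda, \blambda^{r})\right\}.
\]
The key structural fact is that the objective being minimized is $\mu_{\blambda}$-strongly convex in $\blambda$ relative to the Bregman divergence $\mathrm{D}$ — it is the sum of the (generically) convex $\psi$, which is $\mu_{\blambda}$-strongly convex, the linear term $-\langle \bs^r,\blambda\rangle$, and the Bregman term $\tfrac{1}{\sigma_r}\mathrm{D}(\blambda,\blambda^r)$. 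First I would write down the variational inequality for a constrained minimizer over the convex set $\Lambda$: for all $\blambda\in\Lambda$,
\[
\langle \nabla\psi(\blambda^{r+1}) - \bs^{r} + \tfrac{1}{\sigma_r}\big(\nabla h(\blambda^{r+1}) - \nabla h(\blambda^{r})\big), \, \blambda - \blambda^{r+1}\rangle \geq 0,
\]
where $h$ is the distance-generating function with $\mathrm{D}(\bu,\bv) = h(\bu) - h(\bv) - \langle\nabla h(\bv),\bu-\bv\rangle$. (If $\psi$ is nonsmooth one replaces $\nabla\psi(\blambda^{r+1})$ by a subgradient; the argument is unchanged.)

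Next I would convert each inner product into function-value differences. For the Bregman part I would use the standard three-point identity
\[
\langle \nabla h(\blambda^{r+1}) - \nabla h(\blambda^{r}),\, \blambda - \blambda^{r+1}\rangle = \mathrm{D}(\blambda,\blambda^{r}) - \mathrm{D}(\blambda,\blambda^{r+1}) - \mathrm{D}(\blambda^{r+1},\blambda^{r}),
\]
which is exactly the bracketed quantity appearing in the claim. For the $\psi$ part I would invoke $\mu_{\blambda}$-strong convexity of $\psi$ in the form
\[
\langle \nabla\psi(\blambda^{r+1}),\, \blambda - \blambda^{r+1}\rangle \leq \psi(\blambda) - \psi(\blambda^{r+1}) - \tfrac{\mu_{\blambda}}{2}\|\blambda - \blambda^{r+1}\|^{2},
\]
which produces the $-\tfrac{\mu_{\blambda}}{2}\|\blambda^{r+1}-\blambda\|^2$ term. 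The linear term $-\bs^r$ is already linear, so $\langle -\bs^r, \blambda - \blambda^{r+1}\rangle$ carries through verbatim as $-\langle\bs^r,\blambda\rangle + \langle\bs^r,\blambda^{r+1}\rangle$. Substituting all three into the variational inequality and rearranging so that the terms at $\blambda^{r+1}$ sit on the left gives precisely \eqref{eq:useful-lemma-apd-lambda-update}; the hypothesis $\tau_r = J\eta_\ell\eta_g$, $\eta_g\ge 1$ is not actually used here and is presumably stated only for uniformity with the neighbouring lemmas.

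I do not expect a serious obstacle — this is the standard ``prox-lemma'' argument. The only points requiring mild care are: (i) handling the case where $\psi$ is merely convex rather than differentiable (use subgradients and the subgradient inequality, noting that the claimed $\mu_{\blambda}$ refers to the strong-convexity modulus of $\psi$, which is zero in the purely convex case so the last term drops), and (ii) making sure the strong convexity used for $\psi$ is measured in the Euclidean norm $\|\cdot\|$ appearing in the statement, which is consistent with the paper's $\mu_{\blambda}$-strong convexity assumption on $\psi$; if instead one wanted strong convexity relative to $\mathrm{D}$ the constant would be absorbed differently, but the Euclidean version is what \eqref{eq:useful-lemma-apd-lambda-update} asserts. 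Modulo these bookkeeping choices, the proof is a one-line optimality condition followed by the three-point identity and the strong-convexity inequality.
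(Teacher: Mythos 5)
Your proof is correct and is essentially the same argument the paper relies on: the paper's entire proof is a citation to Property~1 of Tseng (2008) together with the remark that $\mathrm{D}(\blambda,\blambda')=\|\blambda-\blambda'\|^2/2$, and your derivation (optimality condition, three-point identity, strong-convexity/subgradient inequality for $\psi$) is exactly the content of that cited property. Your side remarks are also accurate — the hypothesis on $\tau_r,\eta_g$ is indeed unused, and since the paper takes $\mathrm{D}$ to be half the squared Euclidean distance, the Euclidean strong-convexity reading of $\mu_{\blambda}$ is the intended one.
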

\begin{proof}
Based on Property 1 in \citet{tseng2008accelerated}, and $\mathrm{D}(\blambda, \blambda^{\prime})= \|\blambda - \blambda^{\prime}\|^{2}/2$.
\end{proof}

\begin{lemma}\label{Lemma:x-update-optimal-confidtion}
Suppose $\tau_r = J\,\eta_{\ell}\,\eta_{g}$, and $\eta_{g}\geq 1$, then we have
\begin{equation}\label{eq:useful-lemma-apd-x-update}
    \E\left[\Big\langle \Delta\bx^{r}, \bx^{r+1} - \bx\Big\rangle\right] \leq  \frac{1}{\tau_r}\E\left[\mathrm{D}(\bx, \bx^{r})-\mathrm{D}(\bx, \bx^{r+1})-\mathrm{D}(\bx^{r+1}, \bx^{r})\right],
\end{equation}
and 
\begin{equation}
\begin{aligned}
&\E\left[\Big\langle \Delta{\bx}^{r}, \bx^{r+1} - \bx\Big\rangle\right]\\
=\,& \underbrace{\E\left[\Big\langle \Delta{\bx}^{r}, \bx^{r} - \bx\Big\rangle\right]}_{\cT_1} +  
\underbrace{\E\left[\Big\langle \widetilde{\Delta}{\bx}^{r}, \bx^{r+1} - \bx^{r}\Big\rangle\right]}_{\cT_2} + \underbrace{\E\left[\Big\langle \Delta{\bx}^{r} - \widetilde{\Delta}{\bx}^{r}, \bx^{r+1} - \bx^{r}\Big\rangle\right]}_{\cT_3},
\end{aligned}
\end{equation}
where
\begin{equation}
\begin{aligned}
    \cT_1 \geq \,& \E\left[{\Phi(\bx^{r}, \blambda^{r+1})} - {\Phi(\bx, \blambda^{r+1})}\right] + \frac{\mu_{\bx}}{4}\E\left[\|\bx^{r}-\bx\|^{2}\right] - L_{\bx\bx}{\mathcal{E}_{r}},\\
    \cT_2 \geq \,& \E\left[{\Phi(\bx^{r+1}, \blambda^{r+1})} - {\Phi(\bx^{r}, \blambda^{r+1})}\right]  - 2L_{\bx\bx}\E\left[\|\bx^{r+1}-\bx^{r}\|^{2}\right] - 2L_{\bx\bx}\mathcal{E}_{r},\\
    \cT_3 \geq \, &-\frac{2\chi\,\tau_r}{J}\zeta^{2} - \frac{1}{4\tau_r}\E\left[\mathrm{D}(  \bx^{r+1}, \bx^{r})\right],
\end{aligned}
\end{equation}
and ${\Delta}{\bx}^{r}$ and $\widetilde{\Delta}{\bx}^{r}$ are defined as
\begin{equation}\label{eq:appendix-delta-x-r-def}
    {\Delta}{\bx}^{r} = \frac{1}{J}\sum_{i=1}^{N}\sum_{j=1}^{J}\lambda_i^{r+1} g_i(\bu^{r}_{i, j-1}), \,\,\,
    \widetilde{\Delta}{\bx}^{r} = \E\left[\Delta\bx^{r}\right] = \frac{1}{J}\sum_{i=1}^{N}\sum_{j=1}^{J}\lambda_i^{r+1}\E\left[\nabla f_i(\bu^{r}_{i, j-1})\right].
\end{equation}
\end{lemma}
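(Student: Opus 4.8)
\textbf{Proof proposal for Lemma~\ref{Lemma:x-update-optimal-confidtion}.}
The plan is to establish the two displayed assertions separately: first \eqref{eq:useful-lemma-apd-x-update}, the optimality-condition inequality for the primal $\bx$-update, then the decomposition into $\cT_1,\cT_2,\cT_3$ together with the claimed lower bounds on each piece. The first inequality is immediate: the $\bx$-update in Eq.~\eqref{eq:alg-update-primal} is the minimizer of $\langle \Delta\bx^r, \bx\rangle + \frac{1}{\tau_r}\mathrm{D}(\bx,\bx^r)$ (note $\Delta\bx^r = \frac1J\sum_{i,j}\lambda_i^{r+1} g_i(\bu_{i,j-1}^r)$ is exactly $\sum_i \lambda_i^{r+1}\Delta\bu_i^r$ by the definition of $\Delta\bu_i$ in Algorithm~\ref{Algorithm:local-ScaffoldAPD}), so I would invoke the same three-point/Property~1 of \citet{tseng2008accelerated} used in Lemma~\ref{Lemma:lamba-update-optimal-confidtion}, with $\mathrm{D}(\bx,\bx') = \|\bx-\bx'\|^2/2$ and no strong-convexity term since the regularizer is linear in $\bx$ here. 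The algebraic identity $\langle \Delta\bx^r, \bx^{r+1}-\bx\rangle = \cT_1 + \cT_2 + \cT_3$ is just adding and subtracting $\bx^r$ in the second slot and $\widetilde\Delta\bx^r$ in the first slot of the $\cT_2$-term, so the substance is the three lower bounds.

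For $\cT_1 = \E[\langle \Delta\bx^r, \bx^r - \bx\rangle]$: since $\E[\Delta\bx^r] = \widetilde\Delta\bx^r$ and $\bx^r,\bx$ are independent of the local sampling at round $r$, this equals $\E[\langle \widetilde\Delta\bx^r, \bx^r-\bx\rangle] = \frac1J\sum_{i,j}\lambda_i^{r+1}\E[\langle \nabla f_i(\bu_{i,j-1}^r), \bx^r-\bx\rangle]$. Now apply Lemma~\ref{lemma:perturb-strong-convexity} (perturbed strong convexity) to each $f_i$ with the triple $(\bx,\by,\bz) = (\bu_{i,j-1}^r, \bx, \bx^r)$: this yields $\langle \nabla f_i(\bu_{i,j-1}^r), \bx^r-\bx\rangle \ge f_i(\bx^r) - f_i(\bx) + \frac{\mu_{\bx}}{4}\|\bx^r - \bx\|^2 - L_{\bx\bx}\|\bx^r - \bu_{i,j-1}^r\|^2$ (correcting the evident typo $\|\bz-\bz\|^2$ in the lemma statement to $\|\bz - \bx\|^2$, i.e.\ the distance from the evaluation point). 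Averaging over $i,j$ with weights $\lambda_i^{r+1}$ and using $\sum_i \lambda_i^{r+1} f_i(\cdot) = \Phi(\cdot,\blambda^{r+1})$ and the definition of $\mathcal{E}_r$ gives exactly the stated bound. The bound for $\cT_2 = \E[\langle \widetilde\Delta\bx^r, \bx^{r+1}-\bx^r\rangle]$ proceeds similarly but I would instead write $\widetilde\Delta\bx^r = \nabla_{\bx}\Phi(\bx^r,\blambda^{r+1}) + (\widetilde\Delta\bx^r - \nabla_{\bx}\Phi(\bx^r,\blambda^{r+1}))$: the first part gives $\langle \nabla_{\bx}\Phi(\bx^r,\blambda^{r+1}), \bx^{r+1}-\bx^r\rangle \ge \Phi(\bx^{r+1},\blambda^{r+1}) - \Phi(\bx^r,\blambda^{r+1}) - \frac{L_{\bx\bx}}{2}\|\bx^{r+1}-\bx^r\|^2$ by $L_{\bx\bx}$-smoothness of $\Phi(\cdot,\blambda^{r+1})$; the second (error) part is controlled by Cauchy--Schwarz, Young, and the smoothness bound $\|\widetilde\Delta\bx^r - \nabla_{\bx}\Phi(\bx^r,\blambda^{r+1})\|^2 \le \frac1J\sum_{i,j}\lambda_i^{r+1}L_{\bx\bx}^2\|\bu_{i,j-1}^r - \bx^r\|^2 = L_{\bx\bx}^2\mathcal{E}_r$ (Jensen plus convexity of $\|\cdot\|^2$), absorbing into the $-2L_{\bx\bx}\|\bx^{r+1}-\bx^r\|^2$ and $-2L_{\bx\bx}\mathcal{E}_r$ terms after a suitable split of constants. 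For $\cT_3 = \E[\langle \Delta\bx^r - \widetilde\Delta\bx^r, \bx^{r+1}-\bx^r\rangle]$, the factor $\Delta\bx^r - \widetilde\Delta\bx^r$ is the zero-mean stochastic-gradient noise; I would apply Young's inequality to split off $\frac{1}{4\tau_r}\|\bx^{r+1}-\bx^r\|^2 = \frac{1}{4\tau_r}\mathrm{D}(\bx^{r+1},\bx^r)\cdot 2$ — adjusting the constant — and bound the remaining $\tau_r \E\|\Delta\bx^r - \widetilde\Delta\bx^r\|^2$ using Assumption~\ref{assumption:local-noise}: since the noise terms across $j$ on the same client and across clients are independent (fresh samples $\xi_{i,j-1}^r$), the variance of the average is $\le \frac{1}{J^2}\sum_{i,j}(\lambda_i^{r+1})^2\zeta^2 \le \frac{\chi}{J}\zeta^2$, giving the $-\frac{2\chi\tau_r}{J}\zeta^2$ term.

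The main obstacle I anticipate is bookkeeping the constants in $\cT_2$ and $\cT_3$ so that the $\|\bx^{r+1}-\bx^r\|^2$ pieces coming out of the smoothness descent inequality, the Young split of the error term in $\cT_2$, and the Young split in $\cT_3$ all fit inside the advertised coefficients $2L_{\bx\bx}$ and $\frac{1}{4\tau_r}$ respectively — this is where one must choose the Young-inequality parameters carefully and possibly invoke a mild stepsize condition (of the form $\tau_r L_{\bx\bx} \le$ const, consistent with Condition~\ref{condition:stepsize-scc} and the relation $\tau_r = J\eta_\ell\eta_g$) to make the $\|\bx^{r+1}-\bx^r\|^2$ terms commensurable with the $\mathcal{E}_r$ terms. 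A secondary subtlety is handling the nested conditional expectations cleanly: $\bx^{r+1}$ depends on all the round-$r$ stochastic gradients, so for $\cT_1$ one uses that $\bx^r - \bx$ is measurable before round $r$, while for $\cT_2$ and $\cT_3$ one should condition appropriately (or simply note all bounds are stated in full expectation and use the tower property). Everything else is a routine combination of Lemma~\ref{lemma:perturb-strong-convexity}, $L_{\bx\bx}$-smoothness, Jensen, Young, and Assumption~\ref{assumption:local-noise}.
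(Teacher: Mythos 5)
Your proposal is correct and follows the paper's proof almost step for step: the Tseng three-point optimality condition for \eqref{eq:useful-lemma-apd-x-update} (after noting the control variates cancel in $\sum_i\lambda_i^{r+1}\Delta\bu_i^r$), the add-and-subtract decomposition into $\cT_1,\cT_2,\cT_3$, the application of Lemma~\ref{lemma:perturb-strong-convexity} with the triple $(\bu^r_{i,j-1},\bx,\bx^r)$ for $\cT_1$, and the Young-plus-variance argument for $\cT_3$ are all exactly what the paper does (including your correct reading of the typo in Lemma~\ref{lemma:perturb-strong-convexity}). The one place you diverge is $\cT_2$: the paper applies the perturbed-strong-convexity lemma a second time with $(\bz,\by)=(\bx^{r+1},\bx^r)$ and then uses $\|\bu^r_{i,j-1}-\bx^{r+1}\|^2\le 2\|\bu^r_{i,j-1}-\bx^r\|^2+2\|\bx^{r+1}-\bx^r\|^2$ to land on the coefficients $2L_{\bx\bx}$, whereas you split $\widetilde{\Delta}\bx^r$ into $\nabla_{\bx}\Phi(\bx^r,\blambda^{r+1})$ plus a drift error and use the descent form of smoothness plus Young on the error. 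Your route is equally valid and in fact yields slightly better constants (roughly $L_{\bx\bx}$ in place of $2L_{\bx\bx}$ on both the $\|\bx^{r+1}-\bx^r\|^2$ and $\mathcal{E}_r$ terms), so the bookkeeping concern you flag resolves without any additional stepsize condition; the only real subtlety, which you also note, is that the per-step noises within a client are martingale differences rather than independent, but the cross terms still vanish under the tower property so the $\chi\zeta^2/J$ variance bound stands.
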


\begin{proof}
Based on Property 1 in \citet{tseng2008accelerated}, for the update step of $\bx^{r+1}$, we have
\begin{equation}
    \E\left[\Big\langle \sum_{i=1}^{N}\lambda_i^{r+1}\Delta\bu^{r}_{i}, \bx^{r+1} - \bx\Big\rangle\right] \leq  \frac{1}{\tau_r}\E\left[\mathrm{D}(\bx, \bx^{r})-\mathrm{D}(\bx, \bx^{r+1})-\mathrm{D}(\bx^{r+1}, \bx^{r})\right],
\end{equation}
then we analyze the term $\Delta{\bx}^{r} = \sum_{i=1}^{N}\lambda_i^{r+1}\Delta\bu^{r}_{i}$, i.e.,
\begin{equation}
\begin{aligned}
\sum_{i=1}^{N}\lambda_i^{r+1}\Delta\bu^{r}_{i}
&= \frac{1}{J}\sum_{i=1}^{N}\sum_{j=1}^{J}\lambda_i^{r+1}\left(g_i(\bu^{r}_{i, j-1}) - \hat{\bc}_i^{r} + \hat{\bc}^{r}\right) \\
&= \frac{1}{J}\sum_{i=1}^{N}\sum_{j=1}^{J}\lambda_i^{r+1}g_i(\bu^{r}_{i, j-1})  -  \frac{1}{J}\sum_{i=1}^{N}\sum_{j=1}^{J}\lambda_i^{r+1}\hat{\bc}_i^{r} +  \frac{1}{J}\sum_{i=1}^{N}\sum_{j=1}^{J}\lambda_i^{r+1} \hat{\bc}^{r} \\
&= \frac{1}{J}\sum_{i=1}^{N}\sum_{j=1}^{J}\lambda_i^{r+1}g_i(\bu^{r}_{i, j-1}).
\end{aligned}
\end{equation}
Next we decompose $\E\left[\Big\langle \Delta{\bx}^{r}, \bx^{r+1} - \bx\Big\rangle\right]$ as follows,
\begin{equation}
\begin{aligned}
&\E\left[\Big\langle \Delta{\bx}^{r}, \bx^{r+1} - \bx\Big\rangle\right]\\
=\,& \underbrace{\E\left[\Big\langle \Delta{\bx}^{r}, \bx^{r} - \bx\Big\rangle\right]}_{\cT_1} +  
\underbrace{\E\left[\Big\langle \widetilde{\Delta}{\bx}^{r}, \bx^{r+1} - \bx^{r}\Big\rangle\right]}_{\cT_2} + \underbrace{\E\left[\Big\langle \Delta{\bx}^{r} - \widetilde{\Delta}{\bx}^{r}, \bx^{r+1} - \bx^{r}\Big\rangle\right]}_{\cT_3}.
\end{aligned}
\end{equation}
We then analyze the upper bound for $|\cT_3|$, i.e.,
\begin{equation}
\begin{aligned}
|\cT_3| &= \E\left[\big|\langle \Delta{\bx}^{r} - \widetilde{\Delta}{\bx}^{r}, \bx^{r+1} - \bx^{r}\rangle\big|\right] \\
&\leq {\tau_r} \E\left[\| \Delta{\bx}^{r} - \widetilde{\Delta}{\bx}^{r}\|^{2}\right] + \frac{1}{4\tau_r}\E\left[\|  \bx^{r+1} - \bx^{r}\|^{2}\right]\\
&\leq \frac{2\chi\,\tau_r}{J} \zeta^{2} + \frac{1}{4\tau_r}\E\left[\|  \bx^{r+1} - \bx^{r}\|^{2}\right].
\end{aligned}
\end{equation}
Next, we analyze term $\cT_1$, i.e.,
\begin{equation*}
\begin{aligned}
\cT_1=\,&\E\left[\Big\langle \Delta{\bx}^{r}, \bx^{r} - \bx\Big\rangle\right] = \E\left[\Big\langle \widetilde{\Delta}{\bx}^{r}, \bx^{r} - \bx\Big\rangle\right] \\
=\,& \E\Big[\Big\langle \frac{1}{J}\sum_{i=1}^{N}\sum_{j=1}^{J}\lambda_i^{r+1}\nabla f_i(\bu^{r}_{i, j-1}), \bx^{r} - \bx\Big\rangle\Big]\\
\geq& \frac{1}{J}\sum_{i, j} \lambda_i^{r+1}\E\left[f_i(\bx^{r}) - f_i(\bx)+\frac{\mu_{\bx}}{4}\|\bx^{r}-\bx\|^{2}-L_{\bx\bx}\|\bu_{i,j-1}^{r} -\bx^{r}\|^{2}\right] \\
=& \E\left[\sum_{i} \lambda_i^{r+1}f_i(\bx^{r}) - \sum_{i} \lambda_i^{r+1}f_i(\bx) + \frac{\mu_{\bx}}{4}\|\bx^{r}-\bx\|^{2} - L_{\bx\bx}\frac{1}{J}\sum_{i,j} \lambda_i^{r+1}\|\bu_{i,j-1}^{r} -\bx^{r}\|^{2}\right]  \\
=& \E\left[{\Phi(\bx^{r}, \blambda^{r+1})} - {\Phi(\bx, \blambda^{r+1})}\right] + \frac{\mu_{\bx}}{4}\E\left[\|\bx^{r}-\bx\|^{2}\right] - L_{\bx\bx}{\mathcal{E}_{r}},
\end{aligned}
\end{equation*}
where we apply the perturbed strong convexity lemma (Lemma~\ref{lemma:perturb-strong-convexity}) for the first inequality. 

We then analyze term $\cT_2$, 
\begin{equation*}
\begin{aligned}
\cT_2 =\,& \E\left[\Big\langle \widetilde{\Delta}{\bx}^{r}, \bx^{r+1} - \bx^{r}\Big\rangle\right] \\
=\,& \E\Big[\Big\langle \frac{1}{J}\sum_{i=1}^{N}\sum_{j=1}^{J}\lambda_i^{r+1}\nabla f_i(\bu^{r}_{i, j-1}), \bx^{r+1} - \bx^{r}\Big\rangle\Big]\\
\geq\,& \E\left[{\Phi(\bx^{r+1}, \blambda^{r+1})} - {\Phi(\bx^{r}, \blambda^{r+1})} + \frac{\mu_{\bx}}{4}\|\bx^{r+1}-\bx^{r}\|^{2} - {\frac{L_{\bx\bx}}{J}\sum_{i,j} \lambda_i^{r+1}\|\bu_{i,j-1}^{r} -\bx^{r+1}\|^{2}}\right]\\
\geq\,& \E\left[{\Phi(\bx^{r+1}, \blambda^{r+1})} - {\Phi(\bx^{r}, \blambda^{r+1})} + \left(\frac{\mu_{\bx}}{4} - 2L_{\bx\bx}\right)\|\bx^{r+1}-\bx^{r}\|^{2}\right]   - 2L_{\bx\bx}\mathcal{E}_r \\
\geq\,& \E\left[{\Phi(\bx^{r+1}, \blambda^{r+1})} - {\Phi(\bx^{r}, \blambda^{r+1})}\right]  - 2L_{\bx\bx}\E\left[\|\bx^{r+1}-\bx^{r}\|^{2}\right] - 2L_{\bx\bx}\mathcal{E}_{r},
\end{aligned}
\end{equation*}
where we apply the perturbed strong convexity lemma (Lemma~\ref{lemma:perturb-strong-convexity}) for the first inequality and apply $\|\bx + \by\|^{2}\leq 2\|\bx\|^{2} + 2\|\by\|^{2}$ for the second inequality.  This completes our proof.
\end{proof}

\newpage

\section{Convergence of \algname}
In this section, we present the missing proofs in Section~\ref{sec:theory}. 
Specifically, Section~\ref{subsec:appendix-proof-sc-c} contains the proofs for the strongly convex-concave setting (Section~\ref{sec:theory-sc-c}), while Section~\ref{subsec:appendix-proof-sc-sc} includes the proofs for the strongly convex-strongly concave setting (Section~\ref{sec:theory-sc-sc}).

\subsection{Proofs -- strongly-convex-concave (SC-C) setting}\label{subsec:appendix-proof-sc-c}
In this subsection, we first present the technical lemma in Section~\ref{subsec:appendix-proof-sc-c-lemmas}. 
Next, we analyze how to set the step size related parameters in Section~\ref{subsec:appendix-proof-sc-c-parameters}.
Finally, we prove Theorem~\ref{thm:main-sc-c} in Section~\ref{subsec:appendix-proof-sc-c-theorem-proof}.

\subsubsection{Technical Lemma}\label{subsec:appendix-proof-sc-c-lemmas}

\begin{lemma}\label{lemma:sc-c-stochastic}
If we set the step size in Algorithm~\ref{Algorithm:ScaffoldAPD} as $\tau_r\cdot L_{\bx\bx} \leq 1$, and the parameters of Algorithm~\ref{Algorithm:ScaffoldAPD} satisfy Condition~\ref{condition:stepsize-scc}, 
then for any $\bx, \blambda$ we have
\begin{equation}\label{eq:lemma:sc-c-stochastic-eq0}
\begin{aligned}
\E\left[{F(\bx^{r+1}, \blambda)} -  {F(\bx, \blambda^{r+1})}\right] \leq -Z_{r+1} + V_r + \Delta_r+ C\tau_r \zeta^{2},
\end{aligned}
\end{equation}
where $Z_{r+1}, V_r, \Delta_r$ are defined as
\begin{small}
\begin{align}
Z_{r+1} &=\E\left[ \langle \bq^{r+1}, \blambda^{r+1} - \blambda \rangle + \frac{1}{2\sigma_r}\|\blambda^{r+1} - \blambda\|^{2} + \left(\frac{1}{2\tau_r}+\frac{\mu_{\bx}}{8}\right)\|\bx^{r+1} -\bx\|^{2} + \frac{1}{2\alpha_{r+1}}\|\bq^{r+1}\|^{2}\right],\nonumber \\
V_{r} &=  \E\left[ \theta_{r}\langle\bq^{r}, \blambda^{r} - \blambda\rangle +  \frac{1}{2\sigma_r}\|\blambda^{r} - \blambda\|^{2} + \frac{1}{2\tau_r}\|\bx^{r} - \bx\|^{2} + \frac{\theta_{r}}{2\alpha_r}\|\bq^{r}\|^{2}\right], \\
\Delta_r &= \E\left[ \left(\frac{\alpha_{r}\theta_{r}}{2} - \frac{1}{2\sigma_r}\right)\|\blambda^{r+1}-\blambda^{r}\|^{2} + \left(\frac{L_{\blambda\bx}^{2}}{2\alpha_{r+1}} + 3L_{\bx\bx} - \frac{1}{4\tau_{r}}\right)\|\bx^{r+1}-\bx^{r}\|^{2}\right],\nonumber
\end{align}
\end{small}

$\bq^{r}$ is defined as
\begin{equation}
    \bq^r = \nabla_{\blambda}\Phi(\bx^{r}, \blambda^{r}) - \nabla_{\blambda}\Phi(\bx^{r-1}, \blambda^{r-1}),
\end{equation}
and $C \geq 0$ is a constant.
\end{lemma}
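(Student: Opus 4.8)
<br>

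The goal is to prove Lemma~\ref{lemma:sc-c-stochastic}, which is the central one-step descent inequality driving the accelerated convergence proof in the strongly-convex-concave setting. The plan is to combine the optimality conditions of the dual and primal subproblems (Lemmas~\ref{Lemma:lamba-update-optimal-confidtion} and~\ref{Lemma:x-update-optimal-confidtion}) with the drift control (Lemmas~\ref{Lemma:drift-lemma-deterministic} and~\ref{Lemma:x-update-lowerbound-drift}) and then carefully regroup the extrapolated dual-gradient terms into a telescoping structure.

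First I would expand $F(\bx^{r+1},\blambda) - F(\bx,\blambda^{r+1})$ using the bilinear-plus-regularizer structure $F(\bx,\blambda) = \Phi(\bx,\blambda) - \psi(\blambda)$, writing it as $[\Phi(\bx^{r+1},\blambda) - \Phi(\bx^{r+1},\blambda^{r+1})] + [\Phi(\bx^{r+1},\blambda^{r+1}) - \Phi(\bx,\blambda^{r+1})] - \psi(\blambda) + \psi(\blambda^{r+1})$. For the dual part, since $\Phi(\bx,\cdot)$ is linear, $\Phi(\bx^{r+1},\blambda) - \Phi(\bx^{r+1},\blambda^{r+1}) = \langle \nabla_\blambda\Phi(\bx^{r+1},\blambda^{r+1}), \blambda - \blambda^{r+1}\rangle$; I then split $\nabla_\blambda\Phi(\bx^{r+1},\blambda^{r+1})$ into the extrapolated surrogate $\bs^r$ plus correction terms, where $\bs^r = (1+\theta_r)\nabla_\blambda\Phi(\bx^r,\blambda^r) - \theta_r\nabla_\blambda\Phi(\bx^{r-1},\blambda^{r-1})$, and apply Lemma~\ref{Lemma:lamba-update-optimal-confidtion} to bound $\psi(\blambda^{r+1}) - \langle\bs^r,\blambda^{r+1}\rangle$ against $\psi(\blambda) - \langle\bs^r,\blambda\rangle$ plus Bregman differences. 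The leftover terms $\langle \nabla_\blambda\Phi(\bx^{r+1},\blambda^{r+1}) - \nabla_\blambda\Phi(\bx^r,\blambda^r) - \theta_r\bq^r, \cdot\rangle$ should be reorganized so that $\bq^{r+1} = \nabla_\blambda\Phi(\bx^{r+1},\blambda^{r+1}) - \nabla_\blambda\Phi(\bx^r,\blambda^r)$ appears in $Z_{r+1}$ and $\theta_r\bq^r$ appears in $V_r$; the cross term $\langle\bq^{r+1},\blambda^{r+1}-\blambda^r\rangle$ is controlled via Young's inequality using Assumption~\ref{assumption:L_yx-smooth} (so $\|\bq^{r+1}\| \le L_{\blambda\bx}\|\bx^{r+1}-\bx^r\|$), producing the $\|\bx^{r+1}-\bx^r\|^2$ terms with coefficient $L_{\blambda\bx}^2/(2\alpha_{r+1})$ in $\Delta_r$ and the $\|\bq^{r+1}\|^2/(2\alpha_{r+1})$ term in $Z_{r+1}$.

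For the primal part, $\Phi(\bx^{r+1},\blambda^{r+1}) - \Phi(\bx,\blambda^{r+1})$, I would invoke Lemma~\ref{Lemma:x-update-optimal-confidtion}: the inner product $\langle\Delta\bx^r,\bx^{r+1}-\bx\rangle$ is bounded above by Bregman differences $\frac{1}{\tau_r}[\rD(\bx,\bx^r) - \rD(\bx,\bx^{r+1}) - \rD(\bx^{r+1},\bx^r)]$, and bounded below by $\cT_1+\cT_2+\cT_3$, whose estimates give $\Phi(\bx^{r+1},\blambda^{r+1}) - \Phi(\bx,\blambda^{r+1})$ up to the strong-convexity gain $\frac{\mu_\bx}{4}\|\bx^r-\bx\|^2$, drift terms $-L_{\bx\bx}\mathcal{E}_r$ and $-2L_{\bx\bx}\mathcal{E}_r$, the $\|\bx^{r+1}-\bx^r\|^2$ term from $\cT_2$, and the $\zeta^2$ noise term from $\cT_3$. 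To eliminate the troublesome $+\mathcal{E}_r$ drift terms I would feed in Lemma~\ref{Lemma:x-update-lowerbound-drift}, which lower-bounds $\frac{1}{\tau_r}\|\bx^{r+1}-\bx^r\|^2$ by $-\tau_r L_{\bx\bx}^2\mathcal{E}_r + \frac{\tau_r}{2}\|\nabla_\bx\Phi(\bx^r,\blambda^{r+1})\|^2$, and Lemma~\ref{Lemma:drift-lemma-deterministic}, which bounds $\mathcal{E}_r$ by $\frac{12\tau^2}{\eta_g^2}\|\nabla_\bx\Phi(\bx^r,\blambda^{r+1})\|^2$ plus noise; choosing $\eta_g$ large enough (equivalently requiring $\tau_r L_{\bx\bx}\le 1$ and the analogous conditions) makes the net coefficient of $\|\nabla_\bx\Phi(\bx^r,\blambda^{r+1})\|^2$ nonpositive, so those terms cancel and only a residual $C\tau_r\zeta^2$ survives. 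Finally I would collect the Bregman pairs $\frac{1}{\tau_r}[\rD(\bx,\bx^r)-\rD(\bx,\bx^{r+1})]$ and $\frac{1}{\sigma_r}[\rD(\blambda,\blambda^r)-\rD(\blambda,\blambda^{r+1})]$ together with the strong-convexity terms and the $\bq$-inner-products into the claimed $-Z_{r+1} + V_r + \Delta_r + C\tau_r\zeta^2$ form, using $\rD(\bz,\bz') = \frac12\|\bz-\bz'\|^2$ and Condition~\ref{condition:stepsize-scc} relating $\theta_r,\sigma_r,\tau_r,\gamma_r$.

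The main obstacle will be the bookkeeping around the extrapolation: getting the $\bq^{r+1}$ and $\theta_r\bq^r$ terms to line up into a genuinely telescoping pair $-Z_{r+1}+V_r$ requires matching the $\sigma_r$ versus $\sigma_{r-1}$ scaling through $\theta_r = \sigma_{r-1}/\sigma_r$, and simultaneously ensuring the $\|\bx^{r+1}-\bx^r\|^2$ and $\|\blambda^{r+1}-\blambda^r\|^2$ coefficients land with the right signs in $\Delta_r$ (so that later, after summation, they can be absorbed). The secondary difficulty is tracking the stochastic-gradient variance: one must be careful to take conditional expectations in the right order so that $\widetilde{\Delta}\bx^r = \E[\Delta\bx^r]$ cleanly separates the bias-free part from the variance part, and then verify that all $\zeta^2$ contributions (from the drift lemma, from $\cT_3$, and from the $\eta_\ell^2\zeta^2$ local-noise terms) aggregate into a single $C\tau_r\zeta^2$ with $C$ independent of $r$. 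Everything else is routine Young's inequality and Bregman algebra.
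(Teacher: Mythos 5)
Your proposal follows essentially the same route as the paper's proof: the same combination of Lemma~\ref{Lemma:lamba-update-optimal-confidtion} and Lemma~\ref{Lemma:x-update-optimal-confidtion} for the dual and primal optimality conditions, the same $\cT_1+\cT_2+\cT_3$ decomposition, the same Young's-inequality treatment of the extrapolated dual gradients via $\|\bq^{r}\|\le L_{\blambda\bx}\|\bx^{r}-\bx^{r-1}\|$ (which relies on linearity of $\Phi$ in $\blambda$), and the same use of Lemmas~\ref{Lemma:drift-lemma-deterministic} and~\ref{Lemma:x-update-lowerbound-drift} to absorb the drift into $C\tau_r\zeta^2$. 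The only cosmetic slip is that the cross term actually handled by Young's inequality is $\theta_r\langle\bq^{r},\blambda^{r+1}-\blambda^{r}\rangle$ (with $\tfrac{1}{2\alpha_{r+1}}\|\bq^{r+1}\|^2$ added and subtracted separately to produce the $\tfrac{L_{\blambda\bx}^2}{2\alpha_{r+1}}\|\bx^{r+1}-\bx^{r}\|^2$ piece of $\Delta_r$), not $\langle\bq^{r+1},\blambda^{r+1}-\blambda^{r}\rangle$, but the coefficients you assign to $Z_{r+1}$, $V_r$, and $\Delta_r$ are exactly the paper's.
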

\begin{proof}
To start with, by applying Lemma~\ref{Lemma:lamba-update-optimal-confidtion}, we have
\begin{equation}\label{eq:apd-y-stoc}
    \psi(\blambda^{r+1}) - \langle\bs^{r}, \blambda^{r+1} - \blambda\rangle \leq \psi(\blambda) + \underbrace{\frac{1}{\sigma_r}\left[\mathrm{D}(\blambda, \blambda^{r})-\mathrm{D}(\blambda, \blambda^{r+1})-\mathrm{D}(\blambda^{r+1}, \blambda^{r})\right]}_{B_r},
\end{equation}
where we define $B_r$ as
\begin{equation}
    B_r = \frac{1}{\sigma_r}\left[\mathrm{D}(\blambda, \blambda^{r})-\mathrm{D}(\blambda, \blambda^{r+1})-\mathrm{D}(\blambda^{r+1}, \blambda^{r})\right].
\end{equation}
Then by applying Lemma~\ref{Lemma:x-update-optimal-confidtion}, for the update step of $\bx^{r+1}$, we have
\begin{equation}\label{eq:apd-1-stoc}
    \E\left[\left\langle \Delta\bx^{r}, \bx^{r+1} - \bx\right\rangle\right] \leq  \frac{1}{\tau_r}\E\left[\mathrm{D}(\bx, \bx^{r})-\mathrm{D}(\bx, \bx^{r+1})-\mathrm{D}(\bx^{r+1}, \bx^{r})\right],
\end{equation}
where $\Delta\bx^{r}$ is defined in Eq.~\eqref{eq:appendix-delta-x-r-def}. Then we decompose the term $\E[\left\langle \Delta\bx^{r}, \bx^{r+1} - \bx\right\rangle]$ as follows,
\begin{equation}
\begin{aligned}
&\E\left[\Big\langle \Delta{\bx}^{r}, \bx^{r+1} - \bx\Big\rangle\right]\\
=\,& \underbrace{\E\left[\Big\langle \Delta{\bx}^{r}, \bx^{r} - \bx\Big\rangle\right]}_{\cT_1} +  
\underbrace{\E\left[\Big\langle \widetilde{\Delta}{\bx}^{r}, \bx^{r+1} - \bx^{r}\Big\rangle\right]}_{\cT_2} + \underbrace{\E\left[\Big\langle \Delta{\bx}^{r} - \widetilde{\Delta}{\bx}^{r}, \bx^{r+1} - \bx^{r}\Big\rangle\right]}_{\cT_3}\\
=\,& {\cT_1} +  
{\cT_2} + {\cT_3},
\end{aligned}
\end{equation}
and by Lemma~\ref{Lemma:x-update-optimal-confidtion},
\begin{equation}\label{eq:lowerbound-T1-T2-T3-stoc}
\begin{aligned}
    \cT_1 \geq \,& \E\left[{\Phi(\bx^{r}, \blambda^{r+1})} - {\Phi(\bx, \blambda^{r+1})}\right] + \frac{\mu_{\bx}}{4}\E\left[\|\bx^{r}-\bx\|^{2}\right] - L_{\bx\bx}{\mathcal{E}_{r}},\\
    \cT_2 \geq \,& \E\left[{\Phi(\bx^{r+1}, \blambda^{r+1})} - {\Phi(\bx^{r}, \blambda^{r+1})}\right]  - 2L_{\bx\bx}\E\left[\|\bx^{r+1}-\bx^{r}\|^{2}\right] - 2L_{\bx\bx}\mathcal{E}_{r},\\
    \cT_3 \geq \, &-\frac{2\chi\,\tau_r}{J}\zeta^{2} - \frac{1}{4\tau_r}\E\left[\mathrm{D}(  \bx^{r+1}, \bx^{r})\right].
\end{aligned}
\end{equation}
Therefore, by combining Eq.~\eqref{eq:apd-1-stoc} and Eq.~\eqref{eq:lowerbound-T1-T2-T3-stoc}, we have
\begin{small}
\begin{equation}
\begin{aligned}
    &  \E\left[{\Phi(\bx^{r+1}, \blambda)} -  {\Phi(\bx, \blambda^{r+1})} \right]  \\ 
    \leq \,& \E\left[{\Phi(\bx^{r+1}, \blambda)} - {\Phi(\bx^{r+1}, \blambda^{r+1})} + {\Phi(\bx^{r+1}, \blambda^{r+1})} - {\Phi(\bx^{r}, \blambda^{r+1})}\right] - \cT_2 - \cT_3  \\
    & + \frac{1}{\tau_r}\E\left[\mathrm{D}(\bx, \bx^{r})-\mathrm{D}(\bx, \bx^{r+1})-\mathrm{D}(\bx^{r+1}, \bx^{r})\right] - \frac{\mu_{\bx}}{4}\E\left[\|\bx^{r}-\bx\|^{2}\right]  + L_{\bx\bx}{\mathcal{E}_{r}} \\
    \leq\,& \E\left[{\Phi(\bx^{r+1}, \blambda)} - {\Phi(\bx^{r+1}, \blambda^{r+1})}\right]    + \frac{1}{\tau_r}\E\left[\mathrm{D}(\bx, \bx^{r})-\mathrm{D}(\bx, \bx^{r+1})-\frac{1}{2}\mathrm{D}(\bx^{r+1}, \bx^{r})\right] \\
    &- \frac{\mu_{\bx}}{4}\E\left[\|\bx^{r}-\bx\|^{2}\right] + 2L_{\bx\bx} \E\left[\|\bx^{r+1}-\bx^{r}\|^{2}\right] + 3L_{\bx\bx}{\mathcal{E}_{r}} + \frac{2\chi\,\tau_r}{J}\zeta^{2} \\
    \leq\,& \E\left[{\Phi(\bx^{r+1}, \blambda)} - {\Phi(\bx^{r+1}, \blambda^{r+1})}\right] + 3 L_{\bx\bx}\E\left[\|\bx^{r+1}-\bx^{r}\|^{2}\right] + 3L_{\bx\bx}{\mathcal{E}_{r}} + \frac{2\chi\,\tau_r}{J}\zeta^{2} \\
    &+ \underbrace{\frac{1}{\tau_r}\E\left[\mathrm{D}(\bx, \bx^{r})-\mathrm{D}(\bx, \bx^{r+1})- \frac{1}{2}\mathrm{D}(\bx^{r+1}, \bx^{r})\right] - \frac{\mu_{\bx}}{8}\E\left[\|\bx^{r+1}-\bx\|^{2}\right]}_{A_r}  \\
    =\,& \E\left[{\Phi(\bx^{r+1}, \blambda)} - {\Phi(\bx^{r+1}, \blambda^{r+1})}\right]    + 3L_{\bx\bx}\E\left[\|\bx^{r+1}-\bx^{r}\|^{2}\right] + 3L_{\bx\bx}{\mathcal{E}_{r}} + \frac{2\chi\,\tau_r}{J}\zeta^{2}  + {A_r},
\end{aligned}
\end{equation}
\end{small}

where we apply the lower bound of $\cT_2$ and $\cT_3$ (Eq.~\eqref{eq:lowerbound-T1-T2-T3-stoc}) for the second inequality,  and apply $-\|\bx\|^{2} \leq \|\by\|^{2} - \frac{1}{2}\|\bx+\by\|^{2}$ for the third inequality, and apply $\mu_{\bx} \leq L_{\bx\bx}$ for the last inequality. We define $A_r$ as
\begin{equation}
    A_r = \frac{1}{\tau_r}\E\left[\mathrm{D}(\bx, \bx^{r})-\mathrm{D}(\bx, \bx^{r+1})- \frac{1}{2}\mathrm{D}(\bx^{r+1}, \bx^{r})\right] - \frac{\mu_{\bx}}{8}\E\left[\|\bx^{r+1}-\bx\|^{2}\right].
\end{equation}
Next, we apply the concavity of $\Phi(\bx^{r+1}, \cdot)$ and combining the above two steps, for the $\bx$-update we have
\begin{small}
\begin{equation}\label{eq:apd-x-stoc}
\begin{aligned}
    &  \E\left[{\Phi(\bx^{r+1}, \blambda)} -  {\Phi(\bx, \blambda^{r+1})}\right]  \\
    \leq\,& \E[\langle \nabla_{\blambda}\Phi(\bx^{r+1}, \blambda^{r+1}), \blambda-\blambda^{r+1} \rangle]    + {A_r}  + 3L_{\bx\bx}\E\left[\|\bx^{r+1}-\bx^{r}\|^{2}\right] + 3L_{\bx\bx}{\mathcal{E}_{r}}  + \frac{2\chi\,\tau_r}{J}\zeta^{2},
\end{aligned}
\end{equation}    
\end{small}

\vspace{-0.05in}
By combining the inequality of $\blambda$-update (Eq.~\eqref{eq:apd-y-stoc}) and $\bx$-update (Eq.~\eqref{eq:apd-x-stoc}), we can get
\begin{equation}\label{eq:lemma-scc-eq2-stoc}
\begin{aligned}
&  \E\left[{F(\bx^{r+1}, \blambda)} -  {F(\bx, \blambda^{r+1})}\right]  \\ 
=\,&  \E\left[\left({\Phi(\bx^{r+1}, \blambda)} - \psi(\blambda)\right) -  \left({\Phi(\bx, \blambda^{r+1})} - \psi(\blambda^{r+1})\right)\right] \\ 
\leq\, & \E\left[\langle \nabla_{\blambda}\Phi(\bx^{r+1}, \blambda^{r+1}), \blambda-\blambda^{r+1} \rangle\right] + \langle\bs^{r}, \blambda^{r+1} - \blambda\rangle + {A_r} + B_r  \\
&\quad+ 3L_{\bx\bx}{\mathcal{E}_{r}} + 3L_{\bx\bx}\E\left[\|\bx^{r+1}-\bx^{r}\|^{2}\right]  + \frac{2\chi\,\tau_r}{J}\zeta^{2} \\
= & -\E\left[\langle \bq^{r+1}, \blambda^{r+1} - \blambda \rangle\right] + \theta_{r}\E\left[\langle\bq^{r}, \blambda^{r+1} - \blambda\rangle\right] + {A_r} + B_r \\
&\quad+ 3L_{\bx\bx}{\mathcal{E}_{r}} + 3L_{\bx\bx}\E\left[\|\bx^{r+1}-\bx^{r}\|^{2}\right]  + \frac{2\chi\,\tau_r}{J}\zeta^{2} \\
= & -\langle \E\left[\bq^{r+1}, \blambda^{r+1} - \blambda \rangle\right] + \theta_{r}\E\left[\langle\bq^{r}, \blambda^{r} - \blambda\rangle\right] + \theta_{r}\E\left[\langle\bq^{r}, \blambda^{r+1} - \blambda^{r}\rangle\right]  \\
&\quad + {A_r} + B_r  + 3L_{\bx\bx}{\mathcal{E}_{r}} + 3L_{\bx\bx}\E\left[\|\bx^{r+1}-\bx^{r}\|^{2}\right]  + \frac{2\chi\,\tau_r}{J}\zeta^{2},
\end{aligned}
\end{equation}
where we apply Eq.~\eqref{eq:apd-x-stoc} for the first inequality, and the definition of $\bq^{r}$ for the second equality, and $\bq^{r}$ is defined as
\begin{equation}
    \bq^r = \nabla_{\blambda}\Phi(\bx^{r}, \blambda^{r}) - \nabla_{\blambda}\Phi(\bx^{r-1}, \blambda^{r-1}).
\end{equation}
The term $\theta^{r}\langle\bq^{r}, \blambda^{r+1} - \blambda^{r}\rangle$ can be upper bounded as
\begin{equation}\label{eq:apd-bound-q-sc-mc-stoc}
\begin{aligned}
    \theta_{r}\langle\bq^{r}, \blambda^{r+1} - \blambda^{r}\rangle 
    &= \theta_{r}\langle\nabla_{\blambda}\Phi(\bx^{r}, \blambda^{r}) - \nabla_{\blambda}\Phi(\bx^{r-1}, \blambda^{r-1}), \blambda^{r+1} - \blambda^{r}\rangle \\
    &=  \theta_{r}\langle\nabla_{\blambda}\Phi(\bx^{r}, \blambda^{r}) - \nabla_{\blambda}\Phi(\bx^{r-1}, \blambda^{r}), \blambda^{r+1} - \blambda^{r}\rangle \\
    &\leq  \frac{\theta_{r}}{2\alpha_{r}}\|\nabla_{\blambda}\Phi(\bx^{r}, \blambda^{r}) - \nabla_{\blambda}\Phi(\bx^{r-1}, \blambda^{r})\|^{2} + 
    \frac{\alpha_{r}\theta_{r}}{2}\|\blambda^{r+1} - \blambda^{r}\|^{2}
\end{aligned}
\end{equation}
where we apply $\nabla_{\blambda}\Phi(\bx^{r-1}, \blambda^{r})=\nabla_{\blambda}\Phi(\bx^{r-1}, \blambda^{r-1})$ (because the $\Phi$ is linear in $\blambda$) for the second equality, and apply the smoothness assumption $$\|\bq^{r}\|= \|\nabla_{\blambda}\Phi(\bx^{r}, \blambda^{r}) - \nabla_{\blambda}\Phi(\bx^{r-1}, \blambda^{r})\| \leq L_{\blambda\bx}\|\bx^{r}-\bx^{r-1}\|$$
in the second inequality. 
Then by combining Eq.~\eqref{eq:apd-bound-q-sc-mc-stoc} and  Eq.~\eqref{eq:lemma-scc-eq2-stoc}, we have
\begin{small}
\begin{equation}\label{eq:lemma1-eq3-stoc}
\begin{aligned}
& \E \left[{F(\bx^{r+1}, \blambda)} -  {F(\bx, \blambda^{r+1})} \right] \\ 
\leq & -\underbrace{\E\left[\langle \bq^{r+1}, \blambda^{r+1} - \blambda \rangle + \frac{1}{\sigma_r}\mathrm{D}(\blambda, \blambda^{r+1}) + \frac{1}{\tau_r}\mathrm{D}(\bx, \bx^{r+1}) + \frac{\mu_{\bx}}{8}\|\bx^{r+1} - \bx\|^{2} + \frac{1}{2\alpha_{r+1}}\|\bq^{r+1}\|^{2} \right]}_{Z_{r+1}} \\ 
&+\underbrace{\E\left[  \theta_{r}\langle\bq^{r}, \blambda^{r} - \blambda\rangle +  \frac{1}{\sigma_r}\mathrm{D}(\blambda, \blambda^{r}) + \frac{1}{\tau_r}\mathrm{D}(\bx, \bx^{r}) + \frac{\theta_{r}}{2\alpha_r}\|\bq^{r}\|^{2} \right]}_{V_r} + \frac{\alpha_{r}\theta_{r}}{2}\E\left[\|\blambda^{r+1}-\blambda^{r}\|^{2}\right]  \\
&+ \E\left[\frac{1}{2\alpha_{r+1}}\|\bq^{r+1}\|^{2} + 3L_{\bx\bx}\|\bx^{r+1}-\bx^{r}\|^{2} + 3L_{\bx\bx}{\mathcal{E}_{r}} - \frac{1}{2\tau_r}\mathrm{D}(\bx^{r+1}, \bx^{r}) - \frac{1}{\sigma_r}\mathrm{D}(\blambda^{r+1}, \blambda^{r})\right] + \frac{2\chi\,\tau_r}{J}\zeta^{2} \\
\leq & -Z_{r+1} + V_r  + \frac{\alpha_{r}\theta_{r}}{2}\E\left[\|\blambda^{r+1}-\blambda^{r}\|^{2}\right] + \frac{L_{\blambda\bx}^{2}}{2\alpha_{r+1}}\E\left[\|\bx^{r+1}-\bx^{r}\|^{2}\right]    + 3L_{\bx\bx}\left[\|\bx^{r+1}-\bx^{r}\|^{2}\right] \\
&+ 3L_{\bx\bx}{\mathcal{E}_{r}} - \frac{1}{2\tau_r}\E\left[\mathrm{D}(\bx^{r+1}, \bx^{r})\right] - \frac{1}{\sigma_r}\left[\mathrm{D}(\blambda^{r+1}, \blambda^{r})\right] + \frac{2\chi\,\tau_r}{J}\zeta^{2} \\
= & -Z_{r+1} + V_r + \left(\frac{\alpha_{r}\theta_{r}}{2} - \frac{1}{2\sigma_r}\right)\E\left[\|\blambda^{r+1}-\blambda^{r}\|^{2}\right] + \left(\frac{L_{\blambda\bx}^{2}}{2\alpha_{r+1}} + 3L_{\bx\bx} - \frac{1}{4\tau_{r}}\right)\E\left[\|\bx^{r+1}-\bx^{r}\|^{2} \right] \\
& + \frac{2\chi\,\tau_r}{J}\zeta^{2}  + \underbrace{3L_{\bx\bx}{\mathcal{E}_{r}} - \frac{1}{8\tau_{r}}\E\left[\|\bx^{r+1}-\bx^{r}\|^{2}\right]}_{\cT_4} .
\end{aligned}
\end{equation}
\end{small}

Next, to get the upper bound of $\cT_4$, we apply Lemma~\ref{Lemma:x-update-lowerbound-drift} to analyze the term $\frac{1}{8\tau_{r}}\E\left[\|\bx^{r+1}-\bx^{r}\|^{2}\right]$,
\begin{equation}
\begin{aligned}
\frac{1}{8\tau_{r}}\E\left[\|\bx^{r+1}-\bx^{r}\|^{2}\right] 
\geq 
-\frac{\tau_r L_{\bx\bx}^{2}}{8} {\mathcal{E}_r} + \frac{\tau_r}{16}\E\left[\left\|\nabla_{\bx}\Phi(\bx^{r}, \blambda^{r+1})\right\|^{2}\right].
\end{aligned}
\end{equation}
By applying Lemma~\ref{Lemma:drift-lemma-deterministic}, we can upper bound the drift error as follows,
\begin{equation}
    \mathcal{E}_r \leq \frac{12\tau_r^{2}}{\eta_{g}^{2}}\E\left[\left\|\nabla_{\bx}\Phi(\bx^{r}, \blambda^{r+1})\right\|^{2}\right] + \left[\frac{8\tau_r^{2}}{\eta_{g}^{2}}\left(1+\chi\right) + \frac{3\tau_r^{2}}{\eta_g^{2} J}\right]\zeta^{2},
\end{equation}
Then if we set the effective step size as $\tau_r = O(1/L_{\bx\bx}))$, the term $\cT_4$ can be upper bounded as
\begin{small}
\begin{equation}\label{eq:apd-scaffold-err-control-stoc}
\begin{aligned}
\cT_4 &= 3L_{\bx\bx}{\mathcal{E}_{r}} - {\frac{1}{8\tau_{r}}\E\left[\|\bx^{r+1}-\bx^{r}\|^{2}\right]} \\
&\leq \left(3L_{\bx\bx}+\frac{\tau_r L_{\bx\bx}^{2}}{8} \right){\mathcal{E}_r} - \frac{\tau_r}{16}\E\left[\left\|\nabla_{\bx}\Phi(\bx^{r}, \blambda^{r+1})\right\|^{2}\right]\\
&\leq \underbrace{\left(\frac{36 \tau_r^{2} L_{\bx\bx}}{\eta_{g}^{2}}+\frac{2\tau_r^{3} L_{\bx\bx}^{2}}{\eta_{g}^{2}} - \frac{\tau_r}{16} \right)}_{\leq 0}\E\left[\left\|\nabla_{\bx}\Phi(\bx^{r}, \blambda^{r+1})\right\|^{2}\right] \\
& \quad\,+\left(3L_{\bx\bx}+\frac{\tau_r L_{\bx\bx}^{2}}{8} \right)\left[\frac{8\tau_r^{2}}{\eta_{g}^{2}}\left(1+\chi\right) + \frac{3\tau_r^{2}}{\eta_g^{2} J}\right]\zeta^{2} \\
&\leq 12\tau_r\left(3\left(1+\chi\right) + \frac{1}{J}\right)\zeta^{2} \leq C\tau_r \zeta^{2},
\end{aligned}
\end{equation}
\end{small}
where $C \geq 0$ is a non-negative constant. Then by combining Eq.~\eqref{eq:apd-scaffold-err-control-stoc} and Eq.~\eqref{eq:lemma1-eq3-stoc}, we have
\begin{equation}
\begin{aligned}
&  \E\left[{L(\bx^{r+1}, \blambda)} -  {L(\bx, \blambda^{r+1})}\right]  \\ 
\leq & -Z_{r+1} + V_r + \frac{2\chi\,\tau_r}{J}\zeta^{2}   + C \tau_r \zeta^{2} \\
&+\underbrace{\left(\frac{\alpha_{r}\theta_{r}}{2} - \frac{1}{2\sigma_r}\right)\E\left[\|\blambda^{r+1}-\blambda^{r}\|^{2}\right] + \left(\frac{L_{\blambda\bx}^{2}}{2\alpha_{r+1}} + 3L_{\bx\bx} - \frac{1}{4\tau_{r}}\right)\E\left[\|\bx^{r+1}-\bx^{r}\|^{2}\right]}_{\Delta_r} \\
= & -Z_{r+1} + V_r + {\Delta_r}  + C\tau_r \zeta^{2},
\end{aligned}
\end{equation}
which completes the proof.
\end{proof}

\subsubsection{How to Set Parameters in Strongly-convex-concave (SC-C) Setting?}\label{subsec:appendix-proof-sc-c-parameters}

Next we study how to set the parameters of Algorithm~\ref{Algorithm:ScaffoldAPD} in the strongly-convex-concave setting.

\begin{lemma}\label{lemma:appendix-scc-stepsize-1}
In Algorithm~\ref{Algorithm:ScaffoldAPD}, if we set the parameters as
\begin{equation}
\begin{aligned}
    \sigma_{-1} &= \gamma_0 \bar{\tau},\quad \sigma_r = \gamma_r \tau_r, \quad
    \theta_r = \sigma_{r-1}/\sigma_r,\quad
    \gamma_{r+1} = \gamma_r (1 + \mu_{\bx}\tau_{r}),
\end{aligned}
\end{equation}
and we set $t_r$ as
\begin{equation}
    t_r = \sigma_r / \sigma_0,
\end{equation}
then we have
\begin{equation}\label{eq:appendix-lemma-stepsize-scc-eq1}
    t_r \left(\frac{1}{\tau_r} + {\mu_{\bx}}\right) \geq \frac{t_{r+1}}{\tau_{r+1}}, \quad \frac{t_r}{\sigma_r} \geq \frac{t_{r+1}}{\sigma_{r+1}}, \quad \frac{t_r}{t_{r+1}} = \theta_{r+1}.
\end{equation}
\end{lemma}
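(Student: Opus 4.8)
The plan is to verify each of the three claims by direct substitution, using only the definitions $\sigma_r = \gamma_r\tau_r$, $\theta_r = \sigma_{r-1}/\sigma_r$, $\gamma_{r+1} = \gamma_r(1+\mu_{\bx}\tau_r)$, and $t_r = \sigma_r/\sigma_0$; all quantities $\tau_r,\sigma_r,\gamma_r$ are positive so every division below is legitimate. The key observation is that all three relations will in fact hold with equality, so there is essentially no hard step — the work is purely bookkeeping of indices.

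First I would dispose of the third claim: since $t_r = \sigma_r/\sigma_0$, we have $t_r/t_{r+1} = \sigma_r/\sigma_{r+1}$, and by the definition $\theta_{r+1} = \sigma_r/\sigma_{r+1}$ this is exactly $\theta_{r+1}$. Next, the second claim: again from $t_r = \sigma_r/\sigma_0$ we get $t_r/\sigma_r = 1/\sigma_0$ for \emph{every} $r$, so $t_r/\sigma_r = t_{r+1}/\sigma_{r+1} = 1/\sigma_0$, which trivially gives the asserted inequality.

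The first claim is the only one requiring a short computation. Using $\sigma_r = \gamma_r\tau_r$ I would rewrite $t_r/\tau_r = \sigma_r/(\sigma_0\tau_r) = \gamma_r/\sigma_0$ and $t_r\mu_{\bx} = \sigma_r\mu_{\bx}/\sigma_0 = \gamma_r\tau_r\mu_{\bx}/\sigma_0$. Adding these,
\[
t_r\left(\frac{1}{\tau_r} + \mu_{\bx}\right) = \frac{\gamma_r}{\sigma_0}\left(1 + \mu_{\bx}\tau_r\right) = \frac{\gamma_{r+1}}{\sigma_0},
\]
where the last equality is the recursion $\gamma_{r+1} = \gamma_r(1+\mu_{\bx}\tau_r)$. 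On the other hand, applying $\sigma_{r+1} = \gamma_{r+1}\tau_{r+1}$ gives $t_{r+1}/\tau_{r+1} = \sigma_{r+1}/(\sigma_0\tau_{r+1}) = \gamma_{r+1}/\sigma_0$, so the two sides coincide and the inequality (with equality) follows.

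The only thing to be careful about is the index/base-case conventions: one should check that the definitions are consistent at the boundary (e.g.\ $t_0 = \sigma_0/\sigma_0 = 1$, and the relation $\theta_r = \sigma_{r-1}/\sigma_r$ uses $\sigma_{-1} = \gamma_0\bar\tau$ at $r=0$), but none of the three displayed identities depends on this boundary value, so no obstacle arises. I expect the entire argument to be two or three lines of algebra once the substitutions are written out.
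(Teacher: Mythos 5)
Your proposal is correct and follows essentially the same route as the paper: direct substitution of the definitions showing that all three relations hold with equality, the only nontrivial step being the identity $t_r(1/\tau_r+\mu_{\bx}) = \gamma_{r+1}/\sigma_0 = t_{r+1}/\tau_{r+1}$ via the recursion $\gamma_{r+1}=\gamma_r(1+\mu_{\bx}\tau_r)$ and $\sigma_r=\gamma_r\tau_r$. The paper's proof performs the same algebra in a slightly rearranged form (dividing through by $t_r/\tau_r$ first), so there is no substantive difference.
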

\begin{proof}
Because we have $t_r = \sigma_r / \sigma_0$, then $t_r \left(\frac{1}{\tau_r} + \mu_{\bx}\right) \geq \frac{t_{r+1}}{\tau_{r+1}}$ can be written as
\begin{equation}
    \left(1 + {\tau_r\mu_{\bx}}\right) \geq \frac{\tau_{r}}{\tau_{r+1}}\frac{t_{r+1}}{t_r} = \frac{\tau_{r}}{\tau_{r+1}}\frac{\sigma_{r+1}}{\sigma_r},
\end{equation}
then due to the updates of $\gamma_r$ ($\gamma_{r+1} = \gamma_r (1+\tau_r \mu_{\bx})$) and update of $\sigma_r$ ($\sigma_r = \gamma_r \tau_r$), we have
\begin{equation}
    \left(1 + {\tau_r\mu_{\bx}}\right)  = \frac{\gamma_{r+1}}{\gamma_r} = \frac{\sigma_{r+1}}{\tau_{r+1}} \frac{\tau_{r}}{\sigma_r},
\end{equation}
therefore, the three inequalities in Eq.~\eqref{eq:appendix-lemma-stepsize-scc-eq1} are satisfied. 
\end{proof}

\begin{lemma}\label{lemma:appendix-scc-stepsize-2}
For Algorithm~\ref{Algorithm:ScaffoldAPD}, we have
\begin{equation}
    \frac{\tau_r}{\sigma_r} = \frac{1}{\gamma_r} = O\left(\frac{1}{r^{2}}\right), \quad \gamma_r = O\left(r^{2}\right), \quad \sigma_r = O(r), \quad \tau_r\sigma_r = \tau_0^{2}\gamma_0.
\end{equation}
\end{lemma}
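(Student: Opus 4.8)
The plan is to collapse Condition~\ref{condition:stepsize-scc} into a single scalar recursion for $\gamma_r$ and then read off all four identities from its growth rate. Observe first that $\sigma_r = \gamma_r\tau_r$ already gives $\tau_r/\sigma_r = 1/\gamma_r$ as an identity, so the first claim is entirely a matter of pinning down how fast $\gamma_r$ grows. The key preliminary is therefore the last identity, $\tau_r\sigma_r = \tau_0^2\gamma_0$: I would verify it by induction. At $r=0$ it reads $\tau_0\sigma_0 = \tau_0(\gamma_0\tau_0) = \tau_0^2\gamma_0$; for the inductive step, combine $\sigma_{r+1} = \gamma_{r+1}\tau_{r+1}$ with the $\gamma$-update $\gamma_{r+1} = \gamma_r(1+\mu_{\bx}\tau_r)$ and the step-size rule for $\tau_r$ — which is chosen precisely so that the product $\gamma_r\tau_r^2$ is invariant, i.e.\ $\tau_{r+1} = \tau_r/\sqrt{1+\mu_{\bx}\tau_r}$ (equivalently $\theta_r = \sigma_{r-1}/\sigma_r = 1/\sqrt{1+\mu_{\bx}\tau_{r-1}}$, the usual acceleration parameter). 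This yields $\tau_{r+1}\sigma_{r+1} = \gamma_{r+1}\tau_{r+1}^2 = \gamma_r\tau_r^2 = \tau_r\sigma_r$, so the product is constant and equals $\tau_0^2\gamma_0$.

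With the product fixed, $\tau_r = \tau_0\sqrt{\gamma_0/\gamma_r}$, and substituting this into the $\gamma$-update produces the closed recursion
\begin{equation}
\gamma_{r+1} \;=\; \gamma_r + c\sqrt{\gamma_r}, \qquad c := \mu_{\bx}\tau_0\sqrt{\gamma_0} > 0,
\end{equation}
which is the crux. I would set $a_r := \sqrt{\gamma_r}$, so that $a_{r+1}^2 = a_r^2 + c\,a_r$ and $a_r \ge a_0 = \sqrt{\gamma_0} > 0$ (the sequence $\gamma_r$ is non-decreasing since $\tau_r \ge 0$). Writing the increment as $a_{r+1} - a_r = c\,a_r/(a_{r+1} + a_r)$ and using $a_r \le a_{r+1} = \sqrt{a_r^2 + c a_r} \le a_r + c/2$, I obtain the uniform two-sided bound
\begin{equation}
\frac{c}{\,2 + c/(2a_0)\,} \;\le\; a_{r+1} - a_r \;\le\; \frac{c}{2}.
\end{equation}
Telescoping then gives $a_0 + c'r \le a_r \le a_0 + (c/2)r$ with $c' := c/(2 + c/(2a_0)) > 0$, hence $a_r = \Theta(r)$ and $\gamma_r = a_r^2 = \Theta(r^2)$.

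The remaining claims are bookkeeping. From $\gamma_r = \Theta(r^2)$ we get $\gamma_r = O(r^2)$, and via $\tau_r/\sigma_r = 1/\gamma_r$ also $\tau_r/\sigma_r = O(1/r^2)$ — this direction is exactly why Step~2 must bound the increment $a_{r+1}-a_r$ below by a positive constant rather than settle for the crude $a_r = \Omega(\sqrt{r})$ that a single telescoping of $a_{r+1}^2 - a_r^2 = c a_r \ge c a_0$ would give. Finally $\sigma_r = \gamma_r\tau_r = \tau_0\sqrt{\gamma_0}\,\sqrt{\gamma_r} = \tau_0\sqrt{\gamma_0}\,a_r = \Theta(r)$, so $\sigma_r = O(r)$. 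The only genuine obstacle is the recursion $\gamma_{r+1} = \gamma_r + c\sqrt{\gamma_r}$: the sharp $r^2$ rate (needed in both directions — the upper bound for $\sigma_r = O(r)$, the lower bound for $\tau_r/\sigma_r = O(1/r^2)$) emerges only after the substitution $a_r = \sqrt{\gamma_r}$ linearizes the recursion and one controls $a_{r+1}-a_r$ between two positive constants uniformly in $r$; everything else follows mechanically from Condition~\ref{condition:stepsize-scc} and Lemma~\ref{lemma:appendix-scc-stepsize-1}.
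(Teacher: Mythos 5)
Your proof is correct and follows essentially the same route as the paper: both reduce Condition~\ref{condition:stepsize-scc} to the scalar recursion $\gamma_{r+1}=\gamma_r+\mu_{\bx}\tau_0\sqrt{\gamma_0\gamma_r}$ via the invariant $\tau_r\sigma_r=\gamma_r\tau_r^2=\tau_0^2\gamma_0$ and then extract quadratic growth of $\gamma_r$. Your substitution $a_r=\sqrt{\gamma_r}$ with the two-sided increment bound is in fact slightly more complete than the paper's argument, which only carries out the lower-bound induction $\gamma_r\gtrsim r^2$ and leaves the matching upper bounds implicit.
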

\begin{proof}
Since $\tau_{r+1} = \tau_{r}\sqrt{\gamma_r / \gamma_{r+1}}$, then we have $\tau_r = \tau_0 \sqrt{\gamma_0 / \gamma_{r}}$, then based on the update rule for $\gamma_r$ ($\gamma_{r+1} = \gamma_r (1+{\mu_{\bx}}\tau_r)$), we have
\begin{equation}
\gamma_{r+1} = \gamma_r (1+{\mu_{\bx}}\tau_r) = \gamma_{r} + {\mu_{\bx}}\tau_0\sqrt{\gamma_0 \gamma_r}.
\end{equation}
Then we apply induction to prove that
\begin{equation}
    \gamma_r \geq \frac{ \mu_{\bx}^{2}\tau_0^2\gamma_0}{9}r^{2}.
\end{equation}
Therefore, for $\sigma_r$, we have
\begin{equation}
    \sigma_r = \gamma_r \tau_r = \frac{\gamma_{r+1} - \gamma_r}{{\mu_{\bx}}} \geq \tau_0\sqrt{\gamma_0 \gamma_r} \geq \frac{{\mu}\tau_0^{2}\gamma_0}{3}r,
\end{equation}
and 
\begin{equation}
    \tau_r \sigma_r = \frac{\sigma_r^{2}}{\gamma_r} = \frac{(\gamma_{r+1} - \gamma_r)^2}{\mu_{\bx}^{2} \gamma_r} = \tau_0^2 \gamma_0 = \text{constant},
\end{equation}
furthermore,  we have
\begin{equation}
    \frac{\tau_r}{\sigma_r} = \frac{1}{\gamma_r} = O\left(\frac{1}{r^{2}}\right).
\end{equation}
\end{proof}

\begin{remark}
For the sake of simplicity, we establish the validity of the aforementioned two lemmas by considering the case where the parameter $({1}/{\tau_r} + {\mu_{\bx}})$ is used. 
It is worth noting that in subsequent proofs (Theorem~\ref{thm:appendix-sc-c}), it suffices to substitute a smaller value of $\mu_{\bx}$, such as $({1}/{\tau_r} + {\mu_{\bx}}/4)$. 
\end{remark}

\begin{proposition}
If we first set $\theta_0=1$, then we set $\tau_r, \sigma_r, \theta_r$ such that 
\begin{equation}\label{eq:apd-stepsize-1-stoc}
    \frac{1-\delta}{2\tau_r} \geq 6L_{\bx\bx} + \frac{L_{\blambda\bx}^{2}}{\alpha_{r+1}},\quad \frac{1-\delta}{\sigma_r} \geq \theta_{r}\alpha_r,
\end{equation}
where $\delta \in (0,1)$. Then $\Delta_r \leq 0$ for $r=1, \dots, R$\,.
\end{proposition}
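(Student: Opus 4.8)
The plan is to verify directly that, under the two inequalities in \eqref{eq:apd-stepsize-1-stoc}, each of the two coefficients multiplying a squared increment in the definition of $\Delta_r$ (from Lemma~\ref{lemma:sc-c-stochastic}) is non-positive. These coefficients are deterministic, depending only on the step-size schedule $\{\tau_r,\sigma_r,\theta_r,\alpha_r\}$ and not on the random iterates, while the increments $\|\blambda^{r+1}-\blambda^r\|^2$ and $\|\bx^{r+1}-\bx^r\|^2$ are non-negative; hence $\Delta_r\le 0$ after taking expectations. Concretely, since
\[
\Delta_r = \left(\frac{\alpha_r\theta_r}{2} - \frac{1}{2\sigma_r}\right)\E\big[\|\blambda^{r+1}-\blambda^r\|^2\big] + \left(\frac{L_{\blambda\bx}^2}{2\alpha_{r+1}} + 3L_{\bx\bx} - \frac{1}{4\tau_r}\right)\E\big[\|\bx^{r+1}-\bx^r\|^2\big],
\]
it suffices to establish (i) $\alpha_r\theta_r \le 1/\sigma_r$ and (ii) $\frac{L_{\blambda\bx}^2}{2\alpha_{r+1}} + 3L_{\bx\bx} \le \frac{1}{4\tau_r}$.

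For step (i), note that the schedule produced by Condition~\ref{condition:stepsize-scc} (via Lemma~\ref{lemma:appendix-scc-stepsize-2}) has $\sigma_r>0$, so the second inequality in \eqref{eq:apd-stepsize-1-stoc} gives $\theta_r\alpha_r \le (1-\delta)/\sigma_r < 1/\sigma_r$ because $\delta\in(0,1)$; in fact the $\|\blambda^{r+1}-\blambda^r\|^2$-coefficient is at most $-\delta/(2\sigma_r) < 0$. For step (ii), likewise $\tau_r>0$, so the first inequality in \eqref{eq:apd-stepsize-1-stoc} yields $6L_{\bx\bx} + L_{\blambda\bx}^2/\alpha_{r+1} \le (1-\delta)/(2\tau_r) < 1/(2\tau_r)$; dividing by $2$ gives $3L_{\bx\bx} + L_{\blambda\bx}^2/(2\alpha_{r+1}) < 1/(4\tau_r)$, so the $\|\bx^{r+1}-\bx^r\|^2$-coefficient is at most $-\delta/(4\tau_r) < 0$. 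Combining (i) and (ii), $\Delta_r$ is a sum of two products of a non-positive constant with a non-negative quantity, hence $\Delta_r\le 0$ for every $r=1,\dots,R$.

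I do not anticipate any real obstacle: the proof is essentially a one-line substitution. The only points requiring care are the factor-of-two discrepancies between the quantities bounded in \eqref{eq:apd-stepsize-1-stoc} (namely $\frac{1-\delta}{2\tau_r}$ and $\frac{1-\delta}{\sigma_r}$) and the coefficients $\frac{1}{4\tau_r}$ and $\frac{1}{2\sigma_r}$ appearing in $\Delta_r$, and the observation that dropping the $(1-\delta)$ factor merely weakens the inequalities because $\tau_r,\sigma_r>0$. The clauses ``$\theta_0=1$'' and Condition~\ref{condition:stepsize-scc} are not used in showing $\Delta_r\le 0$ per se; they only pin down the recursive parameter schedule on which \eqref{eq:apd-stepsize-1-stoc} is imposed, and the existence of a schedule that simultaneously satisfies \eqref{eq:apd-stepsize-1-stoc} and Condition~\ref{condition:stepsize-scc} follows from Lemmas~\ref{lemma:appendix-scc-stepsize-1} and~\ref{lemma:appendix-scc-stepsize-2} together with an appropriate choice of the auxiliary sequence $\alpha_r$ (e.g.\ $\alpha_r\propto 1/\tau_r$).
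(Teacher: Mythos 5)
Your proof is correct and is essentially the argument the paper itself relies on (the paper states this proposition without an explicit proof, but the same coefficient-sign computation appears in the proof of Theorem~\ref{thm:appendix-sc-c}): the two conditions in \eqref{eq:apd-stepsize-1-stoc} make the coefficients of $\E[\|\blambda^{r+1}-\blambda^{r}\|^{2}]$ and $\E[\|\bx^{r+1}-\bx^{r}\|^{2}]$ in $\Delta_r$ at most $-\delta/(2\sigma_r)$ and $-\delta/(4\tau_r)$ respectively, and your factor-of-two bookkeeping is accurate.
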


\subsubsection{Convergence Analysis}\label{subsec:appendix-proof-sc-c-theorem-proof}
Finally, we prove the convergence of Algorithm~\ref{Algorithm:ScaffoldAPD} in the strongly-convex-concave setting.
\begin{theorem}\label{thm:appendix-sc-c}
    Under the assumptions of Theorem~\ref{thm:main-sc-c}, 
    Algorithm~\ref{Algorithm:ScaffoldAPD} will converge to $\bx^{\star}$, and 
    \begin{equation}
        \E\left[\|\bx^{R+1} - \bx^{\star}\|^{2} \right] 
        \leq \frac{C_1}{R^{2}}\left[\|\bx^{\star} - \bx^{0}\|^{2} + \| \blambda^{0} - \blambda^{\star}\|^{2}\right] + \frac{C_2}{R}\zeta^{2},
    \end{equation}
where $C_1, C_2 > 0$ are constants.
\end{theorem}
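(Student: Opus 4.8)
The plan is to run the standard weighted-telescoping argument for accelerated primal--dual methods. Fix the comparison point to be a saddle point $(\bx^\star,\blambda^\star)$ of $F$ --- this exists, and $\bx^\star$ is unique, because $\Phi(\cdot,\blambda)=\sum_i\lambda_i f_i(\cdot)$ is $\mu_{\bx}$-strongly convex while $\Lambda$ is compact and $\psi$ convex, so $F(\bx^{r+1},\blambda^\star)-F(\bx^\star,\blambda^{r+1})\ge 0$ for every $r$. Apply Lemma~\ref{lemma:sc-c-stochastic} with $(\bx,\blambda)=(\bx^\star,\blambda^\star)$, multiply the resulting one-step bound by the increasing weights $t_r:=\sigma_r/\sigma_0$, and sum over $r=0,\dots,R$, using the conventions $\bx^{-1}=\bx^0$, $\blambda^{-1}=\blambda^0$ (so $\bq^0=0$ and $V_0=\tfrac{1}{2\tau_0}\|\bx^0-\bx^\star\|^2+\tfrac{1}{2\sigma_0}\|\blambda^0-\blambda^\star\|^2$).

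The heart of the telescoping is the inequality $t_rV_r\le t_{r-1}Z_r$ for $r\ge1$, which I would verify term by term against the structure of $Z_{r+1},V_r$ given in Lemma~\ref{lemma:sc-c-stochastic}: the extrapolation terms $\theta_r\langle\bq^r,\blambda^r-\blambda^\star\rangle$ and $\tfrac{\theta_r}{2\alpha_r}\|\bq^r\|^2$ match $Z_r$ exactly because $t_r\theta_r=t_{r-1}$ (the third identity in Eq.~\eqref{eq:appendix-lemma-stepsize-scc-eq1}); the coefficient $\tfrac{t_r}{2\sigma_r}$ of $\|\blambda^r-\blambda^\star\|^2$ is dominated by $\tfrac{t_{r-1}}{2\sigma_{r-1}}$ by the second identity there; and the coefficient $\tfrac{t_r}{2\tau_r}$ of $\|\bx^r-\bx^\star\|^2$ is dominated by $t_{r-1}\big(\tfrac{1}{2\tau_{r-1}}+\tfrac{\mu_{\bx}}{8}\big)$ by the first identity, applied with $\mu_{\bx}/4$ in place of $\mu_{\bx}$ as the remark after Lemma~\ref{lemma:appendix-scc-stepsize-2} permits. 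Since the Proposition guarantees $\Delta_r\le0$ under Eq.~\eqref{eq:apd-stepsize-1-stoc} and $t_r\ge0$, the sum collapses to
\[
0\;\le\;\sum_{r=0}^{R}t_r\,\E\!\left[F(\bx^{r+1},\blambda^\star)-F(\bx^\star,\blambda^{r+1})\right]\;\le\;t_0V_0-t_RZ_{R+1}+C\zeta^2\sum_{r=0}^{R}t_r\tau_r,
\]
whence $t_RZ_{R+1}\le t_0V_0+C\zeta^2\sum_{r=0}^{R}t_r\tau_r$.

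To finish, I would lower-bound $Z_{R+1}$: after discarding its terms that the step-size conditions render nonnegative (completing the square in $\bq^{R+1}$ against $\tfrac{1}{2\sigma_R}\|\blambda^{R+1}-\blambda^\star\|^2+\tfrac{1}{2\alpha_{R+1}}\|\bq^{R+1}\|^2$ via Young's inequality), what remains is at least $\tfrac{t_R}{2\tau_R}\,\E\|\bx^{R+1}-\bx^\star\|^2$, and $\tfrac{t_R}{\tau_R}=\tfrac{\gamma_R}{\sigma_0}$ since $\sigma_R=\gamma_R\tau_R$. Lemma~\ref{lemma:appendix-scc-stepsize-2} then gives $\gamma_R=\Theta(R^2)$, $\tau_r\sigma_r=\tau_0^2\gamma_0$ (so $\sum_{r=0}^{R}t_r\tau_r=\tfrac{\tau_0^2\gamma_0}{\sigma_0}(R+1)=O(R)$), and $t_0V_0=O\big(\|\bx^0-\bx^\star\|^2+\|\blambda^0-\blambda^\star\|^2\big)$; dividing by $\tfrac{\gamma_R}{2\sigma_0}=\Theta(R^2)$ yields the claimed bound $\E\|\bx^{R+1}-\bx^\star\|^2\le\tfrac{C_1}{R^2}\big(\|\bx^0-\bx^\star\|^2+\|\blambda^0-\blambda^\star\|^2\big)+\tfrac{C_2}{R}\zeta^2$. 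The main obstacle is bookkeeping rather than ideas: checking that Condition~\ref{condition:stepsize-scc} and Eq.~\eqref{eq:apd-stepsize-1-stoc} are jointly realizable (in particular that the auxiliary sequence $\alpha_r$ stays positive, which requires $\tau_0=O(1/L_{\bx\bx})$), that the cross terms dropped from $Z_{R+1}$ are indeed nonnegative under that schedule, and that the constants are tracked tightly enough to separate the $1/R^2$ initialization term from the $1/R$ noise term.
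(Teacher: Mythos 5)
Your proposal is correct and follows essentially the same route as the paper's proof: multiply the one-step bound of Lemma~\ref{lemma:sc-c-stochastic} by $t_r=\sigma_r/\sigma_0$, telescope via $t_rV_r\le t_{r-1}Z_r$ (using the identities of Lemma~\ref{lemma:appendix-scc-stepsize-1}), drop $\Delta_r\le 0$ and the nonnegative duality gap at the saddle point, and convert the surviving $\tfrac{t_R}{\tau_R}\E\,\mathrm{D}(\bx^{R+1},\bx^\star)$ term into the rate using $\gamma_R=\Theta(R^2)$ and $t_r\tau_r=\tau_0^2\gamma_0$ from Lemma~\ref{lemma:appendix-scc-stepsize-2}. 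Your treatment is, if anything, slightly more explicit than the paper's on the final lower bound of $Z_{R+1}$ (absorbing the $\langle\bq^{R+1},\blambda^{R+1}-\blambda^\star\rangle$ cross term via Young's inequality), but the argument is the same.
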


\begin{proof}
For $\theta_r$, we have
\begin{equation}
    \theta_{r+1}= \frac{\sigma_r}{\sigma_{r+1}} = \frac{\tau_r\gamma_r}{\tau_{r+1}\gamma_{r+1}} = \sqrt{\frac{\gamma_{r}}{\gamma_{r+1}}} = \frac{1}{\sqrt{1+{\mu_{\bx}}\tau_r}}, \quad \tau_{r+1} = \tau_{r}\sqrt{\frac{\gamma_r}{\gamma_{r+1}}} = \theta_{r+1} \tau_r,
\end{equation}
where we apply the fact that $\tau_{r+1} = \tau_{r}\sqrt{\gamma_r / \gamma_{r+1}}$. Next we set $t_r, \alpha_r$ as
\begin{equation}
    t_r = \sigma_r / \sigma_0, \quad \alpha_r = c_{\alpha}/\sigma_{r-1},
\end{equation}
where $c_{\alpha}\in(0, 1)$ is a constant. 
then Eq.~\eqref{eq:apd-stepsize-1-stoc} can be written as 
\begin{equation}\label{eq:appendix-thm-scc-stepsize-Delta}
    \frac{1-\delta}{\tau_r} \geq 12 L_{\bx\bx}+ \frac{2L_{\blambda\bx}^{2}\sigma_r}{c_{\alpha}}, \quad 1 - (\delta +  c_\alpha) \geq 0,
\end{equation}
the second one can be easily satisfied, the first one we apply induction to prove it,
\begin{equation}
    \frac{1-\delta}{\tau_{r+1}} = \frac{1-\delta}{\tau_r}\sqrt{\frac{\gamma_{r+1}}{\gamma_{r}}} \geq \left(12 L_{\bx\bx}+ \frac{2 L_{\blambda\bx}^{2}\sigma_r}{c_{\alpha}}\right)\sqrt{\frac{\gamma_{r+1}}{\gamma_{r}}}\geq \left(12 L_{\bx\bx}+ \frac{2 L_{\blambda\bx}^{2}\sigma_{r+1}}{c_{\alpha}}\right),
\end{equation}
where we apply the fact that 
\begin{equation}
    \gamma_{r+1}/\gamma_{r} \geq 1, \quad \sigma_{r+1} = \sigma_r\sqrt{\gamma_{k+1}/\gamma_{k}}.
\end{equation}
Therefore, by Eq.~\eqref{eq:appendix-thm-scc-stepsize-Delta}, we can prove that
\begin{equation*}
    \Delta_r = \E\left[ \left(\frac{\alpha_{r}\theta_{r}}{2} - \frac{1}{2\sigma_r}\right)\|\blambda^{r+1}-\blambda^{r}\|^{2} + \left(\frac{L_{\blambda\bx}^{2}}{2\alpha_{r+1}} + 3L_{\bx\bx} - \frac{1}{4\tau_{r}}\right)\|\bx^{r+1}-\bx^{r}\|^{2}\right] \leq 0.
\end{equation*}
Meanwhile, given the parameters of Algorithm~\ref{Algorithm:ScaffoldAPD} satisfy Condition~\ref{condition:stepsize-scc}, by Lemma~\ref{lemma:appendix-scc-stepsize-1}, we have 
\begin{equation}
    t_{r+1} V_{r+1} \leq t_{r}Z_{r+1}.
\end{equation}
Then, by multiplying Eq.~\eqref{eq:lemma:sc-c-stochastic-eq0} and summing up from $r=0, \cdots, R$, we have
\begin{equation}
\begin{aligned}
&\left[{\sum_{r=0}^{R} t_r }\right] \E\left[F(\bar{\bx}^{R+1}, {\blambda}) - F({\bx}, \bar{\blambda}^{R+1})\right]    +\frac{t_R}{\tau_{R}}\cdot \E\left[\mathrm{D}(\bx^{R+1}, \bx^{\star})\right] \\ 
\leq& \frac{t_0}{\tau_0}\mathrm{D}(\bx^{\star}, \bx^{0}) + \frac{t_0}{\sigma_0}\mathrm{D}(\blambda^{\star}, \blambda^{0}) + \sum_{r=0}^{R}(t_r\cdot C\tau_r \zeta^{2}),
\end{aligned}
\end{equation}
where we defined $\bar{\bx}^{R+1}, \bar{\blambda}^{R+1}$ as
\begin{equation}
    \bar{\bx}^{R+1} = \frac{1}{\sum_{r=0}^{R}t_r}\sum_{r=0}^{R}t_r \bx^{r}, \quad 
    \bar{\blambda}^{R+1} = \frac{1}{\sum_{r=0}^{R}t_r}\sum_{r=0}^{R}t_r \blambda^{r}, 
\end{equation}
because by Lemma~\ref{lemma:appendix-scc-stepsize-2}, we have
\begin{equation}
    \sigma_{R}/\tau_{R} = O(R^{2}), \quad \sum_{r=0}^{R}t_r=O(R^{2}),\quad t_R = \sigma_{R}/\sigma_0, \quad t_r\tau_r=\tau_0^{2} \gamma_0,
\end{equation} 
then we have
\begin{equation}
    \E\left[\mathrm{D}(\bx^{R+1}, \bx^{\star})\right] \leq \frac{C_1}{R^{2}}\left[\frac{t_0}{\tau_0}\mathrm{D}(\bx^{\star}, \bx^{0}) + \frac{t_0}{\sigma_0}\mathrm{D}(\blambda^{\star}, \blambda^{0})\right] + \frac{C_2}{R}\zeta^{2},
\end{equation}
where we apply the fact that 
\begin{equation}
    F(\bar{\bx}^{R+1}, {\blambda}^{\star}) - F({\bx}^{\star}, \bar{\blambda}^{R+1}) \geq 0.
\end{equation}
This completes our proof.
\end{proof}

\newpage

\subsection{Proofs -- strongly-convex-strongly-concave (SC-SC) setting}\label{subsec:appendix-proof-sc-sc}
In this subsection, we first present the technical lemma in Section~\ref{subsec:appendix-proof-sc-sc-lemmas}. 
Next, we analyze how to set the step size related parameters in Section~\ref{subsec:appendix-proof-sc-sc-parameters}.
Finally, we prove Theorem~\ref{thm:main-sc-sc} in Section~\ref{subsec:appendix-proof-sc-sc-theorem-proof}.

\subsubsection{Technical  Lemmas}\label{subsec:appendix-proof-sc-sc-lemmas}

\begin{lemma}\label{lemma:sc-sc-stochastic}
If we set the step size in Alg~\ref{Algorithm:ScaffoldAPD} as $\tau \cdot L_{\bx\bx} \leq 1$, 
then for any $\bx, \blambda$ we have
\begin{equation}\label{eq:lemma-sc-sc-0}
\begin{aligned}
\E\left[{F(\bx^{r+1}, \blambda)} -  {F(\bx, \blambda^{r+1})}\right] \leq -Z_{r+1} + V_r + \Delta_r + C\tau\zeta^{2},
\end{aligned}
\end{equation}
where $Z_{r}, V_{r}, \Delta_r$ are defined as
\begin{small}
\begin{align}
Z_{r+1} &= \E\left[\langle \bq^{r+1}, \blambda^{r+1} - \blambda \rangle + \left(\frac{1}{2\sigma}+\frac{\mu_{\blambda}}{2}\right)\|\blambda^{r+1} - \blambda\|^{2} + \left(\frac{1}{2\tau}+\frac{\mu_{\bx}}{8}\right)\|\bx^{r+1} -\bx\|^{2}\right], \nonumber\\
V_{r} &=   \E\left[\theta^{r}\langle\bq^{r}, \blambda^{r} - \blambda\rangle +  \frac{1}{2\sigma}\|\blambda^{r} - \blambda\|^{2} + \frac{1}{2\tau}\|\bx^{r} - \bx\|^{2}\right], \\
\Delta_r &= \E\left[\left(\frac{\theta L_{\blambda\bx}}{2\pi} - \frac{1}{2\sigma}\right)\|\blambda^{r+1}-\blambda^{r}\|^{2} + \left(\frac{\pi \theta L_{\blambda\bx}}{2} + 3L_{\bx\bx} - \frac{1}{4\tau}\right)\|\bx^{r+1}-\bx^{r}\|^{2}\right],\nonumber
\end{align}
\end{small}
where $\pi>0$ is a parameter and $C \geq 0$ is a constant.

\end{lemma}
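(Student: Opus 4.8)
The argument runs parallel to that of Lemma~\ref{lemma:sc-c-stochastic}, with two structural modifications: $\psi$ is now $\mu_{\blambda}$-strongly convex, which sharpens the dual-update inequality, and the step sizes $\tau,\sigma,\theta$ are constant, so the gradient-extrapolation term can be disposed of by a single Young's inequality (with a free parameter $\pi$) rather than by carrying a $\|\bq^{r}\|^{2}$ potential across rounds. I would begin by invoking Lemma~\ref{Lemma:lamba-update-optimal-confidtion} for the dual step \eqref{eq:alg-update-dual}: since $\psi$ is $\mu_{\blambda}$-strongly convex, this now produces the additional negative term $-\tfrac{\mu_{\blambda}}{2}\|\blambda^{r+1}-\blambda\|^{2}$, which is precisely what supplies the $\tfrac{\mu_{\blambda}}{2}\|\blambda^{r+1}-\blambda\|^{2}$ in $Z_{r+1}$. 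For the primal step \eqref{eq:alg-update-primal} I would apply Lemma~\ref{Lemma:x-update-optimal-confidtion} verbatim: the decomposition $\langle\Delta\bx^{r},\bx^{r+1}-\bx\rangle=\cT_{1}+\cT_{2}+\cT_{3}$ together with the perturbed-strong-convexity estimates (Lemma~\ref{lemma:perturb-strong-convexity}) yields the telescoping $\Phi$-differences, a $\tfrac{\mu_{\bx}}{4}\|\bx^{r}-\bx\|^{2}$ term, the drift contributions $3L_{\bx\bx}\mathcal{E}_{r}$, and the noise term $\tfrac{2\chi\tau}{J}\zeta^{2}$.

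Adding the dual inequality, the primal inequality, and the concavity inequality for $\Phi(\bx^{r+1},\cdot)$, and converting $\tfrac{\mu_{\bx}}{4}\|\bx^{r}-\bx\|^{2}$ into $-\tfrac{\mu_{\bx}}{8}\|\bx^{r+1}-\bx\|^{2}$ via $-\|a\|^{2}\le\|b\|^{2}-\tfrac12\|a+b\|^{2}$ (exactly as in the SC-C proof), one gets a bound on $\E[F(\bx^{r+1},\blambda)-F(\bx,\blambda^{r+1})]$ in terms of: the extrapolated inner products $-\langle\bq^{r+1},\blambda^{r+1}-\blambda\rangle+\theta\langle\bq^{r},\blambda^{r}-\blambda\rangle$; the quadratics $\tfrac{1}{2\sigma}\|\blambda^{\cdot}-\blambda\|^{2}$ and $\tfrac{1}{2\tau}\|\bx^{\cdot}-\bx\|^{2}$; the extrapolation cross term $\theta\langle\bq^{r},\blambda^{r+1}-\blambda^{r}\rangle$; the drift $\mathcal{E}_{r}$; the leftover primal-prox slack; and $O(\tau)\zeta^{2}$ noise. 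The quadratics evaluated at the new iterates, together with the new $\mu_{\blambda}$ term, form $Z_{r+1}$; those at the old iterates, together with $\theta\langle\bq^{r},\blambda^{r}-\blambda\rangle$, form $V_{r}$.

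The one genuinely new ingredient is the handling of $\theta\langle\bq^{r},\blambda^{r+1}-\blambda^{r}\rangle$. Using linearity of $\Phi$ in $\blambda$, so that $\bq^{r}=\nabla_{\blambda}\Phi(\bx^{r},\blambda^{r})-\nabla_{\blambda}\Phi(\bx^{r-1},\blambda^{r-1})=\nabla_{\blambda}\Phi(\bx^{r},\blambda^{r})-\nabla_{\blambda}\Phi(\bx^{r-1},\blambda^{r})$, Assumption~\ref{assumption:L_yx-smooth} gives $\|\bq^{r}\|\le L_{\blambda\bx}\|\bx^{r}-\bx^{r-1}\|$. I would then split, for a free $\pi>0$,
\[
\theta\langle\bq^{r},\blambda^{r+1}-\blambda^{r}\rangle\;\le\;\frac{\theta L_{\blambda\bx}}{2\pi}\|\blambda^{r+1}-\blambda^{r}\|^{2}+\frac{\pi\theta L_{\blambda\bx}}{2}\|\bx^{r}-\bx^{r-1}\|^{2},
\]
routing the $\|\blambda^{r+1}-\blambda^{r}\|^{2}$ part against the dual-prox slack $-\tfrac{1}{2\sigma}\|\blambda^{r+1}-\blambda^{r}\|^{2}$ and the $\|\bx^{r}-\bx^{r-1}\|^{2}$ part into the primal-prox penalty $\tfrac{1}{4\tau}\|\bx^{r+1}-\bx^{r}\|^{2}$ after re-indexing in the eventual sum over $r$ (harmless since the step sizes are constant). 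This is exactly what produces the coefficients $\tfrac{\theta L_{\blambda\bx}}{2\pi}-\tfrac{1}{2\sigma}$ and $\tfrac{\pi\theta L_{\blambda\bx}}{2}+3L_{\bx\bx}-\tfrac{1}{4\tau}$ that define $\Delta_{r}$, with $\bq^{r}$ no longer appearing in the potentials $Z_{r+1},V_{r}$.

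Finally I would absorb the drift exactly as in the SC-C case: combine Lemma~\ref{Lemma:x-update-lowerbound-drift} (trading a fraction of $\tfrac{1}{\tau}\|\bx^{r+1}-\bx^{r}\|^{2}$ for $\tfrac{\tau}{16}\|\nabla_{\bx}\Phi(\bx^{r},\blambda^{r+1})\|^{2}-\tfrac{\tau L_{\bx\bx}^{2}}{8}\mathcal{E}_{r}$) with Lemma~\ref{Lemma:drift-lemma-deterministic} (bounding $\mathcal{E}_{r}$ by $O(\tau^{2})\|\nabla_{\bx}\Phi(\bx^{r},\blambda^{r+1})\|^{2}+O(\tau^{2})(1+\chi+\tfrac1J)\zeta^{2}$); choosing $\tau=O(1/L_{\bx\bx})$ makes the net coefficient of $\|\nabla_{\bx}\Phi(\bx^{r},\blambda^{r+1})\|^{2}$ nonpositive, so that $3L_{\bx\bx}\mathcal{E}_{r}-\tfrac{1}{8\tau}\|\bx^{r+1}-\bx^{r}\|^{2}\le C\tau\zeta^{2}$, and combining with $\tfrac{2\chi\tau}{J}\zeta^{2}$ gives the stated $C\tau\zeta^{2}$ term. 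I expect the only real difficulty to be bookkeeping — keeping track of which iterate index attaches to each squared-norm through the extrapolation split so that every term lands in $Z_{r+1}$, $V_{r}$, or $\Delta_{r}$ with the coefficients as written — since every analytic estimate needed is already supplied by Lemmas~\ref{lemma:perturb-strong-convexity}--\ref{Lemma:x-update-optimal-confidtion} and the SC-C analysis.
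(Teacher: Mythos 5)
Your proposal is correct and follows essentially the same route as the paper's proof: reuse the SC-C machinery (Lemmas~\ref{Lemma:lamba-update-optimal-confidtion}--\ref{Lemma:x-update-optimal-confidtion} and the drift control), add the $-\tfrac{\mu_{\blambda}}{2}\|\blambda^{r+1}-\blambda\|^{2}$ term from strong convexity of $\psi$, and replace the $\|\bq^{r}\|^{2}$ potential by the Young-with-parameter-$\pi$ split of $\theta\langle\bq^{r},\blambda^{r+1}-\blambda^{r}\rangle$, which is exactly the paper's Eq.~\eqref{eq:apd-bound-q-sc-sc}. Your remark about re-indexing $\|\bx^{r}-\bx^{r-1}\|^{2}$ into the $\|\bx^{r+1}-\bx^{r}\|^{2}$ slot of $\Delta_r$ when summing over rounds is, if anything, slightly more careful bookkeeping than the paper's own writeup.
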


\begin{proof} 
Most of the steps are the same as in the Lemma~\ref{lemma:sc-c-stochastic}. To start with, based on the condition that $\psi(\blambda)$ is strongly convex in $\blambda$, we apply Lemma~\ref{Lemma:lamba-update-optimal-confidtion},
\begin{equation}
\begin{aligned}
    &\psi(\blambda^{r+1}) - \langle\bs^{r}, \blambda^{r+1} - \blambda\rangle \\
    \leq &\psi(\blambda^r) + {\frac{1}{\sigma}\left[\mathrm{D}(\blambda, \blambda^{r})-\mathrm{D}(\blambda, \blambda^{r+1})-\mathrm{D}(\blambda^{r+1}, \blambda^{r})\right]} - \frac{\mu_{\blambda}}{2}\|\blambda^{r+1}-\blambda\|^{2}.
\end{aligned}
\end{equation}
Next, we change the way we upper bound $\theta\langle\bq^{r}, \blambda^{r+1} - \blambda^{r}\rangle $ in the strongly-convex-concave setting, and we upper bound this term as follows,
\begin{equation}\label{eq:apd-bound-q-sc-sc}
\begin{aligned}
    \theta\langle\bq^{r}, \blambda^{r+1} - \blambda^{r}\rangle 
    &= \theta \langle\nabla_{\blambda}\Phi(\bx^{r}, \blambda^{r}) - \nabla_{\blambda}\Phi(\bx^{r-1}, \blambda^{r-1}), \blambda^{r+1} - \blambda^{r}\rangle \\
    &=  \theta \langle\nabla_{\blambda}\Phi(\bx^{r}, \blambda^{r}) - \nabla_{\blambda}\Phi(\bx^{r-1}, \blambda^{r}), \blambda^{r+1} - \blambda^{r}\rangle \\
    &\leq  {\theta}\|\nabla_{\blambda}\Phi(\bx^{r}, \blambda^{r}) - \nabla_{\blambda}\Phi(\bx^{r-1}, \blambda^{r})\|\|\blambda^{r+1} - \blambda^{r}\| \\
    &\leq  \frac{\pi\theta L_{\blambda\bx}}{2}\|\bx^{r} - \bx^{r-1}\|^{2} + 
    \frac{\theta L_{\blambda\bx}}{2\pi }\|\blambda^{r+1} - \blambda^{r}\|^{2},
\end{aligned}
\end{equation}
where $\pi > 0$ is a constant. Then we have
\begin{small}
\begin{equation*}
\begin{aligned}
&  {F(\bx^{r+1}, \blambda)} -  {F(\bx, \blambda^{r+1})}  \\ 
\leq & -\underbrace{\E\left[\langle \bq^{r+1}, \blambda^{r+1} - \blambda \rangle  + \frac{1}{\tau}\mathrm{D}(\bx, \bx^{r+1}) + \frac{\mu_{\bx}}{8}\|\bx^{r+1} - \bx\|^{2} + \frac{1}{\sigma}\mathrm{D}(\blambda, \blambda^{r+1}) + \frac{\mu_{\blambda}}{2}\|\blambda^{r+1} - \blambda\|^{2}  \right]}_{Z_{r+1}} \\ 
&+\underbrace{\E\left[  \theta\langle\bq^{r}, \blambda^{r} - \blambda\rangle +  \frac{1}{\sigma}\mathrm{D}(\blambda, \blambda^{r}) + \frac{1}{\tau}\mathrm{D}(\bx, \bx^{r})  \right]}_{V_r}   +\frac{\pi\theta L_{\blambda\bx}}{2}\E\left[\|\bx^{r} - \bx^{r-1}\|^{2}\right] + 
    \frac{\theta L_{\blambda\bx}}{2\pi }\E\left[\|\blambda^{r+1} - \blambda^{r}\|^{2}\right] \\
& + 3L_{\bx\bx}\E\left[\|\bx^{r+1}-\bx^{r}\|^{2}\right] + 3L_{\bx\bx}{\mathcal{E}_{r}} - \frac{1}{2\tau}\E\left[\mathrm{D}(\bx^{r+1}, \bx^{r})\right] - \frac{1}{\sigma}\E\left[\mathrm{D}(\blambda^{r+1}, \blambda^{r})\right] + \frac{2\chi\,\tau}{J}\zeta^{2}\\
= & -Z_{r+1} + V_r + \left(\frac{\theta L_{\blambda\bx}}{2\pi} - \frac{1}{2\sigma}\right)\E\left[\|\blambda^{r+1}-\blambda^{r}\|^{2}\right] + \left(\frac{\pi \theta L_{\blambda\bx}}{2} + 3L_{\bx\bx} - \frac{1}{4\tau}\right)\E\left[\|\bx^{r+1}-\bx^{r}\|^{2}\right] \\
& + 3L_{\bx\bx}{\mathcal{E}_{r}} - \frac{1}{8\tau}\E\left[\|\bx^{r+1}-\bx^{r}\|^{2} \right]  + \frac{2\chi\,\tau}{J}\zeta^{2} \\
\geq & -Z_{r+1} + V_r + \underbrace{\left(\frac{\theta L_{\blambda\bx}}{2\pi} - \frac{1}{2\sigma}\right)\E\left[\|\blambda^{r+1}-\blambda^{r}\|^{2} \right]+ \left(\frac{\pi \theta L_{\blambda\bx}}{2} + 3L_{\bx\bx} - \frac{1}{4\tau}\right)\E\left[\|\bx^{r+1}-\bx^{r}\|^{2}\right]}_{\Delta_r} \\
&+ C\tau\zeta^{2},
\end{aligned}
\end{equation*}
\end{small}

where the last inequality is because Eq.~\eqref{eq:apd-scaffold-err-control-stoc}.  This completes our proof.
\end{proof}

\subsubsection{How to Set Parameters in Strongly-convex-strongly-concave (SC-SC) Setting?}\label{subsec:appendix-proof-sc-sc-parameters}
\begin{lemma}\label{lemma:appendix-scsc-stepsize-1}
For Algorithm~\ref{Algorithm:ScaffoldAPD}, if we set the parameters as 
\begin{equation}
        \mu_{\bx}\tau = O\left(\frac{1-\theta}{\theta}\right),\quad \mu_{\blambda}\sigma = O\left(\frac{1-\theta}{\theta}\right),\quad \frac{1}{1-\theta} = O\left(\frac{L_{\bx\bx}}{\mu_{\bx}} + \sqrt{\frac{L_{\blambda\bx}^{2}}{\mu_{\bx}\mu_{\blambda}}}\right),
\end{equation}
then we have
\begin{equation}\label{eq:lemma:appendix-scsc-stepsize-eq1}
    \frac{1}{2\tau} + \frac{\mu_{\bx}}{8} \geq \frac{1}{2\tau\theta}, \quad \frac{1}{2\sigma} + \frac{\mu_{\blambda}}{2} \geq \frac{1}{2\sigma\theta}, \quad \frac{1}{\tau} \geq 12L_{\bx\bx} + 2\pi \theta L_{\blambda\bx}, \quad \frac{1}{\sigma} \geq \frac{\theta L_{\blambda\bx}}{\pi}.
\end{equation}
\end{lemma}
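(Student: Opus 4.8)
The plan is to derive the four inequalities in Eq.~\eqref{eq:lemma:appendix-scsc-stepsize-eq1} as elementary algebra from the stated scaling, reading each ``$O(\cdot)$'' in the hypothesis as a $\Theta(\cdot)$ so that the corresponding products are pinned both above and below by fixed multiples. The only genuinely nontrivial ingredient is the choice of the free parameter $\pi>0$ that entered $\Delta_r$ via the Young-type splitting in Eq.~\eqref{eq:apd-bound-q-sc-sc}; everything else is bookkeeping. I would flag at the very start that $\theta$ is bounded away from $0$ (say $\theta\in(\tfrac12,1)$, legitimate because $1/(1-\theta)$ is forced to be large) so that stray factors of $1/\theta$ can be absorbed into constants.

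First I would dispose of inequalities one and two, which are ``lower bound'' constraints. Subtracting $\tfrac{1}{2\tau}$ from both sides of the first and using $\tfrac1\theta-1=\tfrac{1-\theta}{\theta}$ turns it into $\mu_{\bx}\tau\ge \tfrac{4(1-\theta)}{\theta}$; identically, the second is equivalent to $\mu_{\blambda}\sigma\ge\tfrac{1-\theta}{\theta}$. Both are implied by the first two hypotheses once their hidden constants are taken $\ge 4$ and $\ge 1$ respectively. Next, inequalities three and four are ``upper bound'' constraints on $\tau$ and $\sigma$; since $\theta<1$ it suffices to establish $\tfrac1\tau\ge 12L_{\bx\bx}+2\pi L_{\blambda\bx}$ and $\tfrac1\sigma\ge \tfrac{L_{\blambda\bx}}{\pi}$. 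Feeding in $\tau\ge \tfrac{4(1-\theta)}{\theta\mu_{\bx}}$ and $\sigma\ge \tfrac{1-\theta}{\theta\mu_{\blambda}}$ from the previous step, the two sets of constraints are mutually consistent precisely when
\[
\frac{1}{1-\theta}\ \gtrsim\ \frac{L_{\bx\bx}}{\mu_{\bx}}+\frac{\pi L_{\blambda\bx}}{\mu_{\bx}}
\qquad\text{and}\qquad
\frac{1}{1-\theta}\ \gtrsim\ \frac{L_{\blambda\bx}}{\pi\mu_{\blambda}}.
\]

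This is where $\pi$ is chosen: taking $\pi=\sqrt{\mu_{\bx}/\mu_{\blambda}}$ makes $\tfrac{\pi L_{\blambda\bx}}{\mu_{\bx}}=\tfrac{L_{\blambda\bx}}{\pi\mu_{\blambda}}=\sqrt{L_{\blambda\bx}^{2}/(\mu_{\bx}\mu_{\blambda})}$, so the two displayed requirements collapse into the single condition $\tfrac{1}{1-\theta}=\Omega\!\big(\tfrac{L_{\bx\bx}}{\mu_{\bx}}+\sqrt{L_{\blambda\bx}^{2}/(\mu_{\bx}\mu_{\blambda})}\big)$ --- exactly the third hypothesis. Hence, with $\pi=\sqrt{\mu_{\bx}/\mu_{\blambda}}$ fixed, choosing $\tau$ and $\sigma$ (equivalently the constants in $\mu_{\bx}\tau,\mu_{\blambda}\sigma\asymp (1-\theta)/\theta$) anywhere in the resulting nonempty window makes all four inequalities hold simultaneously. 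The main obstacle is not conceptual but the joint calibration of constants: one must pick the $O(\cdot)$ constants in the three hypotheses so that the lower bounds from the first two inequalities and the upper bounds from the last two leave a nonempty interval for $\tau$ and $\sigma$, with enough slack to absorb the numeric factors ($12$, $4$, the $2\pi$) and the factor $1/\theta$; I would state this calibration explicitly rather than hide it inside the $O(\cdot)$ notation.
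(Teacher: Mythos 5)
Your proposal is correct and follows essentially the same route as the paper: rewrite the first two inequalities as $\mu_{\bx}\tau\ge 4\tfrac{1-\theta}{\theta}$ and $\mu_{\blambda}\sigma\ge\tfrac{1-\theta}{\theta}$, treat the last two as upper bounds on $\tau,\sigma$, and choose the Young parameter $\pi$ so that the resulting window is nonempty exactly when $\tfrac{1}{1-\theta}\gtrsim \tfrac{L_{\bx\bx}}{\mu_{\bx}}+\sqrt{L_{\blambda\bx}^{2}/(\mu_{\bx}\mu_{\blambda})}$ (your observation that the stated $O(\cdot)$'s must be read as $\Theta(\cdot)$'s matches what the paper's own proof implicitly does). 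The only cosmetic difference is that you fix $\pi=\sqrt{\mu_{\bx}/\mu_{\blambda}}$ up front to balance the two constraints, whereas the paper sets $\pi=\theta\sigma L_{\blambda\bx}/c$ and then solves a quadratic in $\omega=1/(1-\theta)$; both yield the same threshold.
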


\begin{proof}
The conditions in Eq.~\eqref{eq:lemma:appendix-scsc-stepsize-eq1} can be reformulated as follows,
\begin{equation}\label{eq:apd-step-size-condition-sc-sc}
\begin{aligned}
    \frac{1}{2\tau} + \frac{\mu_{\bx}}{8} \geq \frac{1}{2\tau\theta} \quad &\Leftrightarrow \quad \mu_{\bx}\tau \geq 4\frac{1-\theta}{\theta}, \\ 
    \frac{1}{2\sigma} + \frac{\mu_{\blambda}}{2} \geq \frac{1}{2\sigma\theta} \quad &\Leftrightarrow \quad \mu_{\blambda}\sigma \geq \frac{1-\theta}{\theta}, \\ 
    \frac{1}{\tau} \geq 12L_{\bx\bx} + 2\pi \theta L_{\blambda\bx} \quad &\Leftarrow  \quad \frac{1}{\tau} \geq 12L_{\bx\bx} + 2\pi L_{\blambda\bx}, \\ 
    \frac{1}{\sigma} \geq \frac{\theta L_{\blambda\bx}}{\pi} \quad &\Leftarrow  \quad \frac{c}{\sigma} \geq \frac{\theta L_{\blambda\bx}}{\pi}, 
\end{aligned}
\end{equation}
where $c \in (0, 1]$. 

Next we study how to set $\{\tau, \sigma, \theta\}$ such that Eq.~\eqref{eq:apd-step-size-condition-sc-sc} holds, we could set 
\begin{equation}\label{eq:apd-step-size-condition-sc-sc-2}
\begin{aligned}
    \tau &\geq \frac{4}{\mu_{\bx}}\frac{1-\theta}{\theta},\quad
    \sigma \geq \frac{1}{\mu_{\blambda}}\frac{1-\theta}{\theta}, \\
    \pi &= \frac{\theta \sigma L_{\blambda\bx}}{c}, \\
    \frac{\mu_{\bx} \theta}{4(1-\theta)} - 12 L_{\bx\bx} &\geq \frac{2\theta \sigma L_{\blambda\bx}^{2}}{c}\geq (1-\theta)\frac{2  L_{\blambda\bx}^{2}}{c\mu_{\blambda}},
\end{aligned}
\end{equation}
therefore, once $\theta$ satisfy the following condition
\begin{equation}\label{eq:apd-step-size-condition-sc-sc-3}
    \frac{\mu_{\bx} \theta}{4(1-\theta)} - 12 L_{\bx\bx} \geq (1-\theta)\frac{2  L_{\blambda\bx}^{2}}{c\mu_{\blambda}},
\end{equation}
and then we can set $\tau$ and $\sigma$ based on the value of $\theta$ according to Eq.~\eqref{eq:apd-step-size-condition-sc-sc-2}. 
Then if we let 
\begin{equation}
    \omega = \frac{1}{1-\theta},
\end{equation}
therefore, based on Eq.~\eqref{eq:apd-step-size-condition-sc-sc-3}, by setting $c=1$, we have
\begin{equation}\label{eq:appendix-omega-sc-sc}
\begin{aligned}
    &\quad \frac{\omega-1}{\omega} \frac{\omega \mu_{\bx}}{4} - 12 L_{\bx\bx} \geq \frac{1}{\omega} \frac{2  L_{\blambda\bx}^{2}}{\mu_{\blambda}},\\
\Leftrightarrow&\quad    
\mu_{\bx}\mu_{\blambda}\omega^{2} - (\mu_{\bx}\mu_{\blambda} + 48\mu_{\blambda}L_{\bx\bx})\omega  - 8 L_{\blambda\bx}^{2}\geq 0, \\
\Leftarrow&\quad \omega= C_{\omega}\frac{(\mu_{\bx}\mu_{\blambda} + 48 \mu_{\blambda}L_{\bx\bx}) + \sqrt{(\mu_{\bx}\mu_{\blambda} + 48 \mu_{\blambda}L_{\bx\bx})^{2} + 32 \mu_{\bx}\mu_{\blambda}L_{\blambda\bx}^{2}}}{2\mu_{\bx}\mu_{\blambda}},\\
\Leftrightarrow&\quad \omega= C_{\omega}\left(\frac{1}{2} + \frac{24 L_{\bx\bx}}{\mu_{\bx}} + \sqrt{\left(\frac{1}{2} + \frac{24 L_{\bx\bx}}{\mu_{\bx}}\right)^{2} + \frac{16 L_{\blambda\bx}^{2}}{\mu_{\bx}\mu_{\blambda}}}\right), \\
\Leftrightarrow&\quad \omega= O\left(\frac{L_{\bx\bx}}{\mu_{\bx}} + \sqrt{\frac{L_{\blambda\bx}^{2}}{\mu_{\bx}\mu_{\blambda}}}\right),
\end{aligned}
\end{equation}
where $C_{\omega}\geq 1$ is a constant. This completes our proof.

\end{proof}

\subsubsection{Convergence Analysis}\label{subsec:appendix-proof-sc-sc-theorem-proof}

\begin{theorem}
Under the assumptions in Theorem~\ref{thm:main-sc-sc}, Algorithm~\ref{Algorithm:ScaffoldAPD} will converge to $\bx^{\star}$, and
\begin{equation}
\E\left[\|\bx^{r} - \bx^{\star}\|^{2}\right] \leq C_1\theta^{R}\left[\|\bx^{0} - \bx^{\star}\|^{2} + \| \blambda^{0} - \blambda^{\star}\|^{2}\right]+ C_2(1-\theta)\frac{\zeta^{2}}{\mu_{\bx}^{2}},
\end{equation}
where $C_1, C_2 \geq 0$ are non-negative constants.
\end{theorem}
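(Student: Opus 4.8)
The plan is to turn the one-step estimate of Lemma~\ref{lemma:sc-sc-stochastic} into a geometric recursion for the potential $V_r$ and then read off $\|\bx^R-\bx^\star\|^2$ from it. \textbf{Step 1 (instantiate at the saddle point).} Apply inequality \eqref{eq:lemma-sc-sc-0} with $(\bx,\blambda)=(\bx^\star,\blambda^\star)$. Since $(\bx^\star,\blambda^\star)$ is a saddle point of $F$, the duality gap is nonnegative, $\E\!\left[F(\bx^{r+1},\blambda^\star)-F(\bx^\star,\blambda^{r+1})\right]\geq 0$, so the lemma gives $Z_{r+1}\leq V_r+\Delta_r+C\tau\zeta^2$. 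The client-drift contribution is already folded into the constant $C$ inside Lemma~\ref{lemma:sc-sc-stochastic} (via the bounded-drift Lemma~\ref{Lemma:drift-lemma-deterministic}), so nothing further is needed there. Under Condition~\ref{condition:stepsize-scsc}, Lemma~\ref{lemma:appendix-scsc-stepsize-1} gives $\frac1\tau\geq 12L_{\bx\bx}+2\pi\theta L_{\blambda\bx}$ and $\frac1\sigma\geq \theta L_{\blambda\bx}/\pi$, which makes both quadratic coefficients of $\Delta_r$ nonpositive; hence $\Delta_r\leq 0$ and $Z_{r+1}\leq V_r+C\tau\zeta^2$.

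\textbf{Step 2 (the contraction).} The core observation is $V_{r+1}\leq\theta\,Z_{r+1}$: the cross term $\theta\langle\bq^{r+1},\blambda^{r+1}-\blambda^\star\rangle$ appears identically in $V_{r+1}$ and in $\theta Z_{r+1}$, while the remaining quadratics compare coefficient by coefficient under the first two inequalities of Lemma~\ref{lemma:appendix-scsc-stepsize-1}, namely $\tfrac{1}{2\tau}+\tfrac{\mu_{\bx}}{8}\geq\tfrac{1}{2\tau\theta}$ and $\tfrac{1}{2\sigma}+\tfrac{\mu_{\blambda}}{2}\geq\tfrac{1}{2\sigma\theta}$. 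Chaining with Step~1, $V_{r+1}\leq\theta Z_{r+1}\leq\theta\big(V_r+C\tau\zeta^2\big)$. Unrolling from $r=0$ to $R-1$ and summing the geometric series gives $V_R\leq\theta^R V_0+\frac{C\tau\zeta^2}{1-\theta}$. With the natural initialization $\bx^{-1}=\bx^0$ (so $\bq^0=0$), $V_0=\frac{1}{2\sigma}\|\blambda^0-\blambda^\star\|^2+\frac{1}{2\tau}\|\bx^0-\bx^\star\|^2$.

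\textbf{Step 3 (extract $\|\bx^R-\bx^\star\|^2$).} This is the only genuinely delicate point, since $V_R$ still carries the sign-indefinite term $\theta\langle\bq^R,\blambda^R-\blambda^\star\rangle$. Using $\|\bq^R\|\leq L_{\blambda\bx}\|\bx^R-\bx^{R-1}\|$ (Assumption~\ref{assumption:L_yx-smooth} plus linearity of $\Phi$ in $\blambda$) and Young's inequality with weight tuned to absorb $\frac{1}{2\sigma}\|\blambda^R-\blambda^\star\|^2$, one gets $V_R\geq\frac{1}{2\tau}\|\bx^R-\bx^\star\|^2-\frac{\sigma\theta^2L_{\blambda\bx}^2}{2}\|\bx^R-\bx^{R-1}\|^2$. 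To bound $\|\bx^R-\bx^{R-1}\|^2$, revisit Step~1 at round $R-1$: bounding the cross term of $Z_R$ the same way shows $Z_R\geq-\frac{\sigma L_{\blambda\bx}^2}{2}\|\bx^R-\bx^{R-1}\|^2$, and combining this with $0\leq -Z_R+V_{R-1}+\Delta_{R-1}+C\tau\zeta^2$ together with a constant-factor strengthening of the $\Delta_r\leq0$ conditions (keeping the $\|\bx^{r+1}-\bx^r\|^2$ coefficient $\leq-\tfrac{1}{8\tau}$, still an $O(\cdot)$ requirement) yields $\|\bx^R-\bx^{R-1}\|^2\leq 8\tau\big(V_{R-1}+C\tau\zeta^2\big)\leq 8\tau\big(\theta^{R-1}V_0+\tfrac{C\tau\zeta^2}{1-\theta}\big)$. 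Substituting back, $\frac{1}{2\tau}\|\bx^R-\bx^\star\|^2\leq V_R+O(\sigma\tau L_{\blambda\bx}^2)\big(\theta^{R-1}V_0+\tfrac{\tau\zeta^2}{1-\theta}\big)$, and the prefactor $\sigma\tau L_{\blambda\bx}^2$ is $O(1)$ because Condition~\ref{condition:stepsize-scsc} forces $\tfrac{1}{1-\theta}$ to be at least a constant multiple of $\sqrt{L_{\blambda\bx}^2/(\mu_{\bx}\mu_{\blambda})}$ while $\sigma\tau=O\big((1-\theta)^2/(\theta^2\mu_{\bx}\mu_{\blambda})\big)$.

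\textbf{Step 4 (rescale and conclude).} Multiply through by $\tau$ and use $\mu_{\bx}\tau=O\big(\tfrac{1-\theta}{\theta}\big)$, $\mu_{\blambda}\sigma=O\big(\tfrac{1-\theta}{\theta}\big)$ (hence $\tau/\sigma=O(\mu_{\blambda}/\mu_{\bx})$): then $\tau V_0=O\big(\|\bx^0-\bx^\star\|^2+\|\blambda^0-\blambda^\star\|^2\big)$ and $\tfrac{\tau^2\zeta^2}{1-\theta}=O\big(\tfrac{(1-\theta)\zeta^2}{\mu_{\bx}^2}\big)$, which gives $\E[\|\bx^R-\bx^\star\|^2]\leq C_1\theta^R\big[\|\bx^0-\bx^\star\|^2+\|\blambda^0-\blambda^\star\|^2\big]+C_2(1-\theta)\tfrac{\zeta^2}{\mu_{\bx}^2}$. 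I expect the real obstacle to be Step~3: exhibiting the contraction cleanly (checking that the $\bq$-terms cancel exactly between $V_{r+1}$ and $\theta Z_{r+1}$) and then peeling $\|\bx^R-\bx^\star\|^2$ off $V_R$ without degrading the geometric rate, which forces the auxiliary bound on $\|\bx^R-\bx^{R-1}\|^2$ drawn from the negative quadratic slack in $\Delta_{R-1}$; the rest is routine algebra and constant tracking already carried out in Lemma~\ref{lemma:appendix-scsc-stepsize-1} and the parallel strongly-convex-concave analysis.
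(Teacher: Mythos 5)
Your proof follows the same route as the paper's: instantiate the one-step estimate of Lemma~\ref{lemma:sc-sc-stochastic} at the saddle point, use the step-size conditions of Lemma~\ref{lemma:appendix-scsc-stepsize-1} to make $\Delta_r\le 0$ and to get $V_{r+1}\le\theta Z_{r+1}$, chain these into the contraction $V_{r+1}\le\theta V_r+C\theta\tau\zeta^2$, unroll the geometric series, and rescale using $\mu_{\bx}\tau=O\bigl(\tfrac{1-\theta}{\theta}\bigr)$. Your Step~3 is in fact more careful than the paper, which simply asserts that the parameters can be chosen so that $\E\bigl[\|\bx^{R}-\bx^{\star}\|^{2}\bigr]\le 4\tau\theta^{R}V_0+\tfrac{C\tau^{2}\theta\zeta^{2}}{1-\theta}$ without explicitly handling the sign-indefinite cross term $\theta\langle\bq^{R},\blambda^{R}-\blambda^{\star}\rangle$ inside $V_R$; your Young-inequality absorption together with the auxiliary bound on $\|\bx^{R}-\bx^{R-1}\|^{2}$ drawn from the negative slack in $\Delta_{R-1}$ supplies exactly the justification the paper leaves implicit.
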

\begin{proof}
The last two conditions in Eq.~\eqref{eq:lemma:appendix-scsc-stepsize-eq1} ensure 
\begin{equation*}
    \Delta_r = \E\left[\left(\frac{\theta L_{\blambda\bx}}{2\pi} - \frac{1}{2\sigma}\right)\|\blambda^{r+1}-\blambda^{r}\|^{2} + \left(\frac{\pi \theta L_{\blambda\bx}}{2} + 3L_{\bx\bx} - \frac{1}{4\tau}\right)\|\bx^{r+1}-\bx^{r}\|^{2}\right] \leq 0,
\end{equation*}
for $r=0,\dots, R$. The first two conditions in Eq.~\eqref{eq:lemma:appendix-scsc-stepsize-eq1} ensure 
\begin{equation*}
\begin{aligned}
Z_{r+1} \geq \frac{1}{\theta} V_{r+1}.
\end{aligned}
\end{equation*}    
Therefore, by applying Lemma~\ref{lemma:sc-sc-stochastic}, we have
\begin{equation}
    \E\left[{F(\bx^{r+1}, \blambda)} -  {F(\bx, \blambda^{r+1})}\right] + \frac{1}{\theta}V_{r+1} \leq  V_r + \Delta_r + C\tau\zeta^{2},
\end{equation}

Then we plug $\bx=\bx^{\star}, \blambda=\blambda^{\star}$ in Eq.~\eqref{eq:lemma-sc-sc-0}, and we have ${F(\bx^{r+1}, \blambda^{\star})} -  {F(\bx^{\star}, \blambda^{r+1})} \geq 0$, 
then we have
\begin{equation}
    V_{r+1} \leq \theta V_r + \theta\Delta_{r} + C\theta\tau\zeta^{2} ,
\end{equation}
therefore, we can derive that
\begin{equation}
    V_{R} \leq {\theta}^{R} V_0 + \theta\Delta_{R} + \frac{C\tau\theta\zeta^{2}}{1-{\theta}},
\end{equation}
meanwhile, we can set the parameters $\{\tau, \sigma, \theta\}$ (according to Eq.~\eqref{eq:apd-step-size-condition-sc-sc}) such that
\begin{equation}
    \E\left[\|\bx^{r} - \bx\|^{2}\right] \leq 4\tau\theta^{R}V_0 + \frac{C\tau^{2}\theta\zeta^{2}}{1-{\theta}}
\end{equation}
since $\tau = 2(1-\theta)/(\theta \mu_{\bx})$,
\begin{equation}
    \E\left[\|\bx^{r} - \bx\|^{2}\right] \leq 4\tau\theta^{R}V_0 + \frac{4C(1-\theta)\zeta^{2}}{\theta\mu_{\bx}^{2}}
\end{equation}

We need to run at least $N_\varepsilon$ rounds such that $4\tau\theta^{R}V_0 + \frac{4C(1-\theta)\zeta^{2}}{\theta\mu_{\bx}^{2}} \leq 2\varepsilon$. 

Suppose $N_\varepsilon$ satisfies
\begin{equation}
    N_\varepsilon = O\left(\ln\left(\frac{V_0}{\varepsilon}\right)\Big/\ln\left(\frac{1}{\theta}\right)\right),
\end{equation}
then we have $4\tau\theta^{R}V_0  \leq \varepsilon$. 
Because $\ln(1/\theta)$ is convex in $\theta\in\mathbb{R}_{+}$, then we have 
\begin{equation}
    \ln\left(\frac{1}{\theta}\right) \leq \frac{1}{1 - \theta}, \quad \theta \in (0, 1),
\end{equation}
therefore, to get an upper bound for $N_\varepsilon$, we only need to get the upper bound for $\frac{1}{1 - \theta}$. Then if we set $\omega = \frac{1}{1-\theta}$,
then based on Eq.~\eqref{eq:appendix-omega-sc-sc}, 
we have
\begin{equation}
\begin{aligned}
\omega= O\left(\frac{L_{\bx\bx}}{\mu_{\bx}} + \sqrt{\frac{L_{\blambda\bx}^{2}}{\mu_{\bx}\mu_{\blambda}}}\right),
\end{aligned}
\end{equation}
meanwhile, we need to ensure $\frac{4C(1-\theta)\zeta^{2}}{\theta\mu_{\bx}^{2}}$ is small, i.e.,
\begin{equation}
    \frac{4C(1-\theta)\zeta^{2}}{\theta\mu_{\bx}^{2}} = \varepsilon \quad \Leftrightarrow \quad \frac{1}{1-\theta} = \frac{4C\zeta^{2}}{\theta\mu_{\bx}^{2}\varepsilon}.
\end{equation}

therefore, in ensure $\E\left[\|\bx^{r} - \bx\|^{2}\right] \leq 2\varepsilon$, the number of communication rounds satisfies
\begin{equation}
    N_\varepsilon = \widetilde{O}\left(\frac{L_{\bx\bx}}{\mu_{\bx}} + \sqrt{\frac{L_{\blambda\bx}^{2}}{\mu_{\bx}\mu_{\blambda}}} + \frac{\zeta^{2}}{\mu_{\bx}^{2} \varepsilon}\right),
\end{equation}
which completes our proof.
\end{proof}

\newpage
\section{Additional Implementation Details and Experimental Results}\label{sec:appendix-exp}
In this section, we provide further details for algorithm implementations (Section~\ref{subsec:appendix-exp-details}) as well as additional experimental results -- trade-off between worst-20\% and average accuracy (Section~\ref{subsec:appendix-additional-exp}), convergence performance on synthetic datasets (Section~\ref{subsec:appendix-exp-synthetic}), and comparison with existing methods (Section~\ref{subsec:appendix-exp-comparison}).

\subsection{Additional Experimental Details}\label{subsec:appendix-exp-details}
In order to enhance the performance of baseline methods, we incorporate local steps into the AFL~\citep{mohri2019agnostic} method. We find that employing local steps yields significantly better performance compared to taking a single gradient step. 
To ensure a fair comparison, we employ identical feature extraction procedures across all methods. Following the setup outlined in \citet{yu2022tct}, we first compute the empirical neural tangent kernel (eNTK) representations of the input samples. Then, we randomly select 50,000 features from the eNTK representation through subsampling. 
For the (local) objective function, we utilize the mean squared error (MSE) loss, which has been used for classification tasks as described in \citet{yu2022tct}. To calculate the average accuracy, we begin by computing the test accuracy of each client. Then, we compute the average accuracy by averaging the results from all clients.

\subsection{Trade-off between Worst-20\% Accuracy and Average Accuracy}\label{subsec:appendix-additional-exp}
We present the trade-off between worst-20\% accuracy and average/best-20\% accuracy through a scatter plot, as illustrated in Figure~\ref{fig:exp-real-scatter}.
We consider the TinyImageNet dataset with the Non-i.i.d. degree parameter $\alpha=0.01$. 
Our proposed algorithm, as illustrated in Figure~\ref{fig:exp-real-scatter}, showcases a compelling trade-off between accuracy in the worst-20\% and the average/best-20\% scenarios.

\begin{figure}[ht]
     \centering
     \begin{subfigure}[b]{0.47\textwidth}
         \centering
    \includegraphics[width=\textwidth]{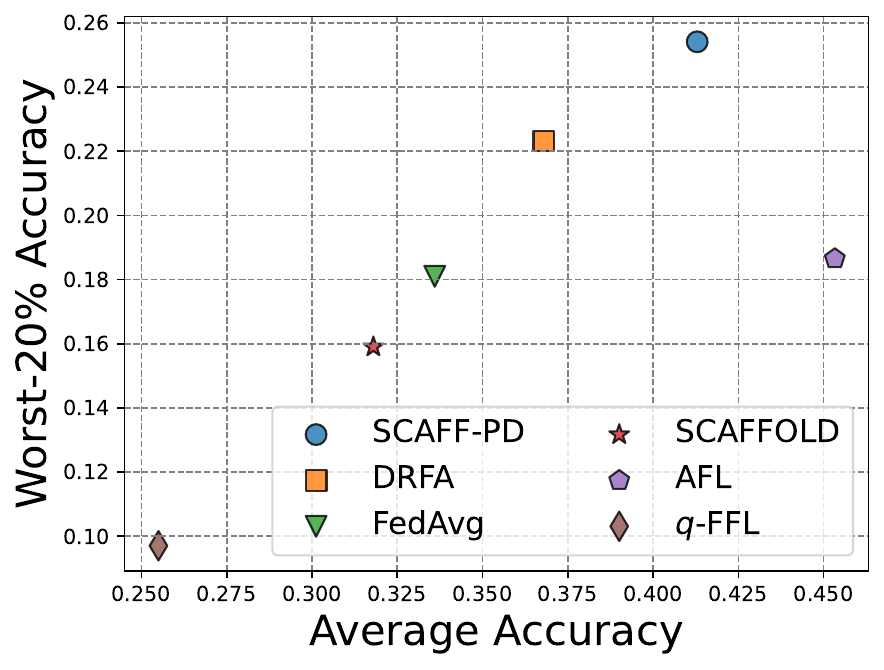}
         \caption{Worst-20\%  v.s. Average accuracy.}
     \end{subfigure}
     \begin{subfigure}[b]{0.47\textwidth}
         \centering
    \includegraphics[width=\textwidth]{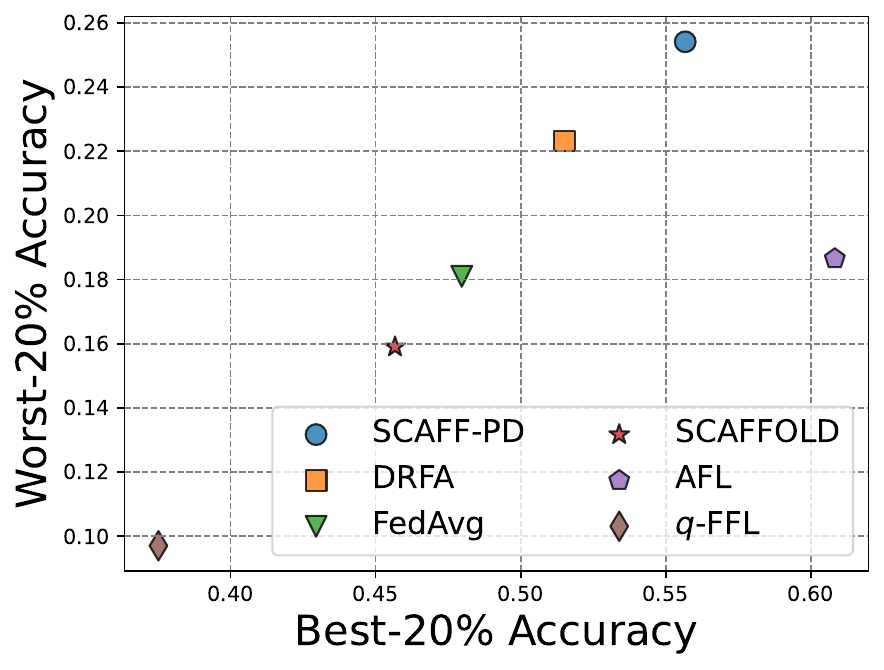}
         \caption{Worst-20\%  v.s. Best-20\% accuracy.}
     \end{subfigure}
        \caption{
        Compare the average/worst-20\%/best-20\% accuracy of different algorithms on TinyImageNet with $\alpha=0.01$.
        }
        \label{fig:exp-real-scatter}
        \vspace{-0.1in}
\end{figure}

\newpage
\subsection{Additional Experiments on Synthetic Datasets}\label{subsec:appendix-exp-synthetic}
We vary the level of data heterogeneity by changing the parameter $\sigma$ from 0.01 to 0.1, where $\sigma$ is used for generating $\delta_{i}^{\bx}$ ($\delta_{i}^{\bx} \sim \mathcal{N}(\bm{0}, \sigma^{2}\bm{I}_{d\times d})$). 
Figure~\ref{fig:appendix-exp-synthetic-heterogeneity} illustrates the fast convergence of {\algname} to the optimal solution across various data heterogeneity settings. 
We also explore the effect of varying the number of local steps. Figure~\ref{fig:appendix-exp-synthetic-local-steps} demonstrates that increasing the number of local steps results in faster convergence towards the optimal solution.

\begin{figure}[ht]
     \centering
     \begin{subfigure}[b]{0.47\textwidth}
         \centering
    \includegraphics[width=\textwidth]{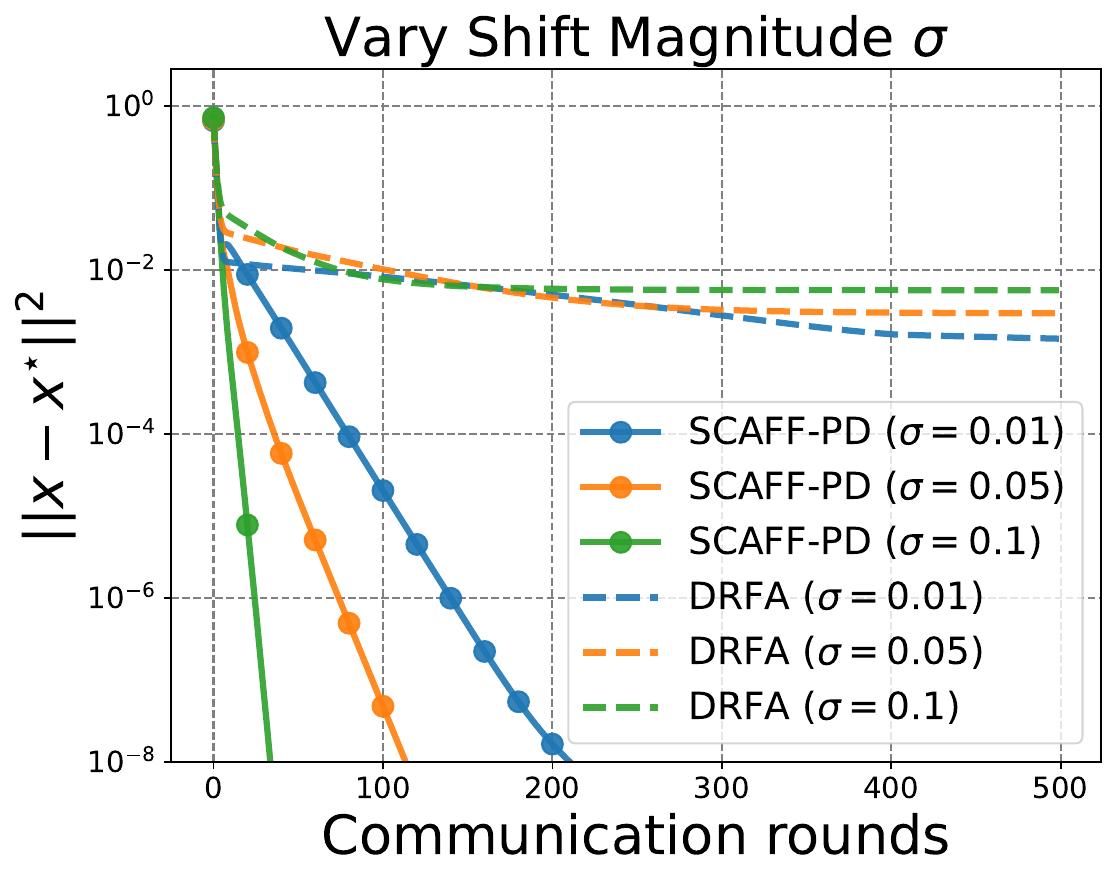}
         \caption{Vary the level of data heterogeneity.}
         \label{fig:appendix-exp-synthetic-heterogeneity}
     \end{subfigure}
     \begin{subfigure}[b]{0.47\textwidth}
         \centering
    \includegraphics[width=\textwidth]{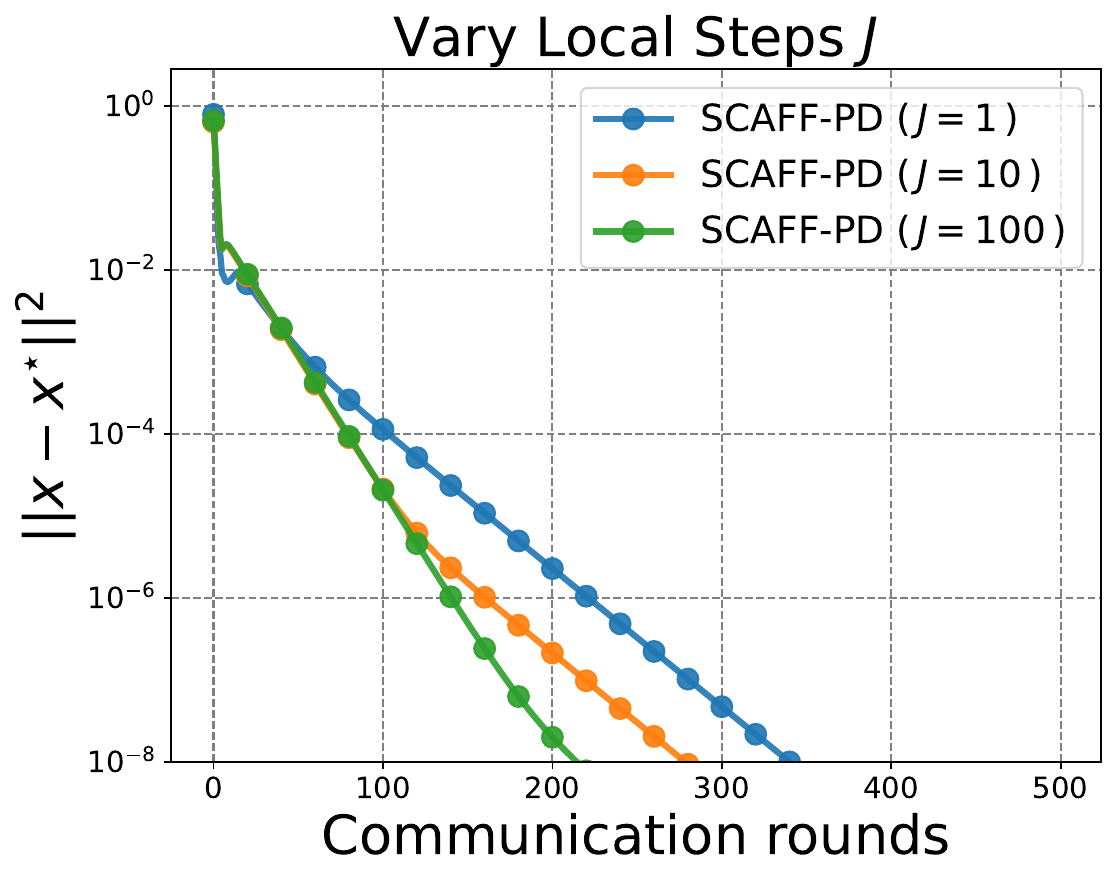}
         \caption{Effect of local steps.}
         \label{fig:appendix-exp-synthetic-local-steps}
     \end{subfigure}
        \caption{
        (\textbf{left}) Compare {\algname} and DRFA under different levels of data heterogeneity. 
        (\textbf{right}) Study the effect of local steps for our proposed algorithm on the synthetic dataset.
        }
        \label{fig:appendix-exp-synthetic}
        \vspace{-0.1in}
\end{figure}

\subsection{Additional Experiments on Comparison with Existing Methods}\label{subsec:appendix-exp-comparison}
\paragraph{More clients.} 
On the CIFAR100 dataset, we conduct a comparison of different algorithms in the 50 clients setting, following the configuration outlined in Table~\ref{tab:main_table}. 
The summarized results are presented in Table~\ref{tab:appendix-table-more-clients}. 
Consistent with our previous findings, {\algname} exhibits superior robustness when compared to existing methods.

\begin{table*}[ht]
	\centering
	\caption{The average and worst-20\% top-1 accuracy of our algorithm ({\algname}) vs.\ state-of-the-art federated learning algorithms evaluated on CIFAR100 with 50 clients. 
 The highest top-1 accuracy in each setting is highlighted in \textbf{bold}.
	}
	\label{tab:appendix-table-more-clients}
	    \footnotesize
     \resizebox{0.5\textwidth}{!}{
		\begin{tabular}{cccccc}
			\toprule
			\multirow{1}{*}{\bf Datasets}  &   \multirow{1}{*}{\bf Methods}                 & \multicolumn{2}{c}{\bf Non-i.i.d. degree}   
			  \\
			  \midrule
			 &                   & \multicolumn{2}{c}{$\alpha=0.01$}    
			  \\
		\midrule
			\multirow{11}{*}{CIFAR-100}   &  &    
			\texttt{average} & \texttt{worst-20\%} 
			\\ 
			\cmidrule(lr){1-4} 
			   & {FedAvg} & 45.45  &  20.64  \\ 
			\tablewhitespace
			   & {SCAFFOLD} & 43.73 & 18.33  \\
			  \tablewhitespace
			   & {$q$-FFL} &  33.42 & 8.13    \\
			  \tablewhitespace
			   & {AFL} & 49.93  &  31.87     \\
			  \tablewhitespace
			   & {DRFA} & \textbf{51.07}  &  31.23  \\
			  \tablewhitespace
                 & \textit{SCAFF-PD}  & {50.43} & \textbf{33.03}     \\
                 \bottomrule
		\end{tabular}
    }
	\vspace{-.75em}
\end{table*}

\paragraph{Additional dataset.} 
We consider another dataset -- CIFAR10 dataset, the setup mostly follows the configuration outlined in Table~\ref{tab:main_table}. 
We  summarize the results  in Table~\ref{tab:appendix-table-more-dataset}. 
We observe that {\algname} outperforms existing methods.

\begin{table*}[ht]
	\centering
	\caption{The average and worst-20\% top-1 accuracy of our algorithm ({\algname}) vs.\ state-of-the-art federated learning algorithms evaluated on CIFAR10 with 20 clients and $\alpha=0.05$. 
 The highest top-1 accuracy in each setting is highlighted in \textbf{bold}.
	}
	\label{tab:appendix-table-more-dataset}
	    \footnotesize
     \resizebox{0.5\textwidth}{!}{
		\begin{tabular}{cccccc}
			\toprule
			\multirow{1}{*}{\bf Datasets}  &   \multirow{1}{*}{\bf Methods}                 & \multicolumn{2}{c}{\bf Non-i.i.d. degree}   
			  \\
			  \midrule
			 &                   & \multicolumn{2}{c}{$\alpha=0.05$}    
			  \\
		\midrule
			\multirow{11}{*}{CIFAR-100}   &  &    
			\texttt{average} & \texttt{worst-20\%} 
			\\ 
			\cmidrule(lr){1-4} 
			   & {FedAvg} & 77.42  &  60.63  \\ 
			\tablewhitespace
			   & {SCAFFOLD} & 77.75 & 62.89  \\
			  \tablewhitespace
			   & {$q$-FFL} &  68.52 & 41.26    \\
			  \tablewhitespace
			   & {AFL} & 78.89  &  65.07     \\
			  \tablewhitespace
			   & {DRFA} & {79.04}  &  65.02  \\
			  \tablewhitespace
                 & \textit{SCAFF-PD}  & \textbf{79.71} & \textbf{69.59}     \\
                 \bottomrule
		\end{tabular}
    }
	\vspace{-.75em}
\end{table*}

\paragraph{Additional baselines.} 
In addition to the baseline methods listed in Table~\ref{tab:main_table}, we include $\Delta$-FL~\citep{pillutla2021federated} and FedProx~\citep{li2020federated} in our evaluation. 
We adopt a similar setup as presented in Table~\ref{tab:main_table} to assess the performance of these two methods. 
The summarized results are presented in Table~\ref{tab:appendix-table-more-baseline}, indicating that our proposed algorithm surpasses both $\Delta$-FL and FedProx in terms of worst-20\% accuracy and average accuracy.

\begin{table*}[ht]
	\centering
	\caption{The average and worst-20\% top-1 accuracy of our algorithm ({\algname}) vs.\ state-of-the-art federated learning algorithms evaluated on CIFAR100 and Tiny-ImageNet. 
 The highest top-1 accuracy in each setting is highlighted in \textbf{bold}.
	}
	\label{tab:appendix-table-more-baseline}
	    \footnotesize
     \resizebox{0.95\textwidth}{!}{
		\begin{tabular}{cccc|cc|ccc}
			\toprule
			\multirow{1}{*}{\bf Datasets}  &   \multirow{1}{*}{\bf Methods}                 & \multicolumn{6}{c}{\bf Non-i.i.d. degree}   
			  \\
			  \midrule
			 &                   & \multicolumn{2}{c}{$\alpha=0.01$}    & \multicolumn{2}{c}{$\alpha=0.05$}    & \multicolumn{2}{c}{$\alpha=0.1$}   
			  \\
		\midrule
			\multirow{6.2}{*}{CIFAR-100}   &  &    
			\texttt{average} & \texttt{worst-20\%} & \texttt{average} & \texttt{worst-20\%} &\texttt{average} & \texttt{worst-20\%} & 
			\\ 
			\cmidrule(lr){1-9} 
			   & {FedProx} & 38.76 & 15.58 & 35.91 & 24.57 & 36.49 & 26.45    \\ 
			\tablewhitespace
			   & {$\Delta$-FL} & 30.09 & 7.26 & 33.18 & 15.82 & 31.69 & 16.63     \\ 
			  \tablewhitespace
                 & \textit{SCAFF-PD}  & \textbf{49.03} & \textbf{29.30} & \textbf{42.06} & \textbf{28.37} & \textbf{43.69} & \textbf{32.77}     \\
                \midrule
   \multirow{6.2}{*}{TinyImageNet}   &  &    
			\texttt{average} & \texttt{worst-20\%} & \texttt{average} & \texttt{worst-20\%} &\texttt{average} & \texttt{worst-20\%} & 
			\\ 
			\cmidrule(lr){1-9} 
			   & {FedProx} & 33.65 & 18.09 & 31.52 & 23.62 & 34.98 & 27.59     \\ 
			\tablewhitespace
			   & {$\Delta$-FL} & 29.06 & 11.94 & 36.77 & 22.24 & 36.47 & 20.13     \\ 
			  \tablewhitespace
                 & \textit{SCAFF-PD} & \textbf{41.26} & \textbf{25.32}    & \textbf{39.32} & \textbf{30.27}  & \textbf{41.23} &  \textbf{29.78}  \\
			\bottomrule
		\end{tabular}
    }
	\vspace{-.75em}
\end{table*}

\end{document}